\documentclass{article}
\usepackage{ijcai26}

\usepackage{times}
\usepackage{soul}
\usepackage{url}
\usepackage[hidelinks]{hyperref}
\usepackage[utf8]{inputenc}
\usepackage[small]{caption}
\usepackage{graphicx}
\usepackage{amsmath}
\usepackage{amsthm}
\usepackage{booktabs}
\usepackage{algorithm}
\usepackage{algorithmic}
\usepackage[switch]{lineno}
\usepackage{natbib}
\nolinenumbers

\title{Differentially Private Non-convex Distributionally Robust Optimization}

\author{
Difei Xu$^{*1,2}$
\and
Meng Ding$^{*2,3}$\and
Zebin Ma$^{1,2}$\and
Huanyi Xie$^{1,2}$\and \\
Youming Tao$^{2,4}$\and
Aicha Slaitane$^{1,2}$\and
Di Wang$^{1,2}$\\
\affiliations
$^1$Provable Responsible AI and Data Analytics (PRADA) Lab\\
$^2$King Abdullah University of Science and Technology\\
$^3$State University of New York at Buffalo\\
$^4$Technische Universit\"at Berlin\\
}
\usepackage{hyperref}
\usepackage{url}
\usepackage{graphicx} 
\usepackage{hyperref}
\hypersetup{hidelinks} 
\usepackage{url}
\usepackage{mathrsfs}
\usepackage{color}
\usepackage{enumitem}
\usepackage{bm}
\usepackage{bbm}
\usepackage{comment}

\usepackage{algorithm}
\usepackage{algorithmic}
\usepackage{amsmath}
\usepackage{amssymb}
\usepackage{mathtools}
\usepackage{amsthm}
\theoremstyle{plain}

\newtheorem{theorem}{Theorem}
\newtheorem{proposition}{Proposition}
\newtheorem{lemma}{Lemma}

\theoremstyle{definition}
\newtheorem{definition}{Definition}
\newtheorem{assumption}{Assumption}
\newtheorem{remark}{Remark}

\theoremstyle{definition}
\usepackage{multirow}
\usepackage{booktabs}            
\usepackage[table]{xcolor}       
\usepackage{tabularx}            
\usepackage{array}               
\usepackage{threeparttable}      
\usepackage{caption}             

\newcolumntype{L}[1]{>{\raggedright\arraybackslash}p{#1}}

\begin{document}

\maketitle

\begin{abstract}
    Real-world deployments routinely face distribution shifts, group imbalances, and adversarial perturbations, under which the traditional Empirical Risk Minimization (ERM) framework can degrade severely. 
    Distributionally Robust Optimization (DRO) addresses this issue by optimizing the worst-case expected loss over an uncertainty set of distributions, offering a principled approach to robustness. 
    Meanwhile, as training data in DRO always involves sensitive information, safeguarding it against leakage under Differential Privacy (DP) is essential. 
    In contrast to classical DP-ERM, DP-DRO has received much less attention due to its minimax optimization structure with uncertainty constraint. 
    To bridge the gap, we provide a comprehensive study of DP-(finite-sum)-DRO with $\psi$-divergence and non-convex loss. 
    First, we study DRO with general $\psi$-divergence by reformulating it as a minimization problem, and develop a novel $(\varepsilon, \delta)$-DP optimization method, called DP Double-Spider, tailored to this structure. 
    Under mild assumptions, we show that it achieves a utility bound of $\mathcal{O}(\frac{1}{\sqrt{n}}+ (\frac{\sqrt{d \log (1/\delta)}}{n \varepsilon})^{2/3})$ in terms of the gradient norm, where $n$ denotes the data size and $d$ denotes the model dimension. 
    We further improve the utility rate for specific divergences.
    In particular, for DP-DRO with KL-divergence, by transforming the problem into a compositional finite-sum optimization problem, we develop a DP Recursive-Spider method and show that it achieves a utility bound of $\mathcal{O}((\frac{\sqrt{d \log(1/\delta)}}{n\varepsilon})^{2/3} )$, matching the best-known result for non-convex DP-ERM. 
    Experimentally, we demonstrate that our proposed methods outperform existing approaches for DP minimax optimization.  
\end{abstract}

\section{Introduction}
\label{sec:intro}




With the rapid expansion of machine learning in real-world applications, the demand for robust model training has become increasingly critical. 
Conventional Empirical Risk Minimization (ERM) aims to minimize the expected loss under the empirical distribution of the training data, with the hope that the resulting model generalizes well to unseen test data. 
However, in practice, ERM often struggles when the training and test distributions do not align. Such mismatches are pervasive, including domain shifts in transfer and domain adaptation tasks \citep{blitzer2006domain}, imbalanced training datasets where fairness for underrepresented groups is essential \citep{sagawa2019distributionally}, and adversarial perturbations that deliberately mislead deployed models~\citep{goodfellow2014explaining,fu2025short,fu2023theoretical,madry2017towards}. 
In these settings, models trained purely with ERM typically exhibit severe performance degradation, limiting their reliability in safety-critical or fairness-sensitive applications. 
To overcome these limitations, Distributionally Robust Optimization (DRO) has emerged as a powerful alternative \citep{ben2013robust,shapiro2017distributionally,rahimian2019distributionally}. 
By optimizing performance against the worst-case distribution within a specified uncertainty set, DRO provides models with resilience to distribution shifts, fairness violations, and adversarial attacks. This worst-case perspective has made DRO a central tool in building robust learning systems.

Meanwhile, as the training of DRO often involves private data, safeguarding sensitive information has become a critical priority. Differential privacy (DP)~\citep{dwork2014algorithmic} now occupies a central position in data analysis, driven by the growing demand for principled protection guaranties. 
The foundational work \cite{dwork2006calibrating} established the modern framework for DP, and its incorporation into empirical risk minimization (ERM) has progressed substantially. 
Early research focused primarily on convex optimization, delivering strong privacy guarantees alongside efficient learning, for example, \citep{bassily2014private, bassily2019private, bassily2021differentially, xu2025beyond, huai2020pairwise, su2023differentially, tao2022private, wang2020empirical, wang2020differentially, wang2017differentially}. 
More recently, attention has shifted toward non-convex settings. Notable advances such as \citep{wang2019differentially, zhou2020private, xiao2023theory, arora2023faster,wang2020empirical,xiang2023practical,tao2025second,wang2020escaping} tighten error bounds for non-convex optimization while balancing privacy and utility in stochastic gradient methods. 
However, relatively few studies have focused on (finite-sum) DRO, which involves minimax optimization over an uncertainty set. 

Recently, several works have begun to investigate DP-DRO~\cite{zhou2024differentially,selvi2025differential}. 
However, these studies either consider only a special classes of DRO (such as group DRO~\cite{zhou2024differentially}) or propose methods that are not scalable to large datasets~\citep{selvi2025differential}. 
DRO with $\psi$-divergence, a widely used framework in machine learning, has not yet been systematically studied under DP. 
This gap motivates the present work. 
Although DRO can be formulated as a minimax problem, it differs fundamentally from classical minimax optimization. 
As a result, recent advances in DP minimax optimization cannot be directly applied to analyze or solve DP-DRO due to their assumptions on the loss function or the absence of uncertainty sets (see Appendix~\ref{sec:rel} for details).

To fill this gap, we provide the first systematic study of DP-(finite-sum) DRO with $\psi$-divergence and non-convex loss. 
First, by establishing a primal--dual equivalence for distributionally robust optimization, we reduce the original minimax problem to an unconstrained minimization problem over model parameters and a dual variable.
Building on this reformulation, we develop a novel private optimization method called DP Double-Spider, which is a differentially private and non-trivial extension of SPIDER~\citep{fang2018spider} from ERM to DRO. 
DP Double-Spider alternates updates of the primal and dual variables using variance-reduced stochastic gradients perturbed with calibrated Gaussian noises, which outperforms classical DP-SGD-based methods such as DP-SGDA. 
Theoretically, under mild assumptions, we show that the utility of the resulting private model, measured by the gradient norm, is upper bounded by $\mathcal{O}( \frac{1}{\sqrt{n}}+ (\frac{\sqrt{d \log (1/\delta)}}{n \varepsilon})^{2/3})$, where $n$ is the data size and $d$ is the model dimension.




We then consider improving  the utility bound of the above method. Specifically, we focus on DRO with KL divergence. Specifically, rather than introducing dual variables, the problem can be rewritten as a compositional minimization problem. Based on this, we develop  our follow our idea DP-Double Spider and propose DP Recursive-Spider. Theoretically, we show that the utility rate can be improved to \(\mathcal{O}\!\left(\bigl(\tfrac{\sqrt{d \log(1/\delta)}}{n \varepsilon}\bigr)^{\!2/3}\right)\), which matches the best-known result in DP-ERM with non-convex loss~\cite{arora2023faster}. It is notable that this is also the first result on DP compositional optimization.


Our contribution can be summarized as below:
\begin{enumerate}

    \item \textbf{A generic DP-DRO framework}: For DRO with general $\psi$-divergence, based on its one primal-dual equivalent formulation,  we propose the Differentially Private Double SPIDER algorithm to deal with both the primal and dual variables simultaneously. We claim that the algorithm is $(\varepsilon , \delta )$-DP and the utility of our private model is $\mathcal{O}( \frac{1}{\sqrt{n}}+ (\frac{\sqrt{d \log (1/\delta)}}{n \varepsilon})^{2/3})$.
    \item \textbf{A DP method for DRO with KL-divergence}: For DRO with KL-divergence, we further improve its  utility. Specifically, we propose DP Recursive-Spider and show its private model achieves an utility bound of $\mathcal{O}((\frac{\sqrt{d \log(1/\delta)}}{n\varepsilon})^{2/3} )$.
    \item  We implement our algorithms on the constructed imbalanced dataset CIFAR10-ST, MINST-ST and the inherently imbalanced dataset CelebA and   Fashion-MNIST. Our results demonstrate that our algorithms perform better than the previous baselines  across distinct datasets and privacy budgets, providing empirical support for our theoretical analysis.
\end{enumerate}


\begin{table*}[htbp]
\centering
\caption{Common divergences with $L$-smooth conjugates.}\label{conjugate_table} 
\begin{tabular}{c|c|c}
\toprule
\textbf{Divergence} & $\psi(t)$ & $\psi^*(t)$ \\
\midrule
$\chi^2$ & $\frac{1}{2}(t-1)^2$ & $-1 + \frac{1}{4}(t+2)^2$ \\
KL-regularized CVaR & $\mathbb{I}_{[0,\alpha^{-1})} + t\log(t) - t + 1, \alpha \in (0,1)$ & $\min(e^t, \alpha^{-1}(1 + t + \log(\alpha))) - 1$ \\
Cressie-Read & $\frac{t^k - tk + k - 1}{k(k-1)}, k \in \mathbb{R}$ & $\frac{1}{k}\left(((k-1)t + 1)_+^{\frac{k}{k-1}} - 1\right)$ \\
\bottomrule
\end{tabular}
\end{table*}

\section{Preliminaries}

\subsection{Differential Privacy}
\begin{definition}[Differential Privacy \citep{dwork2006calibrating}]\label{def:dp}
	Given a data universe $\mathcal{X}$, we say that two datasets $S,S'\subseteq \mathcal{X}$ are neighbors if they differ by only one entry, which is denoted as $S \sim S'$. A randomized algorithm $\mathcal{A}$ is $(\varepsilon,\delta)$-differentially private (DP) if for all neighboring datasets $S,S'$ and for all events $E$ in the output space of $\mathcal{A}$, the following holds
	$$\mathbb{P}(\mathcal{A}(S)\in E)\leqslant e^{\varepsilon} \mathbb{P}(\mathcal{A}(S')\in E)+\delta.$$ 
 If $\delta=0$, we call algorithm $\mathcal{A}$ $\varepsilon$-DP. 
\end{definition}
\begin{definition}
    Given a function $q: \mathcal{Z}\to \mathbb{R}^d$, we say $q$ has $\Delta_2(q)$ $\ell_2$-sensitivity if for any neighboring datasets $D,D'$ we have $\|q(D)-q(D')\|\leqslant \Delta_2(q)$.
\end{definition}
\begin{definition}
    Given any function $q : \mathcal{Z} \to \mathbb{R}^d$, the Gaussian mechanism is defined as $q(D) + \xi$ where $\xi \sim \mathcal{N}\left(0, \frac{\Delta_2^2(q) \log(1.25/\delta)}{\varepsilon^2} \mathbf{I}_d\right)$. Gaussian mechanism preserves $(\varepsilon, \delta)$-DP for $0 < \varepsilon, \delta \leq 1$.
\end{definition}

\subsection{Distributionally Robust Optimization}
As we mentioned earlier, unlike Empirical Risk Minimization, there are different formulations of DRO for various applications. In this paper, we will focus on the fine-sum version of DRO with an uncertainty set, which is the most commonly used model in deep learning to handle distribution shift~\citep{levy2020large,jin2021non,qi2021online,sinha2017certifying}. Given an underlying data distribution $P_0$ and a loss $\ell$, DRO aims to find a model $x\in \mathbb{R}^d$ that minimizes the loss over all data that is close to the distribution $P_0$:
\begin{equation*}
    \min_{x\in \mathbb{R}^d} \max_{P\in \mathcal{U}(P_0)} \mathbb{E}_{\xi \sim P}[\ell(x; \xi)],  
\end{equation*}
where $\mathcal{U}(P_0)$ is an uncertainty set for $P_0$. Specifically, when $\mathcal{U}(P_0)=\{P: D_\psi(P, P_0)\leq \rho \}$ for some divergence $D_\psi$ and radius $\rho$, we refer to the problem as DRO with divergence $D_\psi$.  It includes the $\chi^2$-divergence, KL-divergence, and the Cressie--Read family, defined as
\begin{equation}\label{psidivergence}
  D_\psi(Q \| P_0) = \sum_{x \in \mathcal{X}} p_0(x) \psi \left( \frac{q(x)}{p_0(x)} \right),
\end{equation}
where $\psi$ is a non-negative convex function satisfying $\psi(1)=0$ and $\psi(t)=+\infty$ for all $t<0$.  

In this paper, we consider the empirical form of DRO. 
Specifically, given a training data $S=\{\xi_i\}_{i=1}^n$, an equivalent form of DRO with $\psi$-divergence can be formulated as the following constrained minimax optimization problem: 
\begin{equation}\label{penalized}
     \min_{x\in \mathbb{R}^d}\mathcal{F}(x):= \max_{p \in \Delta_n} \sum_{i=1}^{n} p_i \ell(x;\xi_i) - \lambda_0 D_\psi(p, \frac{1}{n}\mathbb{I}), 
\end{equation}
where $\frac{1}{n}\mathbb{I}$  is the  empirical distribution and $\Delta_n=\{\mathbf{p}\in \mathbb{R}^n:\sum_{i=1}^np_i=1,p_i \geqslant0\}$ denotes a $n$-dimensional simplex. The hyperparameter $\lambda_0$ is pre-specified and remains fixed during training.    

\begin{definition}[DP-DRO] In the problem of DP-DRO, given a training data $S$, we aim to develop an $(\varepsilon,\delta)$-DP algorithm  $\mathcal{A}$ for solving problem \eqref{penalized}. Moreover, we need to make the utility of the private model $\mathbb{E}[\|\mathcal{F}(\mathcal{A}(S))\|]$ to be as small as possible, where the expectation takes over the randomness of the algorithm. 
\end{definition}

In the following, we will impose several assumptions on the problem for our theoretical utility analysis. 

\begin{definition}
    A function $f: \mathbb{R}^d \to \mathbb{R}$ is  $G$-Lipschitz continuous if $\forall x, y  \in \mathbb{R}^d$, $\left| f(w_1;x)-f(w_2;x) \right|  \leqslant G \left\| w_1-w_2 \right\|$, where $G>0$   is some finite constant.
\end{definition}
\begin{definition}
    A differentiable function $f: \mathbb{R}^d \to \mathbb{R}$ is  $L$-smooth if $\forall x, y \in \mathbb{R}^d$, we have $\left\| \nabla f(x)-\nabla f(y) \right\|  \leqslant L \left\| x-y \right\|$, where $L>0$  is some finite constant.
\end{definition}

It is noteworthy that Lipschitz and $L$-smoothness are commonly adopted in optimization research \citep{zhang2025revisiting,qi2021online,qi2022stochastic,zhang2025improved,xue2021differentially,hu2022high}. However, as we will mention later, DRO with $\psi$-divergence may not satisfy smoothness but a more general property called generalized smoothness. 

\begin{definition} [Generalized $(L_0,L_1)$-smooth]
    A differentiable function $f: \mathbb{R}^d \to \mathbb{R}$ is genenralized $(L_0,L_1)$-smooth if for any $x,y \in \mathbb{R}^d$, we have that $\left\| \nabla _x f(x)-\nabla _x f(y) \right\| \leqslant (L_0+L_1 \left\| \nabla _xf(x) \right\| )\left\| x-y \right\| $, where $L_0,L_1>0$ are some finite constants. 
\end{definition}
\begin{definition}
    A function $\psi^*$ called the conjugate function, given $\psi$, is defined as $\psi^*(t) = \sup _{a\in \mathbb{R}}\{ta-\psi(a)\}$.
\end{definition}
\noindent {\bf Assumptions.}  
Generally, we focus on the non-convex and smooth loss function $\ell$. 
\begin{assumption}\label{lipschitz}
    For any sample  $\xi \in \mathcal{D}$, the loss function $\ell (x,\xi)$ is $G$-Lipschitz continuous and $L$-smooth in $x$.
\end{assumption}
The smoothness of the primal function alone is insufficient for guarantee the smoothness of its conjugate. To support our analysis, we therefore impose the following additional assumption, which is commonly applied for a wide range of $\psi$-divergence, where instances can be found in Table \ref{conjugate_table}. 
\begin{assumption}\label{smooth}
    The conjugate function $\psi ^*$ of $\psi $ is $M$-smooth.
\end{assumption}
\begin{assumption}\label{non-negative}
    Assume that the domain of the conjugate function $\psi^*$  is bounded  by a constant $S_0>0$. Also, $\psi^*$  remains non-negative on its domain. 
\end{assumption}

\section{DP Double-SPIDER}


In this section, we will consider the general $\psi$-divergence. Due to the divergence regularization and the constrained simplex, it is hard to optimize efficiently. Thus, a popular approach is to investigate its dual formulation. By strong duality \citep{levy2020large,shapiro2017distributionally}, we can reformulate the objective function \eqref{penalized}  as
\begin{equation}\label{PM}
\min_{x} \Psi(x) = \inf_{\eta \in \mathbb{R}} \hat{\mathcal{L}}(x,\eta) 
:= \frac{\lambda }{n}\sum_{i=1}^n  \psi^*( \frac{\ell(x;\xi_i) - \eta}{\lambda} ) + \eta, 
\end{equation}
where $\eta \in \mathbb{R }$ is a dual variable. To simplify the discussion below, we  define $\mathcal{L}(x,\eta)=\hat{\mathcal{L}}(x,G\eta)$. Note that there are different forms of the dual problem; each will lead to different methods. The form of duality adopted in this work differs from the classical dual-based approach \citep{rafique2022weakly}. 

Under Assumptions \ref{lipschitz} and \ref{smooth}, the function $\Psi(x)$ is differentiable. With this reformulation,  \cite{jin2021non} showed that $\|\nabla_{x,\eta}\mathcal{L}(x,\eta)\| \leqslant \alpha/\sqrt{2}$ directly implies $\|\nabla \mathcal{F}(x)\| \leqslant \alpha$. Thus, our goal is to find private $x, \eta$ to make $\|\nabla_{x,\eta}\mathcal{L}(x,\eta)\| $ as small as possible. 

\begin{algorithm}[htbp]
\caption{Clipping $(x, C)$}
\begin{algorithmic}[1]
\REQUIRE $x$ and clipping threshold $C > 0$.
\STATE $\hat{x} = \min\left\{\frac{C}{\|x\|_2}, 1\right\}x$
\RETURN $\hat{x}$.
\end{algorithmic}
\end{algorithm}

\begin{algorithm}[!t]
\caption{DP Double-SPIDER}
\begin{algorithmic}[1]\label{dspider}
\STATE \textbf{Input:} initialization $(x_0, \eta_0)$, step sizes $\alpha_t, \beta_t$, epoch size $q$, number of iterations $T$, batch sizes $N_1, N_2, N_3, N_4$, Dataset $S$, Clipping thresholds $\{C_i\}_{i=1}^4$
\WHILE{$t \leq T - 1$}
    \IF{$t \bmod q == 0$}
        \STATE Draw $N_1$  samples $\mathcal{B}_1$ from $S$ and compute $g_t \leftarrow \textbf{Clip}(\nabla_\eta \mathcal{L}(x_t, \eta_t;\mathcal{B}_1) ,C_1)+\omega_t$ where $\omega _t \sim \mathcal{N}(0,\sigma_1^2)$.
    \ELSE
        \STATE Draw $N_2$  samples $\mathcal{B}_2$ from $S$ and compute $g_t \leftarrow \textbf{Clip}(\nabla_\eta \mathcal{L}(x_t, \eta_t;\mathcal{B}_2) - \nabla_\eta \mathcal{L}(x_{t-1}, \eta_{t-1};\mathcal{B}_2),C_2) + g_{t-1}+\xi_t$, where $\xi _t \sim \mathcal{N}(0,\sigma_2^2)$.
    \ENDIF
    \STATE $\eta_{t+1} \leftarrow \eta_t - \alpha_t g_t$
    \IF{$t \bmod q == 0$}
        \STATE Draw $N_3$ samples $\mathcal{B}_3$ from $S$ and compute $v_t \leftarrow \textbf{Clip}(\nabla_x \mathcal{L}(x_t, \eta_{t+1};\mathcal{B}_3),C_3)+\tau_t$ where $\tau _t \sim \mathcal{N}(0,\sigma_3^2\mathbb{I}_d)$.  
    \ELSE
        \STATE Draw $N_4$  samples $\mathcal{B}_4$ from $S$ and compute $v_t \leftarrow \textbf{Clip}(\nabla_x \mathcal{L}(x_t, \eta_{t+1};\mathcal{B}_4) - \nabla_x \mathcal{L}(x_{t-1}, \eta_t;\mathcal{B}_4),C_4) + v_{t-1}+\chi_t$ where $\chi _t \sim \mathcal{N}(0,\sigma_4^2\mathbb{I}_d)$.
    \ENDIF
    \STATE $x_{t+1} \leftarrow x_t - \beta_t v_t$
    \STATE $t \leftarrow t + 1$
\ENDWHILE

\STATE \textbf{Return} Randomly select $x,\eta$ from $1,\cdots, T$.
\end{algorithmic}
\end{algorithm}
Note that problem \eqref{PM} is a finite-sum minimization problem. Specifically, let $z = (x, \eta)$ and $\mathcal{L}(x,\eta;\mathcal{S}):=\frac{1}{|\mathcal{S}|}\sum_{i=1}^{|\mathcal{S}|}\mathcal{L}(x,\eta;\xi_i)$ with $\mathcal{L}(x,\eta;\xi_i)=\lambda \psi^*(\frac{\ell(x;\xi_i)-\eta}{\lambda}) + \eta$. Thus, one direct way is directly adopting the previous method for DP-ERM, such as DP-SGD~\citep{bassily2014private,abadi2016deep,wangprivate,zhang2025towards} on the single composite variable $z = (x, \eta)$. In particular, we may use DP-SPIDER~\citep{arora2023faster}, which achieves the best-known rate for DP-ERM with non-convex loss. 

However, due to the specific structure of $\Psi$ in \eqref{PM}, directly using these methods may lead to sub-optimal performance. The main reason is that, under Assumptions~\ref{lipschitz} and \ref{smooth}, \citep{zhang2025revisiting} showed that $\mathcal{L}(x,\eta;\xi_i)$ is $(L_0, L_1)$-smooth in $x$ but $L_2$-smooth in $\eta$, with 
$L_0 = G + \frac{G^2 M}{\lambda}$, $L_1 = L/G$ and $L_2=\frac{G^2 M}{\lambda}$. Due to different smooth properties, training $x$ and $\eta$ uniformly will lead to worse noise added in each iteration.  In particular, for a fixed $x$, the function $\mathcal{L}(x, \cdot)$ is smooth, which facilitates a more accurate estimation of the gradient $\nabla_\eta \mathcal{L}$. Treating $z$ as a single variable, however, fails to exploit this property. By instead handling the parameter $x$ and the dual variable $\eta$ separately, we can more precisely characterize and control the behavior of the gradient estimators.

Motivated by this, we proposed our DP Double-SPIDER
(see Algorithm~\ref{dspider} for details), a private extension of SPIDER for DRO~\citep{zhang2025revisiting}, which is a variance-reduction optimization method based on SPIDER~\cite{fang2018spider} that achieves state-of-the-art (SOTA) convergence in the non-private case. Specifically, each variable is updated using a SPIDER-style variance-reduced gradient estimator perturbed with Gaussian noise. Concretely, in Line 4, for \(\eta\), we perform a large-batch refresh every \(q\) iterations with batch size \(N_1\) to form a low-variance estimate, and in Line 6 for the remaining \(q-1\) steps, we use incremental corrections with mini-batch size \(N_2\) via gradient differences, adding an independent Gaussian noise term at each step for privacy. After updating \(\eta\), we repeat the same refresh--increment pattern for \(x\), using batch sizes \(N_3\) and \(N_4\), again injecting calibrated Gaussian noise. This alternating procedure---refresh once, then correct \(q-1\) times---controls stochastic variance while enforcing privacy, yielding coordinated updates of \(\eta\) and \(x\) every \(q\) iterations.

Note that there are two differences compared with the non-private one: (1) As we mentioned, due to the different smooth properties of $x$ and $\eta$, we need the variance-reduced gradient estimator for $\nabla_\eta \mathcal{L}$ and $\nabla_x \mathcal{L}$ individually, i.e., it is a double version of SPIDER. (2) Under Assumptions \ref{lipschitz} and \ref{smooth}, we can show that the sensitivity of the gradient difference in the variance-reduced gradient estimator, such as  $\nabla_x \mathcal{L}(x_t, \eta_{t+1};\mathcal{B}) - \nabla_x \mathcal{L}(x_{t-1}, \eta_t;\mathcal{B})$, is $O(\max\{\|x_t-x_{t-1}\|, \|\eta_{t+1}-\eta_t\|\})$. Thus, the Gaussian noise added to ensure DP also depends on such differences, making the analysis complicated. 

The following (draft) results provide privacy and utility guarantees of DP Double-SPIDER. For the full versions and their proofs, we refer readers to Appendix.

\begin{theorem}\label{thm:1}
For any $\varepsilon>0$ and $\delta\in (0, 1)$, 
let  $\sigma_1=\mathcal{O}(\frac{ C_1 \sqrt{T \log (1/\delta)}}{n\sqrt{q}\varepsilon}) $,   $\sigma_2=\mathcal{O}(\frac{C_2\sqrt{\log (1/\delta)}}{N_2 \varepsilon})$. Similarly, set $\sigma_3=\mathcal{O}(\frac{C_3 \sqrt{T\log (1/\delta)}}{n \sqrt{q}\varepsilon})$ and  $\sigma_4=\mathcal{O}(\frac{C_4\sqrt{\log (1/\delta)}}{\varepsilon}\max\{\frac{1}{N_4},\frac{\sqrt{T}}{n\sqrt{q}}\})$, Algorithm \ref{dspider} is $(\varepsilon,\delta)$-DP. 
\end{theorem}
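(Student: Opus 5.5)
The plan is to view Algorithm~\ref{dspider} as an adaptive composition of Gaussian mechanisms applied to subsampled queries. There are four families of queries:
\begin{itemize}
\item the refresh gradients in $\eta$ (Line 4, batch $N_1$, noise $\sigma_1$);
\item the correction gradients in $\eta$ (Line 6, batch $N_2$, noise $\sigma_2$);
\item the refresh gradients in $x$ (Line 10, batch $N_3$, noise $\sigma_3$);
\item the correction gradients in $x$ (Line 12, batch $N_4$, noise $\sigma_4$).
\end{itemize}
Clipping makes every per-sample contribution bounded regardless of the iterate. A minibatch average of clipped quantities over a batch $\mathcal{B}$ therefore has $\ell_2$-sensitivity at most $2C_i/|\mathcal{B}|$ with respect to replacing one element. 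This holds with clipping applied per sample, or to the batch average computed as a sum of clipped terms, as is standard. All other operations ($g_{t-1}$, $v_{t-1}$, the updates of $\eta,x$) are post-processing of previously released noisy quantities. Every step is thus a Gaussian mechanism conditional on the past, and adaptive composition applies.

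\textbf{Step 1: per-step guarantees with subsampling amplification.} For each family I will invoke privacy amplification by subsampling, with sampling rate $N_i/n$, together with the moments accountant / R\'enyi DP bound for the subsampled Gaussian mechanism (as in \citet{abadi2016deep}). A subsampled Gaussian step with noise multiplier $\sigma_i N_i/(2C_i)$ has R\'enyi divergence of order $\alpha$ roughly $O(\alpha (N_i/n)^2 C_i^2/(N_i^2\sigma_i^2)) = O(\alpha C_i^2/(n^2\sigma_i^2))$. This holds in the regime where the noise multiplier is $\Omega(1)$ and $\alpha$ is moderate.

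\textbf{Step 2: composition over rounds.} The refresh mechanisms (families 1 and 3) are run $T/q$ times. Composing gives a total R\'enyi cost of $O(\alpha T C_i^2/(q n^2\sigma_i^2))$. Converting to $(\varepsilon/4,\delta/4)$-DP requires $\sigma_i \gtrsim C_i\sqrt{T\log(1/\delta)}/(n\sqrt q\,\varepsilon)$, which matches $\sigma_1,\sigma_3$ in the theorem.

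The correction mechanisms run about $T$ times. For $\sigma_4$ the same computation would give $C_4\sqrt{T\log(1/\delta)}/(n\varepsilon)$, but the theorem states $\frac{\sqrt T}{n\sqrt q}$ inside the max. I will reconcile this with the intended parameter regime or the paper's accounting convention. One option is to combine the above with the requirement $\sigma_4\gtrsim C_4\sqrt{\log(1/\delta)}/(N_4\varepsilon)$, which is needed for the amplification lemma's validity condition (noise multiplier at least order $\sqrt{\log(1/\delta)}/\varepsilon$). That explains the $\max\{1/N_4,\cdot\}$ form. For $\sigma_2$ only the term $C_2\sqrt{\log(1/\delta)}/(N_2\varepsilon)$ appears. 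This means the regime is arranged, through the choice of $N_2$ relative to $n\sqrt{q/T}$, so that this term dominates.

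\textbf{Step 3: final accounting.} Allocating $\varepsilon/4$ and $\delta/4$ to each family and applying basic composition across the four families yields $(\varepsilon,\delta)$-DP. The returned random iterate is post-processing.

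The main obstacle I expect is the bookkeeping in Step 2. I need to state a subsampled-Gaussian lemma whose hypotheses (the lower bound on the noise multiplier and the range of $\varepsilon$) hold simultaneously for all four families. I also need to justify that each $\sigma_i$, with the appropriate max over the two constraints, meets both the validity condition and the composition budget. The sensitivity analysis itself is easy thanks to clipping.
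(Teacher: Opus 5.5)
Your route is the paper's: its proof invokes Theorem~\ref{thm:gaussian}, charging $T/q$ releases to Lines 4 and 10 and $T$ releases to Lines 6 and 12, which is what your Steps 1--3 spell out. The gap is the part of Step 2 you defer (``I will reconcile this''), and it cannot be closed from the stated hypotheses.

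With $\approx T$ correction releases, your own R\'enyi computation (equivalently Theorem~\ref{thm:gaussian}) needs $\sigma_i\gtrsim\frac{C_i\sqrt{\log(1/\delta)}}{\varepsilon}\max\{\frac{1}{N_i},\frac{\sqrt T}{n}\}$ for $i=2,4$. The stated $\sigma_4$ instead uses $\frac{\sqrt T}{n\sqrt q}$, which is the count for $T/q$ releases, and $\sigma_2$ has no $T$-dependence at all. For instance, with $N_2=N_4=n$ the corrections are $\approx T$ unsubsampled Gaussian mechanisms. Composition then gives privacy loss of order $\sqrt T\,\varepsilon$ for Line 6 and $\sqrt q\,\varepsilon$ for Line 12.

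Your proposed fix for $\sigma_2$, taking $N_2$ small relative to $n\sqrt{q/T}$, uses the wrong threshold: $N_2\le n\sqrt{q/T}$ still leaves a factor $\sqrt q$ missing. What closes the argument is an explicit added hypothesis $N_2,N_4\le n/\sqrt T$. Under it, the $1/N_i$ branch dominates both the stated and the required expression. The alternative is to raise the noise to the $\sqrt T/n$ branch. The hypothesis must be stated, since nothing else in the paper supplies it: the utility theorem takes $N_2,N_4\gtrsim q$ and $T\gtrsim n$, which together with it would force $q\lesssim\sqrt n$.

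The paper's own proof also counts $T$ correction releases, so it does not justify the stated $\sigma_2,\sigma_4$ either. You spotted the mismatch correctly but left it open. The same caveat applies more mildly to $\sigma_1$, which lacks a $\max\{1/N_1,\cdot\}$ branch: your validity condition (noise multiplier $\Omega(1)$) then needs roughly $N_1\gtrsim n\sqrt{q/T}$. That holds for the full-batch refresh the paper intends, but it should be said.

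Separately, check Step 1 against the algorithm as written. Algorithm~\ref{dspider} clips the minibatch gradient (or gradient difference), not individual terms. Clipping an average to norm $C_i$ bounds its replace-one sensitivity only by $2C_i$, not $2C_i/N_i$, so clipping alone does not justify the $1/N_i$ factors in the noise scales. Your Step 1 therefore certifies a modified, per-sample-clipped algorithm.

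The paper gets the $1/N_i$ factor from per-sample bounds instead. It uses per-sample norm bounds for Lines 4 and 10. For Lines 6 and 12 it uses the per-sample continuity and smoothness lemmas, with $C_2,C_4$ set to twice the resulting bounds in terms of $\|x_t-x_{t-1}\|$ and $|\eta_t-\eta_{t-1}|$. These depend only on previously released outputs, so the data-dependent noise scale is legitimate under adaptive composition. Either say explicitly that you switch to per-sample clipping, or adopt those per-sample bounds. Per-sample clipping is harmless for privacy, but with fixed $C_i$ it gives up the shrinking correction noise the utility analysis relies on.
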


\begin{theorem}\label{main:1}
Under Assumption \ref{lipschitz}-\ref{non-negative}, 
with the parameter settings in Theorem \ref{thm:1} and some specific values of $\{N_i\}_{i=1}^4$, $\{C_i\}_{i=1}^4$, $T$, $q$ and step size $\alpha, \beta$ in Algorithm \ref{dspider}, we have the following guarantee:
\begin{equation}
       \mathbb{E} \| \nabla \mathcal{F}(x)\| \leqslant \mathcal{O}( \frac{1}{\sqrt{n}}+ (\frac{\sqrt{d \log (1/\delta)}}{n \varepsilon})^{\frac{2}{3}}).
    \end{equation}
\end{theorem}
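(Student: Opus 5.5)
We prove Theorem~\ref{main:1} with a descent-lemma argument on the joint objective $\mathcal{L}(x,\eta)$, adapted to its mixed smoothness. By the result of \cite{jin2021non}, it suffices to bound $\mathbb{E}\|\nabla_{x,\eta}\mathcal{L}(x_\tau,\eta_\tau)\|$ at the randomly returned iterate $\tau$.

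\textbf{Step 1: bounded iterates and no clipping.} By Assumption~\ref{non-negative}, $\psi^*$ is nonnegative with bounded domain, and $\mathcal{L}(x_0,\eta_0)$ is finite. We first show that the rescaled dual variable $\eta$ stays in a bounded interval, so the per-sample gradients are bounded:
\[
|\nabla_\eta\mathcal{L}|\le G(1+\|\psi^{*\prime}\|_\infty),\qquad \|\nabla_x\mathcal{L}\|\le G\|\psi^{*\prime}\|_\infty .
\]
The per-sample gradient differences are bounded by $\mathcal{O}\big(\max\{\|x_t-x_{t-1}\|,|\eta_{t+1}-\eta_t|\}\big)$, with constants $L_2$ in $\eta$ and $L_0+L_1\|\nabla_x\mathcal{L}\|$ in $x$. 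We choose the thresholds as follows:
\begin{itemize}
\item $C_1,C_3$ equal to these uniform gradient bounds;
\item $C_2,C_4$ proportional to the step sizes times the bounded estimator norms.
\end{itemize}
With these choices clipping is inactive, so the estimators are unbiased SPIDER recursions plus Gaussian noise. The noise levels are exactly those of Theorem~\ref{thm:1}.

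\textbf{Step 2: estimator error.} Fix an epoch of length $q$. Set $e_t=g_t-\nabla_\eta\mathcal{L}(x_t,\eta_t)$, and let $u_t$ be the analogous error of $v_t$ for $\nabla_x\mathcal{L}(x_t,\eta_{t+1})$. The martingale structure of SPIDER gives, within the epoch started at $s_0$,
\[
\mathbb{E}|e_t|^2\le \frac{G^2}{N_1}\mathbb{1}[N_1<n]+\sigma_1^2+\sum_{s=s_0+1}^{t}\Big(\frac{L_2^2}{N_2}\mathbb{E}\|z_s-z_{s-1}\|^2+\sigma_2^2\Big).
\]
The bound on $\mathbb{E}\|u_t\|^2$ has the same form, with $L_2^2$ replaced by $(L_0+L_1G')^2$ and the noise terms replaced by $d\sigma_3^2$ and $d\sigma_4^2$. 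Here $G'$ is the uniform bound on $\|\nabla_x\mathcal{L}\|$ from Step 1, which turns generalized smoothness into ordinary smoothness on the reachable region. Because $\sigma_2$ and $\sigma_4$ scale with $C_2,C_4\propto \beta\|v\|$, these per-step noise terms also scale with $\|z_s-z_{s-1}\|^2$. They therefore get absorbed into the movement terms, up to the floor $\frac{T}{n^2q}\cdot\frac{d\log(1/\delta)}{\varepsilon^2}$.

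\textbf{Step 3: descent and telescoping.} We use the two-block descent lemma: $L_2$-smoothness in $\eta$ and $(L_0,L_1)$-smoothness in $x$. Take $\beta\le 1/(2(L_0+L_1G'))$ and $\alpha\le 1/(2L_2)$. The standard SPIDER inequality is
\[
\mathcal{L}(z_{t+1})\le \mathcal{L}(z_t)-\frac{\alpha}{2}|\nabla_\eta\mathcal{L}|^2-\frac{\beta}{2}\|\nabla_x\mathcal{L}\|^2+\frac{\alpha}{2}|e_t|^2+\frac{\beta}{2}\|u_t\|^2-\frac{1}{4\alpha}|\Delta\eta|^2-\frac{1}{4\beta}\|\Delta x\|^2 .
\]
We choose $N_2,N_4\gtrsim q$ so that the accumulated movement terms from Step 2 are cancelled by the negative movement terms. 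Summing over $T$ steps, and using that $\mathcal{L}$ is bounded below because $\psi^*\ge0$, gives
\[
\frac1T\sum_t\mathbb{E}\|\nabla\mathcal{L}\|^2\lesssim \frac{\Delta_0}{T\min(\alpha,\beta)}+\frac{G^2}{N_1}+\frac{G^2}{N_3}+d\sigma_3^2+\frac{qTd\log(1/\delta)}{n^2q\varepsilon^2}.
\]

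\textbf{Step 4: parameter tuning.} The main obstacle is balancing these terms under the constraint $N_1,N_3\le n$. Without sampling all $n$ points in every refresh, the sampling variance $G^2/N_3$ cannot go below $1/n$; this is the source of the $1/\sqrt{n}$ term. The private part follows the DP-SPIDER tuning of \cite{arora2023faster}. Choose $q$ and $T$ so that $\sqrt{T/q}\,\frac{\sqrt d}{n\varepsilon}$ balances $1/T$, with $T\approx (n\varepsilon/\sqrt{d\log(1/\delta)})^{2/3}$ up to the $q$ scaling. This yields a squared-gradient bound of order $\frac1n+\big(\frac{\sqrt{d\log(1/\delta)}}{n\varepsilon}\big)^{4/3}$. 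Jensen's inequality then gives the bound on $\mathbb{E}\|\nabla\mathcal{L}\|$, and the implication of \cite{jin2021non} transfers it to $\mathbb{E}\|\nabla\mathcal{F}(x)\|$.

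The step I expect to be most delicate is Step 1: proving that the iterates, and hence $\|\nabla_x\mathcal{L}\|$, stay bounded despite the Gaussian noise. If a deterministic bound fails, I would first try a high-probability bound via Gaussian tail control combined with a union bound over the $T$ iterations.
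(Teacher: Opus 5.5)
Your proposal has a genuine gap: the private-noise bookkeeping in Steps 2--4 caps the rate below the claimed one. The last term of your Step~3 bound is $\frac{qTd\log(1/\delta)}{n^2q\varepsilon^2}=\frac{Td\log(1/\delta)}{n^2\varepsilon^2}$, which grows linearly in $T$.

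\begin{itemize}
\item To get $\frac{\Delta_0}{T\min(\alpha,\beta)}\lesssim(\frac{\sqrt{d\log(1/\delta)}}{n\varepsilon})^{4/3}$ you need $T\gtrsim(\frac{n\varepsilon}{\sqrt{d\log(1/\delta)}})^{4/3}$. At that $T$, this term is already at least $(\frac{\sqrt{d\log(1/\delta)}}{n\varepsilon})^{2/3}$.
\item The best your inequality can give is $\min_T\{\frac1T+\frac{Td\log(1/\delta)}{n^2\varepsilon^2}\}\approx\frac{\sqrt{d\log(1/\delta)}}{n\varepsilon}$. That means $\mathbb{E}\|\nabla\mathcal{L}\|\lesssim(\frac{\sqrt{d\log(1/\delta)}}{n\varepsilon})^{1/2}$, the DP-SGD rate the theorem is meant to beat.
\item The Step~4 arithmetic is also wrong. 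With $T\approx(\frac{n\varepsilon}{\sqrt{d\log(1/\delta)}})^{2/3}$ (which is the paper's value of $q$, not of $T$), the term $\frac{\Delta_0}{T}$ alone is of order $(\frac{\sqrt{d\log(1/\delta)}}{n\varepsilon})^{2/3}$. After Jensen this gives exponent $1/3$, not the squared-gradient bound $(\cdot)^{4/3}$ you state.
\end{itemize}

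The floor originates in Step~1, where you set $C_2,C_4$ to the step size times a \emph{uniform} bound on the estimator norms. Those norms are not uniformly bounded, since they carry Gaussian noise. A constant threshold makes each of the $q$ increments in an epoch inject a fixed amount of noise, which the negative movement terms $-\frac{1}{4\beta}\|\Delta x\|^2$ and $-\frac{1}{4\alpha}|\Delta\eta|^2$ cannot absorb.

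The missing idea is the DP-SPIDER calibration with no floor. Set $C_2,C_4$ proportional to the \emph{realized} displacement between the two points at which each gradient difference is evaluated, as the paper does. This is legitimate under adaptive composition, because it depends only on already-released iterates. Then:
\begin{itemize}
\item The increment noise is at most $dL'^2\frac{T\log(1/\delta)}{n^2\varepsilon^2}\|z_s-z_{s-1}\|^2$, with $L'$ the per-sample gradient-Lipschitz constant.
\item Its accumulation over an epoch is absorbed once $\beta^2qTdL'^2\log(1/\delta)\lesssim n^2\varepsilon^2$.
\item The only surviving private error is the refresh noise $d\sigma_3^2\approx\frac{dT\log(1/\delta)}{n^2q\varepsilon^2}$.
\item Balancing this against $\frac1T$ under that coupling gives $q\approx(\frac{n\varepsilon}{\sqrt{d\log(1/\delta)}})^{2/3}$ and $T\approx q^2$.
\end{itemize}

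With that repair, your constant-step reduction to uniformly smooth DP-SPIDER is a reasonable alternative to the paper's route. The paper instead uses the normalized step $\beta_t=\min\{\frac{1}{2L_0+L_1\sqrt H},\frac{1}{L_0\sqrt n\|v_t\|}\}$, so that $\|x_{t+1}-x_t\|\le\frac{1}{L_0\sqrt n}$ deterministically. It controls the $(L_0,L_1)$ cross terms through $\beta_t^4\|v_t\|^4\le\frac{1}{n^2L_0^4}$, and it selects a good epoch and iterate rather than averaging.

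Finally, Step~1 only needs a uniform bound on $(\psi^*)'$. The paper takes this directly from Assumption~3 (its Proposition bounding $|\nabla_\eta\mathcal{L}|$ by $H_0$), without a separate argument that the noisy iterates stay bounded. The same bound gives $\|\nabla_x\mathcal{L}\|$ at most $G$ times that constant.
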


\begin{remark}
For DP-ERM with model size $d$ and data size $n$, it has been shown that the utility bound given by DP-SGD is $O\big((\frac{\sqrt{d\log(1/\delta)}}{n\varepsilon})^\frac{1}{2}\big)$~\citep{wang2019differentially11}. Thus, by adopting the proof in \citep{zhang2025revisiting}, we can show the same bound when using DP-SGD to \eqref{PM}. Comparing the bound of $O(\max\{\frac{1}{\sqrt{n}}, (\frac{\sqrt{d \log (1/\delta)}}{n \varepsilon})^{2/3}\})
$ in Theorem \ref{thm:1}, we can see that Algorithm \ref{dspider} is better. The main reason is that, in DP-SGD, the noise we add depends on the $l_2$-norm sensitivity of the loss gradient, which is upper bounded by the Lipschitz constant. Thus, by using the composition theorem, adding the same scale of noise to the gradient in each iteration  can guarantee DP. 

In comparison, in Algorithm \ref{dspider}, we only add noise to the gradient every $q$ iterations, i.e., $T/q$ times in total, which is significantly smaller than that in DP-SGD. Moreover, when we add noise to the variance-reduced gradient estimator,  such as  $\nabla_x \mathcal{L}(x_t, \eta_{t+1};\mathcal{B}) - \nabla_x \mathcal{L}(x_{t-1}, \eta_t;\mathcal{B})$, its sensitivity is $O(\max\{\|x_t-x_{t-1}\|, \|\eta_{t+1}-\eta_t\|\})$. Thus, when $t$ becomes larger, we can show the noise we add becomes smaller than that in DP-SGD. 
\end{remark}

\begin{remark}
 Note that there is a term $\frac{1}{\sqrt{n}}$ in the utility bound. This term comes from the estimation process of using the large batch size $N_1,N_3$, where we take all samples to estimate the gradient as the anchor point in preparation for the next $q-1$ updates. After that, accumulating the other $q-1$ gradient variations with the parameter setting we present in the theorem leads to the term with privacy parameters.  
\end{remark}


\section{Improved Rates via DP Recursive-SPIDER}
Recall that for DP-ERM with non-convex loss, the best-known result for utility is $\mathcal{O}((\frac{\sqrt{d}}{n \varepsilon})^{2/3})$ \citep{arora2023faster}. Thus, from Theorem \ref{thm:1}, we can see that Algorithm \ref{dspider} achieves the same result only if $n\leq O(d^2)$. Thus, a natural question is  whether we can further improve the bound for DP-DRO? In this section, we will provide an affirmative answer for DRO with KL-divergence. 

For DP-DRO with KL-divergence, besides the dual problem in \eqref{PM}, it has been shown that the problem can be reformulated as the following compositional finite-sum optimization problem \citep{qi2022stochastic} with some $\rho$, which will be the main focus of this section:
\begin{equation}\label{compositional}
    \min_{x} \min_{\lambda\geq \lambda_0 } \Psi(x, \lambda) 
    := \lambda \log ( \frac{1}{n} \sum_{i=1}^{n} \exp ( \frac{\ell(x;\xi_i)}{\lambda} ) ) + \lambda  \rho.
\end{equation}
Specifically, by considering $\mathbf{w} = (x^T, \lambda)^T \in \mathbb{R}^{d+1}$ as a single variable to be optimized, the objective function is a compositional function of $\mathbf{w}$ in the form of $f(g(\mathbf{w}))$, where $g(\mathbf{w}) = [g_1, g_2]=\left[\lambda, \frac{1}{n}\sum_{i=1}^n \exp\left(\frac{\ell(x;\xi_i)}{\lambda}\right)\right] \in \mathbb{R}^2$ and $f(g) = g_1 \log(g_2) + g_1 \rho$. For convenience, we define $g(\mathbf{w};\xi) = \exp(\frac{\ell(x;\xi)}{\lambda})$. The gradient of $\Psi(\mathbf{w})$ is given by:
\begin{equation}\nonumber
    \begin{aligned}
        &\nabla _x \Psi (\mathbf{w}) = \nabla f(g(\mathbf{w}))\nabla _{x} g(\mathbf{w})\\
        &\nabla _{\lambda }\Psi (\mathbf{w}) = \nabla f (g(\mathbf{w})) \nabla _{\lambda}g(\mathbf{w})+ \log_{} (g(\mathbf{w}))+ \rho.
    \end{aligned}
\end{equation}
Note that since $f$ is independent of the data. Thus, it is sufficient to develop a private estimator for $g(\mathbf{w})$, $\nabla _{x} g(\mathbf{w})$, and $\nabla _{\lambda}g(\mathbf{w})$. Similar to problem \eqref{PM}, we can see that $x$ and $\lambda$ play different roles. Thus, when considering DP, we need to treat them separately, which motivates our DP Recursive-Spider (see Algorithm \ref{RSDRO} for details). DP Recursive-Spider adopts a similar idea as in DP Double-Spider, where we use a private variance-reduced gradient estimator for both $\nabla _{x} g(\mathbf{w})$ and $\nabla _{\lambda}g(\mathbf{w})$ in each iteration.  The algorithm proceeds in phases of length \( q \), during which it alternately updates \( x \) and \( \eta \) in each iteration. At the beginning of each phase (Lines 7--8), the gradient estimators are computed via empirical mean and subsequently privatized with Gaussian noise. In Lines 13--14, the estimators \( \mathbf{u}_t \) and \( v_t \) are updated via gradient variation for the subsequent \( q-1 \) iterations. After obtaining the gradient estimate of \( g(\mathbf{w}) \), the function value \( g(\mathbf{w}) \) is recursively estimated in Line 16. This is motivated by STORM~\citep{cutkosky2019momentum}, which could provide a better estimate of \( g(\mathbf{w})\). It is noteworthy that as \( \beta_t \to 0 \), \( s_t \) approaches the gradient variation, whereas when \( \beta_t \to 1 \), \( s_t \) behaves like the empirical mean.

\begin{table*}[h!]
\centering
\small
\caption{Membership Inference Attack (MIA) results under different privacy budgets. Lower AUC indicates better privacy. Results are reported as mean $\pm$ standard deviation over 5 runs. Bold values indicate best (lowest) privacy leakage for each privacy budget. DP Recursive SPIDER consistently achieves lower MIA AUC compared to DP Double SPIDER across all epsilon values.}
\label{tab:mia_results}
\setlength{\tabcolsep}{6pt}
\begin{tabular}{lcccccc}
\toprule
\multirow{2}{*}{Algorithm} 
& \multicolumn{6}{c}{MIA AUC (Mean $\pm$ Std)} \\
\cmidrule(lr){2-7}
& $\varepsilon = 0.1$ & $\varepsilon = 1$ & $\varepsilon = 3$ & $\varepsilon = 5$ & $\varepsilon = 8$ & $\varepsilon = 10$ \\
\midrule
DP Double SPIDER 
& 0.9719$\pm$0.0059 
& 0.9723$\pm$0.0045 
& 0.9717$\pm$0.0025 
& 0.9723$\pm$0.0021 
& 0.9724$\pm$0.0033 
& 0.9715$\pm$0.0022 \\

DP Recursive SPIDER
& \textbf{0.8234$\pm$0.0127} 
& \textbf{0.7932$\pm$0.0072} 
& \textbf{0.7958$\pm$0.0240} 
& \textbf{0.8014$\pm$0.0383} 
& \textbf{0.8001$\pm$0.0302} 
& \textbf{0.7825$\pm$0.0149} \\

\bottomrule
\end{tabular}
\end{table*}

\begin{table*}[tbp]
\centering
\small  
\caption{Comparison of Test Accuracy (\%) Among Two Baseline Methods (DP SGDA and PrivateDiff Minimax) and Our Algorithms (DP Double-SPIDER and DP Recursive-SPIDER).  SGDA: DP-SGDA; Private Diff: PrivateDiff Minimax; DP DS: DP Double SPIDER; RS DRO: DP Recursive SPIDER. The results for non-private are give by SCDRO \citep{zhang2023stochastic}. }
\label{exp_res}
\setlength{\tabcolsep}{3pt}
\setlength{\tabcolsep}{2pt}  
\resizebox{\textwidth}{!}{%
\begin{tabular}{lcccccccccccccccc}
\toprule
\multirow{2}{*}{Dataset} & \multicolumn{4}{c}{CIFAR10-ST} & \multicolumn{4}{c}{MNIST-ST} & \multicolumn{4}{c}{Fashion-MNIST} & \multicolumn{4}{c}{CelebA} \\
\cmidrule(lr){2-5} \cmidrule(lr){6-9} \cmidrule(lr){10-13} \cmidrule(lr){14-17}
& SGDA & Private Diff & DP DS & RS DRO & SGDA & Private Diff & DP DS & RS DRO & SGDA & Private Diff & DP DS & RS DRO & SGDA & Private Diff & DP DS & RS DRO \\
\midrule
Non-private & - & - & - & 67.58 & - & - & - & - & - & - & - & - & - & - & - & - \\
$\varepsilon = 0.5$ & 41.80 & 43.06 & \textbf{56.31} & 53.81 & 99.16 & 99.18 & 99.49 & \textbf{99.66} & 88.13 & 88.82 & 91.39 & \textbf{92.27} & 88.16 & 88.18 & 89.37 & \textbf{90.37} \\
$\varepsilon = 1$ & 45.10 & 46.13 & 55.67 & \textbf{55.93} & 99.48 & 99.30 & 99.50 & \textbf{99.56} & 89.16 & 89.51 & 91.37 & \textbf{91.87} & 89.17 & 89.01 & 89.37 & \textbf{90.32} \\
$\varepsilon = 5$ & 51.54 & 51.29 & 55.69 & \textbf{56.58} & \textbf{99.49} & 99.47 & 99.47 & 99.45 & 90.45 & 90.70 & 91.36 & \textbf{91.59} & 89.76 & 90.14 & 89.38 & \textbf{90.41} \\
$\varepsilon = 10$ & 52.47 & 51.68 & \textbf{57.26} & 55.67 & \textbf{99.55} & 99.32 & 99.46 & 99.45 & 90.39 & 91.29 & 91.38 & \textbf{91.58} & 89.93 & 90.35 & 89.38 & \textbf{90.45} \\
\bottomrule
\end{tabular}
}%
\end{table*}

The following (draft) results provide privacy and utility guarantees of DP Recursive-Spider. For the full versions and their proofs, we refer readers to Appendix. 
 
\begin{theorem}\label{thm:3}
For any $\varepsilon>0$ and $\delta\in (0, 1)$, 
    let  $\sigma_1 = \sigma_3 =\mathcal{O}(\frac{\sqrt{T\log(1/\delta)}}{nq\varepsilon})$,  $\sigma_2 = \mathcal{O}(\|x_t-x_{t-1}\|\frac{\sqrt{\log(1/\delta)}}{\varepsilon} \max \left\{ \frac{1}{b_2}, \frac{\sqrt{T}}{n} \right\})$, $ \hat{\sigma}_2= \mathcal{O}(\frac{\sqrt{\log(1/\delta)}}{\varepsilon} \max \left\{ \frac{1}{b_2}, \frac{\sqrt{T}}{n} \right\})$. Similarly, we set  $\sigma_4 = \mathcal{O}(\|\lambda_t-\lambda_{t-1}\|\frac{\sqrt{\log(1/\delta)}}{\varepsilon} \max \left\{ \frac{1}{b_2}, \frac{\sqrt{T}}{n} \right\})$, $\hat{\sigma}_4 = \mathcal{O}(\frac{\sqrt{\log(1/\delta)}}{\varepsilon} \max \left\{ \frac{1}{b_4}, \frac{\sqrt{T}}{n} \right\})$, and $\sigma_{s_t}=\mathcal{O}(\frac{\sqrt{T\log(1/\delta)}}{n\varepsilon})$. Then  Algorithm~\ref{RSDRO} is $(\varepsilon,\delta)$-DP. 
\end{theorem}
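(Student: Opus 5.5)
We prove the statement by privacy accounting over the mechanisms Algorithm~\ref{RSDRO} releases, organized in the same way as the argument for Theorem~\ref{thm:1}. In each phase of length $q$ the algorithm queries the data through three kinds of mechanisms. The first kind is the refresh estimators at the start of a phase (Lines 7--8), i.e.\ the full-data gradients $\nabla_x g$ and $\nabla_\lambda g$, perturbed with $\sigma_1,\sigma_3$. The second kind is the incremental SPIDER corrections (Lines 13--14), i.e.\ mini-batch gradient differences of size $b_2$ (for $x$) and $b_4$ (for $\lambda$), perturbed with $\sigma_2,\hat\sigma_2,\sigma_4,\hat\sigma_4$. The third kind is the STORM-type estimate $s_t$ of $g(\mathbf{w})$ (Line 16), perturbed with $\sigma_{s_t}$. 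The outputs are post-processing of these noisy quantities, so it suffices to show that their adaptive composition over $T$ iterations is $(\varepsilon,\delta)$-DP. We will use Rényi DP (or, equivalently, the moments accountant), and convert to $(\varepsilon,\delta)$-DP at the end.

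\emph{Sensitivities.} First we bound the per-query $\ell_2$-sensitivity for neighbouring $S\sim S'$. Under Assumption~\ref{lipschitz} and $\lambda\ge\lambda_0$, we argue that $\ell$ is bounded on the relevant region, so that $g(\mathbf{w};\xi)=\exp(\ell(x;\xi)/\lambda)$ is bounded. The gradients $\nabla_x g(\mathbf{w};\xi)=g(\mathbf{w};\xi)\nabla\ell/\lambda$ and $\nabla_\lambda g(\mathbf{w};\xi)=-g(\mathbf{w};\xi)\ell/\lambda^2$ are then bounded by constants; otherwise we enforce this by clipping, as in Algorithm~\ref{dspider}. Consequently:
\begin{itemize}
\item the full-batch refresh terms have sensitivity $O(1/n)$;
\item $s_t$ has sensitivity $O(1/n)$ per step;
\item for the differences $\nabla_x g(\mathbf{w}_t;\mathcal{B})-\nabla_x g(\mathbf{w}_{t-1};\mathcal{B})$, smoothness of $g(\cdot;\xi)$ in $x$ (which follows from $L$-smoothness and $G$-Lipschitzness of $\ell$ together with bounded $g$) gives sensitivity $O(\|x_t-x_{t-1}\|/b_2)$, and similarly $O(\|\lambda_t-\lambda_{t-1}\|/b_4)$ for the $\lambda$-update.
\end{itemize}
The terms $\hat\sigma_2,\hat\sigma_4$ cover the cross-dependence: the $x$-gradient difference also changes through $\lambda$, and vice versa. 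We handle this by splitting each difference into a part driven by the $x$-movement and a part driven by the $\lambda$-movement, and noising each part separately.

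\emph{Composition with subsampling.} The mini-batch steps use random subsamples, so privacy amplification by subsampling (rate $b_i/n$) together with RDP composition over the at most $T$ incremental steps gives a total privacy loss of order $\frac{\Delta^2 T (b_i/n)^2}{\sigma^2}$ whenever $\sigma \gtrsim \Delta$. Setting $\Delta=O(\|x_t-x_{t-1}\|/b_2)$ and requiring the loss to be $O(\varepsilon^2/\log(1/\delta))$ yields $\sigma_2 \asymp \|x_t-x_{t-1}\|\frac{\sqrt{T\log(1/\delta)}}{n\varepsilon}$. The other branch of the maximum, $\frac{1}{b_2}$, is exactly the condition $\sigma\gtrsim\Delta\sqrt{\log(1/\delta)}/\varepsilon$ under which the amplification lemma applies. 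This explains the $\max\{\frac1{b_2},\frac{\sqrt T}{n}\}$ form, and $\sigma_4,\hat\sigma_2,\hat\sigma_4$ are handled identically. For the refresh steps there are only $T/q$ Gaussian mechanisms with sensitivity $O(1/n)$, so the standard composition gives the stated $\sigma_1=\sigma_3$, and the $T$ steps for $s_t$ give $\sigma_{s_t}$. The noise levels themselves depend on the past iterates $x_{t-1},x_t$, but these are already-released, hence public, quantities, so adaptive composition remains valid.

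\emph{Combining.} Finally, we split the budget into a constant number of parts of size $\varepsilon/7$ and $\delta/7$ (absorbed into the $\mathcal{O}$) and combine all blocks by RDP composition. Converting via $\varepsilon_{\mathrm{RDP}}(\alpha)+\log(1/\delta)/(\alpha-1)$ with the optimal $\alpha$ then yields $(\varepsilon,\delta)$-DP.

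The main obstacle is the sensitivity bound for the gradient-difference terms. The data-dependent factor $\exp(\ell/\lambda)$ must be uniformly bounded, which requires bounded losses and $\lambda\ge\lambda_0$, and the joint movement of $(x,\lambda)$ must be decomposed cleanly so that the difference is Lipschitz in $\|x_t-x_{t-1}\|$ alone, with the remainder charged to $\hat\sigma_2$. If this decomposition fails, the fallback is clipping, at the cost of introducing the bias into the utility analysis rather than the privacy analysis.
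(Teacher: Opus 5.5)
Your skeleton matches the paper's proof. Each released estimator gets a (subsampled) Gaussian mechanism; the $T/q$ refresh steps use a Lipschitz/clipping bound, the $T$ incremental steps use a bound proportional to iterate movement, and adaptive composition runs through the moments accountant (Theorem~\ref{thm:gaussian}).

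\textbf{The gap: the incremental steps.} The $x$-difference is not controlled by $\|x_t-x_{t-1}\|$ alone. If $x_t=x_{t-1}$ it equals $\nabla\ell(x_t;\xi)\,[e^{\ell/\lambda_t}/\lambda_t-e^{\ell/\lambda_{t-1}}/\lambda_{t-1}]$, which is in general nonzero. Your repair does not match Algorithm~\ref{RSDRO}. The algorithm adds a \emph{single} Gaussian to the whole clipped difference, with variance $\min\{\sigma_2^2,\hat\sigma^2\}$ --- a minimum, not a sum. So nothing is noised separately, and nothing can be charged to $\hat\sigma_2$. Whenever $\sigma_2\le\hat\sigma_2$, the only noise is $\sigma_2\propto\|x_t-x_{t-1}\|$, which cannot cover the $\lambda$-driven part. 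Your fallback, clipping, justifies noise $\hat\sigma$, not the minimum.

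You have also misread $\hat\sigma_2,\hat\sigma_4$. They carry no movement factor; note the factor $2cG$ in the appendix version of $\hat\sigma_2$. As in the DP-SPIDER of \cite{arora2023faster}, which this algorithm follows, they are the movement-free cap from the bound $2G$ (or $C_2$ after clipping) on a per-example difference. Taking the minimum is legitimate precisely because each per-example difference satisfies both bounds at once.

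The missing idea is the one in the paper's proof: bound the per-example difference jointly in $\mathbf{w}=(x,\lambda)$ by $L\|\mathbf{w}_t-\mathbf{w}_{t-1}\|$. Then calibrate $\sigma_2,\sigma_4$ to $\|\mathbf{w}_t-\mathbf{w}_{t-1}\|$, reading the statement's $\|x_t-x_{t-1}\|$ and $\|\lambda_t-\lambda_{t-1}\|$ this way. No decomposition is needed, and your subsampling-and-composition argument goes through for these terms.

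\textbf{Two accounting claims are also off}, though here the paper's proof is no better.
\begin{itemize}
\item Composing $T/q$ Gaussian mechanisms of sensitivity $O(1/n)$ needs $\sigma_1,\sigma_3\gtrsim\sqrt{T\log(1/\delta)}/(n\sqrt{q}\,\varepsilon)$. That is a factor $\sqrt q$ more than the stated $\sqrt{T\log(1/\delta)}/(nq\varepsilon)$; compare Theorem~\ref{thm:1}, which has $\sqrt q$. The paper counts the same $T/q$ refreshes, so this is most likely a typo in the statement, but you should not claim composition yields the stated value.
\item $s_t$ is built from a single fresh sample $\xi_0$. Its per-example bound is of order $\beta_tC_5$ with batch size $1$, in the form written in Algorithm~\ref{RSDRO}, not a sensitivity of $O(1/n)$. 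Your own amplification condition $\sigma\gtrsim\Delta\sqrt{\log(1/\delta)}/\varepsilon$ then requires $\sigma_{s_t}\gtrsim\beta_tC_5\sqrt{\log(1/\delta)}/\varepsilon$. The stated $\sigma_{s_t}$ is calibrated as if $s_t$ were a full-batch average and does not guarantee this. The paper's proof never treats $s_t$, so your more complete accounting here exposes a gap in the statement rather than closing it.
\end{itemize}
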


\begin{theorem}\label{thm:RSDRO}
Under Assumption \ref{lipschitz}-\ref{non-negative}, 
with the parameters settings in Theorem \ref{thm:3} and some specific values of $\{b_i\}_{i=1}^4$, $\{C_i\}_{i=1}^5$, $T$, $q$ and step size $\eta$ in Algorithm ~\ref{RSDRO}, we can have the following guarantee: 
\begin{equation}\nonumber
    \begin{aligned}
        &\mathbb{E}[\|\nabla \mathcal{F}(x)\|]  
        =&\mathcal{O}\left(\left(\frac{\sqrt{d \log(1/\delta)}}{n\varepsilon}\right)^{2/3} \right). 
    \end{aligned}
\end{equation}

\end{theorem}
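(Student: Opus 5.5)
We would prove Theorem~\ref{thm:RSDRO} with a descent-lemma argument for the compositional objective $\Psi(\mathbf{w})=f(g(\mathbf{w}))$ in \eqref{compositional}, tracking three error sources separately: the privatized SPIDER estimators $\mathbf{u}_t$ (for $\nabla_x g$) and $v_t$ (for $\nabla_\lambda g$), and the STORM-type estimator $s_t$ (for $g(\mathbf{w})$).

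\textbf{Step 1: smoothness on the working domain.} Since $\lambda\ge\lambda_0$, $\ell$ is $G$-Lipschitz and $L$-smooth, and the loss is bounded (Assumption~\ref{non-negative}), $g_2=\frac1n\sum_i\exp(\ell/\lambda)$ lies in $[1,e^{B/\lambda_0}]$. Hence $\log$ and $f$ are smooth with bounded gradient there, and $g(\cdot;\xi)$ and $\nabla g(\cdot;\xi)$ are Lipschitz with constants depending only on $G,L,\lambda_0,B$. From this, $\Psi$ is $L_\Psi$-smooth in $\mathbf{w}$ (projecting $\lambda$ onto $[\lambda_0,\infty)$ if needed). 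This also bounds all sensitivities, so the clipping thresholds $C_i$ can be chosen so that clipping is inactive and the estimators stay unbiased. Writing $\hat\nabla_t=\nabla f(s_t)(\mathbf{u}_t,v_t)+(0,\log s_t+\rho)$, the standard descent inequality gives
\[
\Psi(\mathbf{w}_{t+1})\le\Psi(\mathbf{w}_t)-\tfrac{\eta}{2}\|\nabla\Psi(\mathbf{w}_t)\|^2-\tfrac{\eta}{4}\|\hat\nabla_t\|^2+\tfrac{\eta}{2}\|\hat\nabla_t-\nabla\Psi(\mathbf{w}_t)\|^2
\]
for $\eta\le 1/(2L_\Psi)$. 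The last term is at most $O(\|s_t-g(\mathbf{w}_t)\|^2+\|\mathbf{u}_t-\nabla_xg\|^2+|v_t-\nabla_\lambda g|^2)$.

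\textbf{Step 2: estimator errors.} For the SPIDER estimators, within a phase we have the martingale recursion
\[
\mathbb{E}\|\mathbf{u}_t-\nabla_xg(\mathbf{w}_t)\|^2\le\mathbb{E}\|\mathbf{u}_{t-1}-\nabla_xg(\mathbf{w}_{t-1})\|^2+\tfrac{L_g^2}{b_2}\|\mathbf{w}_t-\mathbf{w}_{t-1}\|^2+d\sigma_2^2 .
\]
With $\sigma_2\propto\|x_t-x_{t-1}\|$ from Theorem~\ref{thm:3}, the noise term is again proportional to $\eta^2\|\hat\nabla_{t-1}\|^2$, with coefficient $\frac{d\log(1/\delta)}{\varepsilon^2}\max\{b_2^{-2},T/n^2\}$. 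At a refresh the error is only $d\sigma_1^2$: the full batch contributes no sampling variance, which is why no $1/\sqrt n$ term appears. The scalar estimator $v_t$ is handled the same way without the factor $d$. For $s_t$, the STORM recursion gives
\[
\mathbb{E}\|s_t-g_t\|^2\le(1-\beta)^2\mathbb{E}\|s_{t-1}-g_{t-1}\|^2+2\beta^2V+O(\eta^2\|\hat\nabla_{t-1}\|^2)+\sigma_{s_t}^2 .
\]
We would take $s_t$ using full-batch values with Gaussian noise $\sigma_{s_t}$, so $V$ reduces to the privacy noise.

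\textbf{Step 3: potential argument and parameter tuning.} Summing over a phase, the accumulated $\|\hat\nabla\|^2$ terms are absorbed by the $-\frac{\eta}{4}\|\hat\nabla_t\|^2$ descent term, provided
\[
\eta^2 q\Bigl(\tfrac{L_g^2}{b_2}+\tfrac{dT\log(1/\delta)}{n^2\varepsilon^2}\Bigr)\lesssim 1 .
\]
Telescoping then yields
\[
\tfrac1T\sum_t\mathbb{E}\|\nabla\Psi(\mathbf{w}_t)\|^2\lesssim\tfrac{\Delta}{\eta T}+d\sigma_1^2+\sigma_{s}^2/\beta+\dots,
\]
where $d\sigma_1^2=\frac{dT\log(1/\delta)}{n^2q^2\varepsilon^2}$. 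Set $\alpha=\sqrt{d\log(1/\delta)}/(n\varepsilon)$. Following the DP-SPIDER tuning of \citet{arora2023faster}, choose $q\asymp\sqrt T$, $b_2\asymp q$, and $T\asymp\alpha^{-2/3}$ (so that $T\le n$ and $dT/n^2\varepsilon^2\lesssim 1/q$, keeping $\eta$ constant). This balances $\frac{1}{\eta T}$ against the refresh noise at $\alpha^{4/3}$, giving a squared gradient of order $\alpha^{4/3}$. Jensen then gives $\mathbb{E}\|\nabla\Psi\|\le O(\alpha^{2/3})$, and the primal–dual equivalence transfers this to $\mathbb{E}\|\nabla\mathcal{F}(x)\|$, as in the discussion after \eqref{PM}.

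\textbf{Main obstacle.} The hardest part is the estimator $s_t$. Its error enters through $\nabla f(s_t)$ multiplicatively and through $\log s_t$, and the STORM parameter $\beta$ must be small enough not to reintroduce a $1/\sqrt n$ or $\alpha^{1/2}$ term. At the same time, the dimension-free noise $\sigma_{s_t}$ must remain $O(\alpha^{4/3})$ after division by $\beta$. We would first try $\beta\asymp 1/q$ with $s_t$ refreshed exactly (plus noise) at phase starts. This needs a lower bound $s_t\ge 1/2$, which holds with high probability or is enforced by clipping $s_t$ into $[1,e^{B/\lambda_0}]$; the clipping is a contraction, so the error recursion survives.
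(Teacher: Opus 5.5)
Your skeleton matches the paper's proof. It splits $\|\mathbf{z}_t-\nabla\Psi(\mathbf{w}_t)\|^2$ into the $s_t$ error plus the two SPIDER errors, as in Lemma~\ref{lem:z-Psi}. It uses a STORM recursion for $s_t$ as in Lemma~\ref{13}, the DP-SPIDER estimator bound of Arora et al., and then descent plus telescoping. The differences are your constant $\eta,\beta$ and full-batch $s_t$, where the paper uses $\eta_t\propto(m+t\sigma^2)^{-1/3}$, $\beta_t=c\eta_t^2$ and one sample $\xi_0$.

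However, the step you flag as the main obstacle is a genuine gap, and your proposed handling cannot close it. $s_t$ enters every update, so it is released in all $T$ iterations. A fixed-sensitivity Gaussian mechanism then forces $\sigma_{s_t}^2\asymp T\log(1/\delta)/(n\varepsilon)^2=T\alpha^2/d$, which is the $\sigma_{s_t}$ of Theorem~\ref{thm:3}. At the iteration count DP-SPIDER needs, $T\asymp\alpha^{-4/3}$ (see below), this is $\alpha^{2/3}/d$ per step. That already exceeds the target $\alpha^{4/3}$ unless $n\varepsilon\lesssim d^2$ up to logarithms, which is essentially the regime where Theorem~\ref{main:1} already gives $\alpha^{2/3}$. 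With $\beta\asymp 1/q\asymp\alpha^{2/3}$, or with noise accumulating over a phase after an exact refresh, you get $\sigma_{s_t}^2/\beta\asymp 1/d$, which does not vanish at all. No choice of $\beta$ helps: with exact full-batch values the STORM error obeys $e_t=(1-\beta)e_{t-1}+\zeta_t$, so $\beta<1$ only accumulates injected noise.

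The missing idea is to privatize $s_t$ exactly as you privatize $\mathbf{u}_t,v_t$. Refresh $g_2(\mathbf{w})$ with a full batch every $q$ steps; only $T/q$ such releases compose, giving variance $\asymp T\alpha^2/(qd)$. In between, release $g(\mathbf{w}_t;\mathcal{B})-g(\mathbf{w}_{t-1};\mathcal{B})$ with noise calibrated to its sensitivity, which is proportional to $\|\mathbf{w}_t-\mathbf{w}_{t-1}\|$. (Equivalently, use STORM with noise scaled by $\beta+\|\mathbf{w}_t-\mathbf{w}_{t-1}\|$.) The $s_t$ error then obeys the same phase recursion as the gradient estimators, without the factor $d$. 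It is absorbed under the condition $\eta^2q(1/b_2+T\alpha^2)\lesssim 1$ that you already impose. This modifies Algorithm~\ref{RSDRO}, as does your full-batch choice. The paper's proof does not supply this step either: it carries a single $c_{\sigma_{s_t}}\sigma_{s_t}^2$ term without tracking its dependence on $T$.

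Your Step 3 arithmetic is also off. With constant $\eta$ and $T\asymp\alpha^{-2/3}$, the term $\Delta/(\eta T)$ is $\asymp\alpha^{2/3}$, not $\alpha^{4/3}$, so you would only obtain $\mathbb{E}\|\nabla\Psi\|\lesssim\alpha^{1/3}$. Your own absorption condition with $q\asymp\sqrt{T}$ and $b_2\gtrsim q$ allows $T$ up to $\alpha^{-4/3}$. That is the DP-SPIDER choice ($T\asymp\alpha^{-4/3}$, $q\asymp\alpha^{-2/3}$), and there $1/T$ balances the refresh noise at $\alpha^{4/3}$. This requires the privacy-correct refresh noise: the $T/q$ full-batch refreshes of sensitivity $O(1/n)$ compose, so $d\sigma_1^2\asymp T\alpha^2/q$ as in Theorem~\ref{thm:1}, not $T\alpha^2/q^2$.

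Finally, the transfer to $\|\nabla\mathcal{F}(x)\|$ cannot cite the discussion after \eqref{PM}, which concerns the $\eta$-dual of \eqref{penalized}. Stationarity of $\Psi$ in $(x,\lambda)$ for \eqref{compositional} needs its own argument, for example controlling the distance from $\lambda$ to the minimizing $\lambda$ at $x$ through $|\nabla_\lambda\Psi|$. The paper's proof likewise stops at $\|\nabla\Psi\|$.
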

\begin{algorithm}
\caption{DP Recursive-SPIDER}
\begin{algorithmic}[1]\label{RSDRO}
\REQUIRE Dataset: $S \in \mathcal{X}^n$, Function: $f : \mathbb{R}^d \times \mathcal{X} \to \mathbb{R}$, Learning Rate: $\eta$, Phase Size: $q$, Batch Sizes $\{b_i\}_{i=1}^4$, Privacy Parameters: $(\varepsilon, \delta)$, Iterations: $T$, Clipping threashold: $\{C_i\}_{i=1}^5$ and $\mathbf{w} = (x,\lambda)$.  

\STATE $w_0 = 0$

\FOR{$t = 1, \ldots, T$}
    
    \STATE Update $s_{t+1}, \mathbf{v}_{t+1}, u_{t+1}$:

    \IF{$\bmod(t, q) = 0$}
        \STATE Sample batch $\mathcal{B}_1$, $\mathcal{B}_3$ of size $b_1$, $b_3$ respectively.
        \STATE Sample $\nu_t \sim \mathcal{N}(0, \mathbf{I}_d\sigma_1^2)$, Sample $\omega _t \sim \mathcal{N}(0, \sigma_3^2)$ 
        \STATE $\mathbf{v}_{t} = \frac{1}{b_1} \sum_{\xi _i \in \mathcal{B}_1} \textbf{Clip}(\nabla_{x} g(\mathbf{w}_{t}; \xi _i),C_1) + \nu _t$
        \STATE $u_{t} = \frac{1}{b_3} \sum_{\xi_i \in \mathcal{B}_3} \textbf{Clip}(\nabla_\lambda  g(\mathbf{w}_t; \xi _i),C_3) + \omega _t$
    \ELSE
        \STATE Sample batch $\mathcal{B}_2$, $\mathcal{B}_4$ of size $b_2$, $b_4$ respectively. 
        \STATE Sample $\kappa_t \sim \mathcal{N}\left(0, \mathbf{I}_d \min \left\{ \sigma_2^2 , \hat{\sigma}_4^2 \right\}\right)$
         \STATE Sample $\zeta_t \sim \mathcal{N}\left(0, \min \left\{ \sigma_4^2, \hat{\sigma}_2^2 \right\}\right)$
        \STATE $\mathbf{v}_t = \frac{1}{b_2} \sum_{\xi _i \in \mathcal{B}_2} \textbf{Clip}([\nabla_x g(\mathbf{w}_t; \xi _i) - \nabla _xg(\mathbf{w}_{t-1}; \xi _i)],C_2) + \kappa_t$
        \STATE $u_t = \frac{1}{b_4} \sum_{\xi _i \in \mathcal{B}_4} \textbf{Clip}([\nabla_{\lambda} g(\mathbf{w}_t; \xi _i) - \nabla _{\lambda} g(\mathbf{w}_{t-1}; \xi _i)],C_4) + \zeta_t$
        
    \ENDIF 

    \STATE Take one sample $\xi _0$ from the dataset $\mathcal{D}$. And $s_t = \textbf{Clip}(g(\mathbf{w}_t;\xi _0),C_5)+(1-\beta _t)( s_{t-1} - \textbf{Clip}(g(\mathbf{w}_t;\xi _0),C_5))+ \mathcal{N}(0,\sigma_{s_t}^2)$
    \STATE Update $\mathbf{w}_{t+1} = \mathbf{w}_t - \eta \mathbf{z}_t$, where $\mathbf{z}_t$ is given below:
    \STATE $\mathbf{z}_t = (\nabla f(s_t) \mathbf{v}_t^{\top}, \nabla f(s_t) u_t + \log(s_t) + \rho)^{\top}$

    
\ENDFOR

\STATE \textbf{return} $(\mathbf{w}_{\tau}, \mathbf{v}_{\tau}, u_{\tau}, s_{\tau})$, where $\tau \sim [T]$
\end{algorithmic}
\end{algorithm}
\section{Experiments}
\subsection{Experimental Setup}
\textbf{Dataset}
We consider 4  datasets that are widely used for studies in DRO: CIFAR10-ST  \citep{qi2022stochastic}, MNIST-ST \citep{zhang2025improved}, CelebA \citep{sagawa2019distributionally} and Fashion-MNIST. For constructing CIFAR10-ST, we artificially create imbalanced training data, where we only keep the last 100 images of each class for the first half of the classes, while keeping other classes and the test data unchanged. For constructing MNIST-ST, we convert MNIST into a binary task by mapping digits 0-4 $\to $ to class 0 and 5-9 $\to $ to class 1, and artificially create imbalanced training data by subsampling the positive class to a target imratio  $\in (0, 0.5]$ (fraction of positives); the test split is kept balanced (imratio = 0.5), and training batches are sampled with the same ratio to match the desired imbalance. 
 

\noindent\textbf{Baselines}
For the comparison of test accuracy, we compare different algorithms for optimizing the minimax problem \eqref{penalized}, including DP-SGDA \citep{rafique2022weakly} and Private Diff \citep{zhang2025improved}, across different privacy budgets $\varepsilon$ on the three above-mentioned datasets. Besides, we take SCDRO \citep{zhang2023stochastic} as the non-private baseline, which is state-of-the-art.

\noindent\textbf{Evaluation Metrics} The evaluation of the selected four algorithms utilizes four primary metrics to assess utility, privacy, and optimization stability. The gradient norm serves as the central metric for measuring utility in terms of convergence to first-order stationary points. Predictive performance is evaluated via test accuracy across varying privacy budgets. To provide a more nuanced view of classifier performance, the Area Under the Curve (ROC AUC) and the F1 score are employed, with the latter specifically highlighting the balance between precision and recall in imbalanced scenarios. Finally, the Membership Inference Attack (MIA) AUC is used as an empirical measure of privacy robustness, where lower values signify superior protection against training data leakage.

\noindent\textbf{Parameter Setup}
We will use ResNet20 for our model. To make a fair comparison, we will consider DRO with KL-divergence. The constrained parameter $\rho$ is set to be $ 0.5$ and  $\lambda_0$ is set to $1e-3$.. 
For all experiments, the batch size is 128. 
 In addition, we set privacy budget $\varepsilon=\{0.5,1,5,10\}$ with $\delta=\frac{1}{n^{1.1}}$.  


\paragraph{MIA Experiments}
We evaluate the privacy robustness of different differentially private optimization algorithms using membership inference attacks (MIA) on CIFAR10-ST. Experiments are conducted under privacy budgets $\varepsilon \in \{0.1, 1, 3, 5, 8, 10\}$, covering both high-privacy and low-privacy regimes. For each algorithm–privacy budget configuration, we perform five independent runs with different random seeds and report results as mean $\pm$ standard deviation. We focus on two DP DRO methods: DP Double SPIDER and DP Recursive SPIDER. Privacy leakage is evaluated using a standard black-box MIA, where the attacker infers whether a data point was part of the training set based solely on the model’s output confidence. Attack performance is measured by the area under the ROC curve (AUC).

\subsection{Experimental Results}

We conduct experiments on all four datasets and present representative results for three of them here. Additional experimental results are provided in the Appendix.

\paragraph{Test Accuracy Analysis} Table \ref{exp_res} illustrates that our proposed algorithms, DP Double-SPIDER (DP DS) and DP Recursive-SPIDER (RS DRO), consistently outperform the baseline methods (DP-SGDA and PrivateDiff) across nearly all datasets and privacy budget settings. The only exception is the MNIST-ST dataset, where all four algorithms achieve high accuracy ($\approx 99\%$), reaching a performance plateau that makes them effectively indistinguishable. 

Due to the absence of a pre-trained base model, the absolute accuracy on CIFAR10-ST is lower than on other benchmarks. However, the performance gap between our methods and the baselines is most pronounced on this dataset, particularly under stringent privacy constraints (e.g., $\varepsilon = 0.5$). 

\begin{figure*}[htbp]
    \centering
    \includegraphics[width=0.45\linewidth]{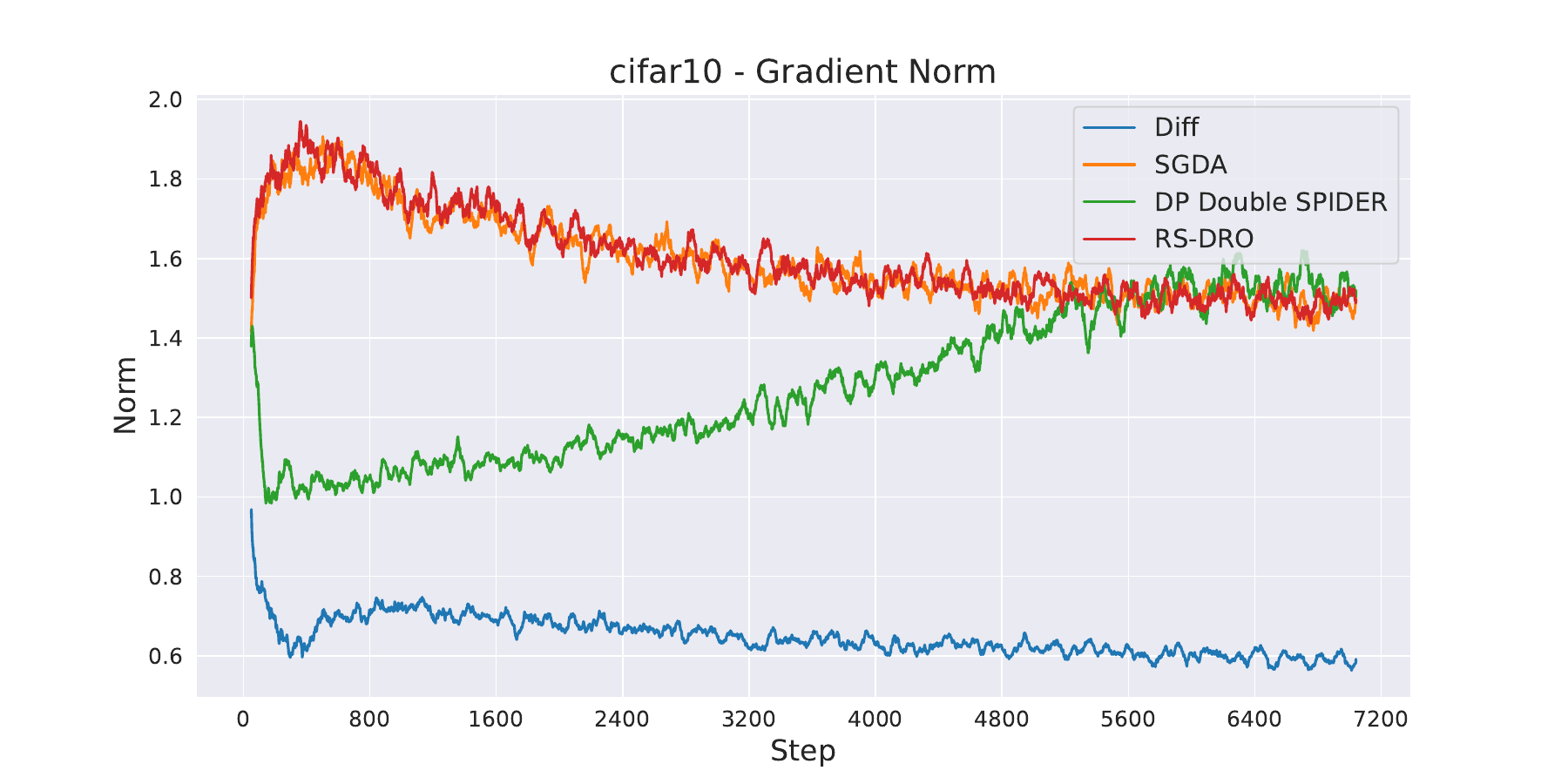}
    \hfill
    \includegraphics[width=0.45\linewidth]{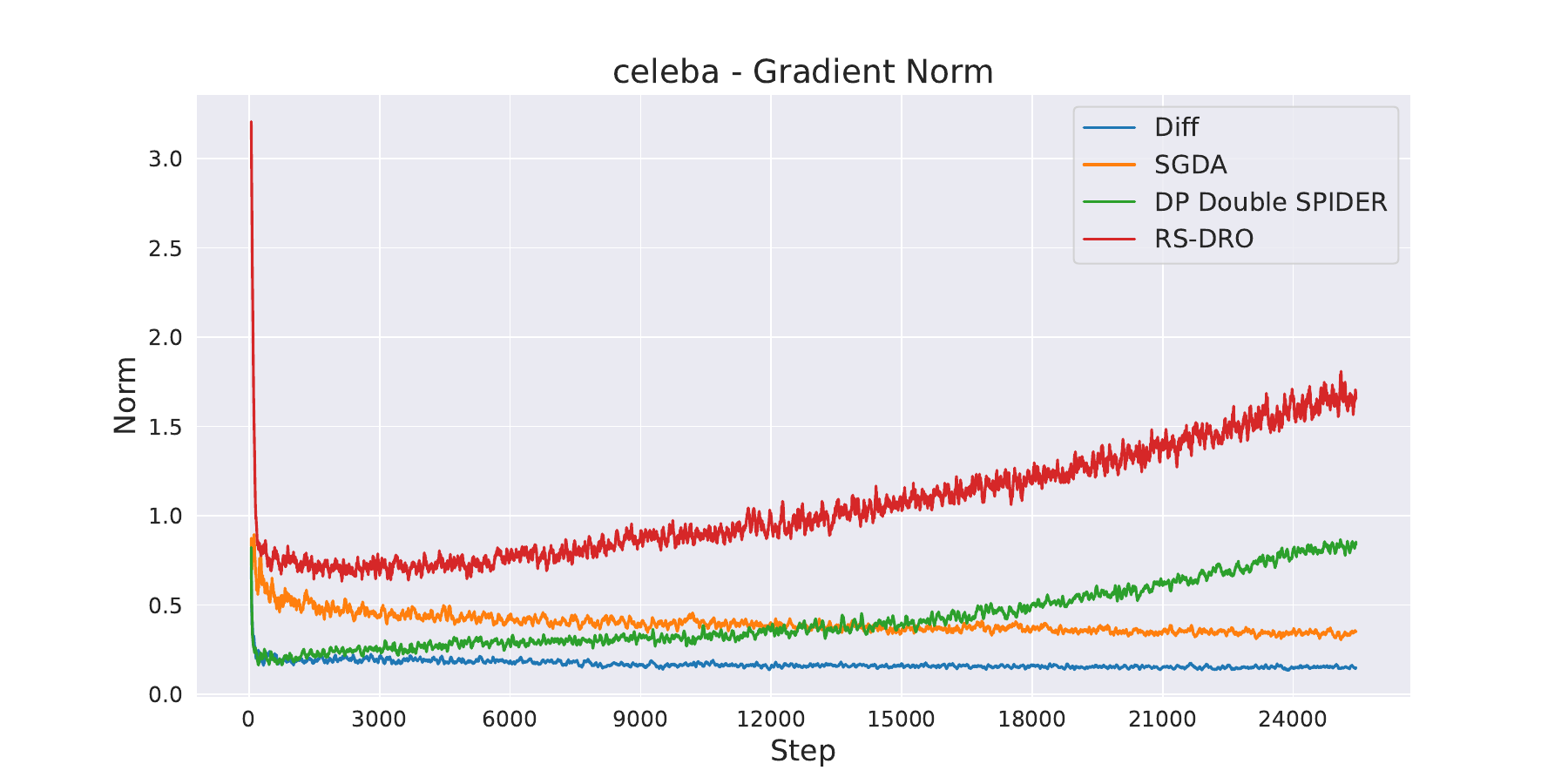}
    
    \vspace{0.5cm}
    
    \includegraphics[width=0.45\linewidth]{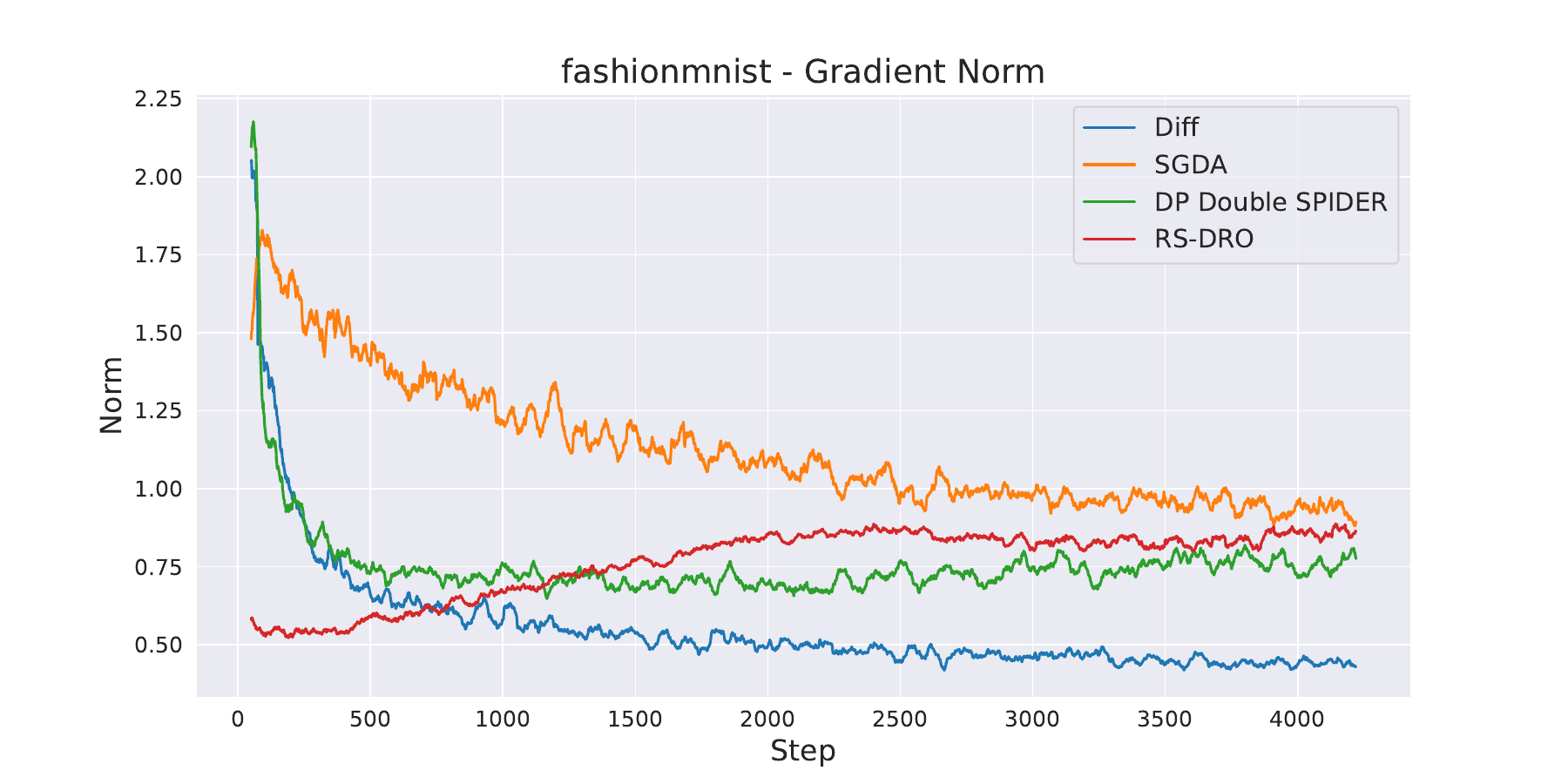}
    \hfill
    \includegraphics[width=0.45\linewidth]{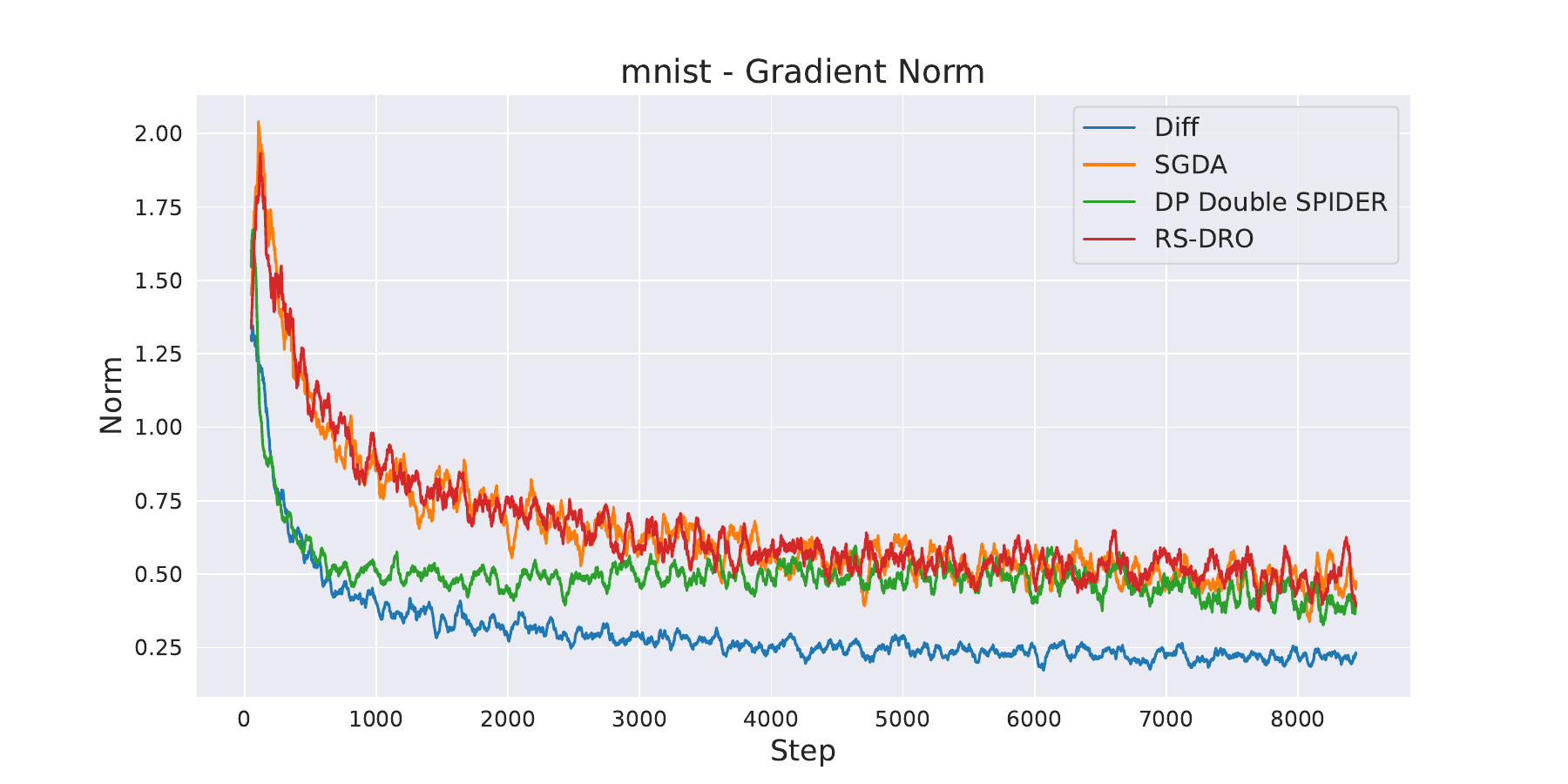}
    
    \caption{Experimental Results: The performances of four algorithms on CIFAR10-ST, CelebA, Fashion-MNIST, MNIST-ST respectively}
    \label{fig:combined}
\end{figure*}

We draw several key observations from these results:

\begin{itemize}
    \item \textbf{Performance Across Datasets:} On CIFAR10-ST, CelebA, and Fashion-MNIST, our proposed algorithms maintain a clear and significant lead over the baselines. This suggests that our variance-reduction approach is more effective for complex data distributions where gradient noise typically hinders convergence.
    
    \item \textbf{Impact of Privacy Budget $\varepsilon$:} As expected, the test accuracy for all algorithms improves as the privacy budget $\varepsilon$ increases. Notably, our algorithms exhibit superior stability; they provide high utility even with very small $\varepsilon$, whereas the baseline methods suffer significantly more from the increased noise required for higher privacy levels.
    
    \item \textbf{Sensitivity Comparison:} The relative improvement in accuracy as $\varepsilon$ increases from 0.5 to 10 is more substantial for DP-SGDA and PrivateDiff, especially on CIFAR10-ST. This high sensitivity indicates that these baseline methods are less robust to the noise injection process, while our methods achieve near-optimal utility even with limited privacy budgets.
\end{itemize}

\paragraph{Gradient Norm Analysis.}
Figure~\ref{fig:combined} illustrates the evolution of gradient norms on CIFAR10-ST, CelebA, Fashion-MNIST, and MNIST-ST, comparing DP-SGD and SGDA with the proposed DP Double SPIDER and DP Recursive SPIDER methods. Across most datasets, the baseline methods exhibit relatively large gradient norms and noticeable oscillations throughout the training process. In particular, DP-SGDA often shows slow convergence with persistent variance, while SGDA displays pronounced fluctuations, especially in later iterations. These behaviors indicate that privacy-induced noise and stochastic saddle-point dynamics hinder stable convergence to stationary points.

In contrast, DP Double SPIDER consistently achieves faster decay and substantially lower variance in gradient norms, reflecting the effectiveness of variance reduction via gradient-difference estimators. DP Recursive SPIDER further enhances stability, maintaining uniformly small gradient norms with minimal oscillations, particularly in the later stages where baseline methods plateau or become unstable. Overall, these results demonstrate that DP Double SPIDER and DP Recursive SPIDER provide significantly more reliable and stable optimization dynamics under differential privacy, leading to improved convergence toward first-order stationary solutions in private nonconvex--strongly-concave optimization.

\paragraph{MIA Experiments} The results of the Membership Inference Attack (MIA) are summarized in Table~\ref{tab:mia_results}. The empirical data demonstrates that DP Recursive SPIDER consistently outperforms DP Double SPIDER, achieving significantly lower AUC values across all evaluated privacy budgets $\varepsilon$. While DP Double SPIDER exhibits high and nearly static privacy leakage (AUC $\approx 0.97$) regardless of the privacy budget, DP Recursive SPIDER provides substantially better protection, with AUC values ranging between $0.78$ and $0.82$. Furthermore, DP Recursive SPIDER shows greater sensitivity to variations in $\varepsilon$, whereas DP Double SPIDER remains largely unaffected by changes in the privacy budget within the tested range. Overall, the superior privacy-preserving performance of DP Recursive SPIDER is evident across all settings, which aligns with our theoretical analysis.

\section{Conclusion}

We investigate the Distributionally Robust Optimization (DRO) problem with differential privacy guarantees for Empirical Risk Minimization (ERM) with non-convex loss functions. Specifically, we first analyze the Double SPIDER DRO algorithm, where the objective function is reformulated as a minimization problem via primal-dual transformation. Subsequently, under the same setting, we recast the original objective as a compositional problem; here, the inner component is a finite-sum loss function, a structure that facilitates the application of efficient optimization techniques. Leveraging this, we introduce a Recursive Double SPIDER algorithm inspired by STORM, which achieves improved error bounds. Finally, we evaluate our methods against baselines on four datasets, whose empirical results demonstrate superior performance and corroborate our theoretical analysis.


\bibliographystyle{named}
\bibliography{ijcai26}

@article{fu2025short,
  title={Short-length Adversarial Training Helps LLMs Defend Long-length Jailbreak Attacks: Theoretical and Empirical Evidence},
  author={Fu, Shaopeng and Ding, Liang and Zhang, Jingfeng and Wang, Di},
  journal={arXiv preprint arXiv:2502.04204},
  year={2025}
}

@article{fu2023theoretical,
  title={Theoretical analysis of robust overfitting for wide DNNs: An NTK approach},
  author={Fu, Shaopeng and Wang, Di},
  journal={arXiv preprint arXiv:2310.06112},
  year={2023}
}

@article{xu2025beyond,
  title={Beyond Ordinary Lipschitz Constraints: Differentially Private Stochastic Optimization with Tsybakov Noise Condition},
  author={Xu, Difei and Ding, Meng and Xiang, Zihang and Xu, Jinhui and Wang, Di},
  journal={arXiv preprint arXiv:2509.04668},
  year={2025}
}

@inproceedings{huai2020pairwise,
  title={Pairwise learning with differential privacy guarantees},
  author={Huai, Mengdi and Wang, Di and Miao, Chenglin and Xu, Jinhui and Zhang, Aidong},
  booktitle={Proceedings of the AAAI Conference on Artificial Intelligence},
  volume={34},
  number={01},
  pages={694--701},
  year={2020}
}

@article{xiang2023practical,
  title={Practical differentially private and byzantine-resilient federated learning},
  author={Xiang, Zihang and Wang, Tianhao and Lin, Wanyu and Wang, Di},
  journal={Proceedings of the ACM on Management of Data},
  volume={1},
  number={2},
  pages={1--26},
  year={2023},
  publisher={ACM New York, NY, USA}
}

@article{tao2025second,
  title={Second-Order Convergence in Private Stochastic Non-Convex Optimization},
  author={Tao, Youming and Zhang, Zuyuan and Yu, Dongxiao and Cheng, Xiuzhen and Dressler, Falko and Wang, Di},
  journal={arXiv preprint arXiv:2505.15647},
  year={2025}
}

@inproceedings{wangprivate,
  title={Private Training Large-scale Models with Efficient DP-SGD},
  author={Wang, Liangyu and Wang, Junxiao and Ren, Jie and Xiang, Zihang and Keyes, David E and Wang, Di},
  booktitle={The Thirty-ninth Annual Conference on Neural Information Processing Systems},
  year={2025} 
}

@article{zhang2025towards,
  title={Towards user-level private reinforcement learning with human feedback},
  author={Zhang, Jiaming and Lei, Mingxi and Ding, Meng and Li, Mengdi and Xiang, Zihang and Xu, Difei and Xu, Jinhui and Wang, Di},
  journal={arXiv preprint arXiv:2502.17515},
  year={2025}
}

@inproceedings{xue2021differentially,
  title={Differentially Private Pairwise Learning Revisited},
  author={Xue, Zhiyu and Yang, Shaoyang and Huai, Mengdi and Wang, Di},
  booktitle={30th International Joint Conference on Artificial Intelligence, IJCAI 2021},
  pages={3242--3248},
  year={2021},
  organization={International Joint Conferences on Artificial Intelligence Organization}
}

@inproceedings{hu2022high,
  title={High dimensional differentially private stochastic optimization with heavy-tailed data},
  author={Hu, Lijie and Ni, Shuo and Xiao, Hanshen and Wang, Di},
  booktitle={Proceedings of the 41st ACM SIGMOD-SIGACT-SIGAI Symposium on Principles of Database Systems},
  pages={227--236},
  year={2022}
}

@inproceedings{wang2020escaping,
  title={Escaping saddle points of empirical risk privately and scalably via dp-trust region method},
  author={Wang, Di and Xu, Jinhui},
  booktitle={Joint European Conference on Machine Learning and Knowledge Discovery in Databases},
  pages={90--106},
  year={2020},
  organization={Springer}
}

@inproceedings{wang2019differentially11,
  title={Differentially private empirical risk minimization with smooth non-convex loss functions: A non-stationary view},
  author={Wang, Di and Xu, Jinhui},
  booktitle={Proceedings of the AAAI Conference on Artificial Intelligence},
  volume={33},
  number={01},
  pages={1182--1189},
  year={2019}
}

@article{zhou2024differentially,
  title={Differentially private worst-group risk minimization},
  author={Zhou, Xinyu and Bassily, Raef},
  journal={arXiv preprint arXiv:2402.19437},
  year={2024}
}

@inproceedings{dwork2006calibrating,
  title={Calibrating noise to sensitivity in private data analysis},
  author={Dwork, Cynthia and McSherry, Frank and Nissim, Kobbi and Smith, Adam},
  booktitle={Theory of cryptography conference},
  pages={265--284},
  year={2006},
  organization={Springer}
}

@article{zhang2022bring,
  title={Bring your own algorithm for optimal differentially private stochastic minimax optimization},
  author={Zhang, Liang and Thekumparampil, Kiran K and Oh, Sewoong and He, Niao},
  journal={Advances in Neural Information Processing Systems},
  volume={35},
  pages={35174--35187},
  year={2022}
}

@article{cutkosky2019momentum,
  title={Momentum-based variance reduction in non-convex sgd},
  author={Cutkosky, Ashok and Orabona, Francesco},
  journal={Advances in neural information processing systems},
  volume={32},
  year={2019}
}

@inproceedings{abadi2016deep,
  title={Deep learning with differential privacy},
  author={Abadi, Martin and Chu, Andy and Goodfellow, Ian and McMahan, H Brendan and Mironov, Ilya and Talwar, Kunal and Zhang, Li},
  booktitle={Proceedings of the 2016 ACM SIGSAC conference on computer and communications security},
  pages={308--318},
  year={2016}
}

@inproceedings{zhao2023differentially,
  title={Differentially private temporal difference learning with stochastic nonconvex-strongly-concave optimization},
  author={Zhao, Canzhe and Ze, Yanjie and Dong, Jing and Wang, Baoxiang and Li, Shuai},
  booktitle={Proceedings of the Sixteenth ACM International Conference on Web Search and Data Mining},
  pages={985--993},
  year={2023}
}

@inproceedings{bassily2014private,
  title={Private empirical risk minimization: Efficient algorithms and tight error bounds},
  author={Bassily, Raef and Smith, Adam and Thakurta, Abhradeep},
  booktitle={2014 IEEE 55th annual symposium on foundations of computer science},
  pages={464--473},
  year={2014},
  organization={IEEE}
}

@article{fang2018spider,
  title={Spider: Near-optimal non-convex optimization via stochastic path-integrated differential estimator},
  author={Fang, Cong and Li, Chris Junchi and Lin, Zhouchen and Zhang, Tong},
  journal={Advances in neural information processing systems},
  volume={31},
  year={2018}
}

@article{selvi2025differential,
  title={Differential privacy via distributionally robust optimization},
  author={Selvi, Aras and Liu, Huikang and Wiesemann, Wolfram},
  journal={Operations Research},
  year={2025},
  publisher={INFORMS}
}

@article{bassily2019private,
  title={Private stochastic convex optimization with optimal rates},
  author={Bassily, Raef and Feldman, Vitaly and Talwar, Kunal and Guha Thakurta, Abhradeep},
  journal={Advances in neural information processing systems},
  volume={32},
  year={2019}
}

@article{bassily2021differentially,
  title={Differentially private stochastic optimization: New results in convex and non-convex settings},
  author={Bassily, Raef and Guzm{\'a}n, Crist{\'o}bal and Menart, Michael},
  journal={Advances in Neural Information Processing Systems},
  volume={34},
  pages={9317--9329},
  year={2021}
}

@inproceedings{su2023differentially,
  title={Differentially private stochastic convex optimization in (non)-Euclidean space revisited},
  author={Su, Jinyan and Zhao, Changhong and Wang, Di},
  booktitle={Uncertainty in Artificial Intelligence},
  pages={2026--2035},
  year={2023},
  organization={PMLR}
}

@inproceedings{tao2022private,
  title={Private Stochastic Convex Optimization and Sparse Learning with Heavy-tailed Data Revisited.},
  author={Tao, Youming and Wu, Yulian and Cheng, Xiuzhen and Di Wang},
  booktitle={IJCAI},
  pages={3947--3953},
  year={2022}
}

@article{wang2020empirical,
  title={Empirical risk minimization in the non-interactive local model of differential privacy},
  author={Wang, Di and Gaboardi, Marco and Smith, Adam and Xu, Jinhui},
  journal={Journal of machine learning research},
  volume={21},
  number={200},
  pages={1--39},
  year={2020}
}

@inproceedings{wang2020differentially,
  title={On differentially private stochastic convex optimization with heavy-tailed data},
  author={Wang, Di and Xiao, Hanshen and Devadas, Srinivas and Xu, Jinhui},
  booktitle={International Conference on Machine Learning},
  pages={10081--10091},
  year={2020},
  organization={PMLR}
}

@article{wang2017differentially,
  title={Differentially private empirical risk minimization revisited: Faster and more general},
  author={Wang, Di and Ye, Minwei and Xu, Jinhui},
  journal={Advances in Neural Information Processing Systems},
  volume={30},
  year={2017}
}

@inproceedings{wang2019differentially,
  title={Differentially private empirical risk minimization with non-convex loss functions},
  author={Wang, Di and Chen, Changyou and Xu, Jinhui},
  booktitle={International Conference on Machine Learning},
  pages={6526--6535},
  year={2019},
  organization={PMLR}
}

@article{zhou2020private,
  title={Private stochastic non-convex optimization: Adaptive algorithms and tighter generalization bounds},
  author={Zhou, Yingxue and Chen, Xiangyi and Hong, Mingyi and Wu, Zhiwei Steven and Banerjee, Arindam},
  journal={arXiv preprint arXiv:2006.13501},
  year={2020}
}

@inproceedings{xiao2023theory,
  title={A theory to instruct differentially-private learning via clipping bias reduction},
  author={Xiao, Hanshen and Xiang, Zihang and Wang, Di and Devadas, Srinivas},
  booktitle={2023 IEEE Symposium on Security and Privacy (SP)},
  pages={2170--2189},
  year={2023},
  organization={IEEE}
}

@inproceedings{arora2023faster,
  title={Faster rates of convergence to stationary points in differentially private optimization},
  author={Arora, Raman and Bassily, Raef and Gonz{\'a}lez, Tom{\'a}s and Guzm{\'a}n, Crist{\'o}bal A and Menart, Michael and Ullah, Enayat},
  booktitle={International Conference on Machine Learning},
  pages={1060--1092},
  year={2023},
  organization={PMLR}
}

@article{rafique2022weakly,
  title={Weakly-convex--concave min--max optimization: provable algorithms and applications in machine learning},
  author={Rafique, Hassan and Liu, Mingrui and Lin, Qihang and Yang, Tianbao},
  journal={Optimization Methods and Software},
  volume={37},
  number={3},
  pages={1087--1121},
  year={2022},
  publisher={Taylor \& Francis}
}

@article{duchi2021learning,
  title={Learning models with uniform performance via distributionally robust optimization},
  author={Duchi, John C and Namkoong, Hongseok},
  journal={The Annals of Statistics},
  volume={49},
  number={3},
  pages={1378--1406},
  year={2021},
  publisher={Institute of Mathematical Statistics}
}

@article{levy2020large,
  title={Large-scale methods for distributionally robust optimization},
  author={Levy, Daniel and Carmon, Yair and Duchi, John C and Sidford, Aaron},
  journal={Advances in neural information processing systems},
  volume={33},
  pages={8847--8860},
  year={2020}
}

@article{wang2021sinkhorn,
  title={Sinkhorn distributionally robust optimization},
  author={Wang, Jie and Gao, Rui and Xie, Yao},
  journal={arXiv preprint arXiv:2109.11926},
  year={2021}
}

@article{qi2021online,
  title={An online method for a class of distributionally robust optimization with non-convex objectives},
  author={Qi, Qi and Guo, Zhishuai and Xu, Yi and Jin, Rong and Yang, Tianbao},
  journal={Advances in Neural Information Processing Systems},
  volume={34},
  pages={10067--10080},
  year={2021}
}

@article{qi2022stochastic,
  title={Stochastic constrained dro with a complexity independent of sample size},
  author={Qi, Qi and Lyu, Jiameng and Bai, Er Wei and Yang, Tianbao and others},
  journal={arXiv preprint arXiv:2210.05740},
  year={2022}
}

@inproceedings{zhang2024large,
  title={Large-scale non-convex stochastic constrained distributionally robust optimization},
  author={Zhang, Qi and Zhou, Yi and Prater-Bennette, Ashley and Shen, Lixin and Zou, Shaofeng},
  booktitle={Proceedings of the AAAI Conference on Artificial Intelligence},
  volume={38},
  number={8},
  pages={8217--8225},
  year={2024}
}

@article{arora2022differentially,
  title={Differentially private generalized linear models revisited},
  author={Arora, Raman and Bassily, Raef and Guzm{\'a}n, Crist{\'o}bal and Menart, Michael and Ullah, Enayat},
  journal={Advances in neural information processing systems},
  volume={35},
  pages={22505--22517},
  year={2022}
}

@inproceedings{zhang2025improved,
  title={Improved rates of differentially private nonconvex-strongly-concave minimax optimization},
  author={Zhang, Ruijia and Lei, Mingxi and Ding, Meng and Xiang, Zihang and Xu, Jinhui and Wang, Di},
  booktitle={Proceedings of the AAAI Conference on Artificial Intelligence},
  volume={39},
  number={21},
  pages={22524--22532},
  year={2025}
}

@inproceedings{yang2022differentially,
  title={Differentially private sgda for minimax problems},
  author={Yang, Zhenhuan and Hu, Shu and Lei, Yunwen and Vashney, Kush R and Lyu, Siwei and Ying, Yiming},
  booktitle={Uncertainty in Artificial Intelligence},
  pages={2192--2202},
  year={2022},
  organization={PMLR}
}

@article{jin2021non,
  title={Non-convex distributionally robust optimization: Non-asymptotic analysis},
  author={Jin, Jikai and Zhang, Bohang and Wang, Haiyang and Wang, Liwei},
  journal={Advances in Neural Information Processing Systems},
  volume={34},
  pages={2771--2782},
  year={2021}
}

@inproceedings{blitzer2006domain,
  title={Domain adaptation with structural correspondence learning},
  author={Blitzer, John and McDonald, Ryan and Pereira, Fernando},
  booktitle={Proceedings of the 2006 conference on empirical methods in natural language processing},
  pages={120--128},
  year={2006}
}

@article{sagawa2019distributionally,
  title={Distributionally robust neural networks for group shifts: On the importance of regularization for worst-case generalization},
  author={Sagawa, Shiori and Koh, Pang Wei and Hashimoto, Tatsunori B and Liang, Percy},
  journal={arXiv preprint arXiv:1911.08731},
  year={2019}
}

@article{goodfellow2014explaining,
  title={Explaining and harnessing adversarial examples},
  author={Goodfellow, Ian J and Shlens, Jonathon and Szegedy, Christian},
  journal={arXiv preprint arXiv:1412.6572},
  year={2014}
}

@article{sinha2017certifying,
  title={Certifying some distributional robustness with principled adversarial training},
  author={Sinha, Aman and Namkoong, Hongseok and Volpi, Riccardo and Duchi, John},
  journal={arXiv preprint arXiv:1710.10571},
  year={2017}
}

@article{madry2017towards,
  title={Towards deep learning models resistant to adversarial attacks},
  author={Madry, Aleksander and Makelov, Aleksandar and Schmidt, Ludwig and Tsipras, Dimitris and Vladu, Adrian},
  journal={arXiv preprint arXiv:1706.06083},
  year={2017}
}

@article{ben2013robust,
  title={Robust solutions of optimization problems affected by uncertain probabilities},
  author={Ben-Tal, Aharon and Den Hertog, Dick and De Waegenaere, Anja and Melenberg, Bertrand and Rennen, Gijs},
  journal={Management Science},
  volume={59},
  number={2},
  pages={341--357},
  year={2013},
  publisher={INFORMS}
}

@article{shapiro2017distributionally,
  title={Distributionally robust stochastic programming},
  author={Shapiro, Alexander},
  journal={SIAM Journal on Optimization},
  volume={27},
  number={4},
  pages={2258--2275},
  year={2017},
  publisher={SIAM}
}

@article{rahimian2019distributionally,
  title={Distributionally robust optimization: A review},
  author={Rahimian, Hamed and Mehrotra, Sanjay},
  journal={arXiv preprint arXiv:1908.05659},
  year={2019}
}

@article{dwork2014algorithmic,
  title={The algorithmic foundations of differential privacy},
  author={Dwork, Cynthia and Roth, Aaron and others},
  journal={Foundations and trends{\textregistered} in theoretical computer science},
  volume={9},
  number={3--4},
  pages={211--407},
  year={2014},
  publisher={Now Publishers, Inc.}
}

@inproceedings{zhang2025revisiting,
  title={Revisiting large-scale non-convex distributionally robust optimization},
  author={Zhang, Qi and Zhou, Yi and Khan, Simon and Prater-Bennette, Ashley and Shen, Lixin and Zou, Shaofeng},
  booktitle={The Thirteenth International Conference on Learning Representations},
  year={2025}
}

@inproceedings{hsieh2021limits,
  title={The limits of min-max optimization algorithms: Convergence to spurious non-critical sets},
  author={Hsieh, Ya-Ping and Mertikopoulos, Panayotis and Cevher, Volkan},
  booktitle={International Conference on Machine Learning},
  pages={4337--4348},
  year={2021},
  organization={PMLR}
}

@article{zhang2019gradient,
  title={Why gradient clipping accelerates training: A theoretical justification for adaptivity},
  author={Zhang, Jingzhao and He, Tianxing and Sra, Suvrit and Jadbabaie, Ali},
  journal={arXiv preprint arXiv:1905.11881},
  year={2019}
}

@inproceedings{chen2023generalized,
  title={Generalized-smooth nonconvex optimization is as efficient as smooth nonconvex optimization},
  author={Chen, Ziyi and Zhou, Yi and Liang, Yingbin and Lu, Zhaosong},
  booktitle={International Conference on Machine Learning},
  pages={5396--5427},
  year={2023},
  organization={PMLR}
}

@inproceedings{45428,
title	= {Deep Learning with Differential Privacy},
author	= {Martin Abadi and Andy Chu and Ian Goodfellow and Brendan McMahan and Ilya Mironov and Kunal Talwar and Li Zhang},
year	= {2016},
URL	= {https://arxiv.org/abs/1607.00133},
booktitle	= {23rd ACM Conference on Computer and Communications Security (ACM CCS)},
pages	= {308-318}}

@article{xu2019non,
  title={Non-asymptotic analysis of stochastic methods for non-smooth non-convex regularized problems},
  author={Xu, Yi and Jin, Rong and Yang, Tianbao},
  journal={Advances in Neural Information Processing Systems},
  volume={32},
  year={2019}
}

@article{zhang2023stochastic,
  title={Stochastic approximation approaches to group distributionally robust optimization},
  author={Zhang, Lijun and Zhao, Peng and Zhuang, Zhen-Hua and Yang, Tianbao and Zhou, Zhi-Hua},
  journal={Advances in Neural Information Processing Systems},
  volume={36},
  pages={52490--52522},
  year={2023}
}

@article{duchi2019variance,
  title={Variance-based regularization with convex objectives},
  author={Duchi, John and Namkoong, Hongseok},
  journal={Journal of Machine Learning Research},
  volume={20},
  number={68},
  pages={1--55},
  year={2019}
}

@article{duchi2021statistics,
  title={Statistics of robust optimization: A generalized empirical likelihood approach},
  author={Duchi, John C and Glynn, Peter W and Namkoong, Hongseok},
  journal={Mathematics of Operations Research},
  volume={46},
  number={3},
  pages={946--969},
  year={2021},
  publisher={INFORMS}
}

@article{dentcheva2017statistical,
  title={Statistical estimation of composite risk functionals and risk optimization problems},
  author={Dentcheva, Darinka and Penev, Spiridon and Ruszczy{\'n}ski, Andrzej},
  journal={Annals of the Institute of Statistical Mathematics},
  volume={69},
  number={4},
  pages={737--760},
  year={2017},
  publisher={Springer}
}

@article{selvi2304differential,
  title={Differential Privacy via Distributionally Robust Optimization. arXiv 2023},
  author={Selvi, A and Liu, H and Wiesemann, W},
  journal={arXiv preprint arXiv:2304.12681}
}

\clearpage

\appendix

\setcounter{page}{1}

\onecolumn

\section{Related Work}
\label{sec:rel}

\textbf{DRO}
Depending on how the uncertain variables are controlled, DRO formulations can be categorized as either constrained, imposing explicit limits on the uncertain variables or regularized, penalizing them through a regularization term in the objective \citep{levy2020large}. \cite{duchi2019variance} further showed that a DRO problem with quadratic regularization in constraint form is equivalent to minimizing the empirical loss augmented by a variance regularization term on itself.

Many primal–dual methods for min–max optimization \citep{duchi2021learning,levy2020large,wang2021sinkhorn,duchi2021statistics} assume convexity of the loss function, which restricts their applicability to the broader class of non-convex machine learning problems. For non-convex DRO, existing works \citep{qi2021online,qi2022stochastic,zhang2024large} typically impose boundedness or even stronger assumptions on the loss \(\ell\), further limiting their practical relevance.

The connection between compositional functions and DRO formulations has been recognized and exploited in prior work. \cite{dentcheva2017statistical} investigated the statistical estimation of compositional functionals with applications to conditional-value-at-risk (CVaR) measures, which are closely related to CVaR-constrained DRO, though without considering stochastic optimization algorithms. To the best of our knowledge, \cite{qi2021online} first applied stochastic compositional optimization to KL-regularized DRO problems, while \cite{qi2022stochastic} extended this framework to the more challenging KL-constrained setting.
 

\noindent\textbf{DP-DRO}  As we mentioned, currently there is no work on DP for finite-sum DRO with general $\psi$-divergence, which has been widely adopted in the deep learning community. Recent work on DP-DRO only considered a specific class. In detail, \cite{selvi2304differential} develops a class of mechanisms that enjoy non-asymptotic and unconditional optimality guarantees. \citep{zhou2024differentially} investigates worst-group risk minimization (i.e., group DRO) under differential privacy, proposing algorithms that minimize the maximum risk across multiple sub-populations while preserving privacy. It attempts to privatize the given non-private algorithm for convex loss functions. They reach near-optimal guarantees, introducing simple private algorithms. However, in practice, their methods cannot be applied to our continuous uncertainty set. 

\noindent\textbf{DP Minimax Optimization} 
As DRO can be regarded as a constrained minimax problem. Therefore, our work is related to the recent methods for DP minimax optimization~\cite{yang2022differentially,zhang2025improved,zhang2022bring,zhao2023differentially}. For example, \cite{yang2022differentially} analyzes DP-SGDA for stochastic minimax learning using an algorithmic-stability framework. They reach the near-optimal bound for convex-concave objectives using SGDA.  \cite{zhang2025improved} study differentially private stochastic minimax optimization in the nonconvex-strongly-concave (NC-SC) regime and give the first general results beyond convex/PL settings. However, without tailored algorithmic design, standard minimax optimization methods cannot be directly applied to DRO problems, rendering the two frameworks fundamentally incomparable. Under the DRO constraint, plain Stochastic Gradient Descent–Ascent (SGDA) fails in the non-convex setting, as it cannot guarantee convergence to an optimal point without the convex–concave structure \citep{hsieh2021limits} or additional structural assumptions on the optimization variables. 

However, there are two limitations that make these work unsuitable for our DRO problem. First, previous work only considers either convex-(strongly) concave or non-convex-strongly-concave case, while for DRO with $\psi$-divergence, it is a non-convex-concave problem.  Thus, previous methods cannot be used for analyzing our problem. Second, directly using previous methods to our problem requires performing the project onto the simplex in each update, which is prohibitively costly. In our work, we address these limitations and we further improve the utility bound for DRO compared to previous methods.

\section{Lemmas}
\begin{lemma}\label{partial_smoothness}(Partially generalized $(L_0, L_1, L_2)$-smoothness)\citep{zhang2025revisiting}.  
Under Assumptions 1 and 2, $\mathcal{L}(x,\eta)$ is $(L_0, L_1)$-partially smooth in $x$ 
and $L_2$-smooth in $\eta$ such that for any $x, x' \in \mathbb{R}^d$ and $\eta, \eta' \in \mathbb{R}$ we have that
\begin{equation}
\| \nabla_x \mathcal{L}(x,\eta) - \nabla_x \mathcal{L}(x',\eta) \| 
\le \big( L_0 + L_1 |\nabla_\eta \mathcal{L}(x,\eta)| \big) \| x - x' \|,
\end{equation}

\begin{equation}
| \nabla_\eta \mathcal{L}(x,\eta) - \nabla_\eta \mathcal{L}(x,\eta') | \le L_2 |\eta - \eta'|,
\end{equation}
where $L_0 = G + \frac{G^2 M}{\lambda}$, $L_1 = \frac{L}{G}$, and $L_2 = \frac{G^2 M}{\lambda}$.
\end{lemma}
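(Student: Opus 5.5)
The plan is a direct chain-rule computation on the per-sample function $\mathcal{L}(x,\eta;\xi)=\lambda\psi^*\big(\frac{\ell(x;\xi)-\eta}{\lambda}\big)+\eta$, followed by averaging over the $n$ samples. Both inequalities are preserved under averaging, since the right-hand sides are linear in the per-sample quantities or constant. So it suffices to treat one sample. I will write $u(x,\eta)=\frac{\ell(x;\xi)-\eta}{\lambda}$. The two partial gradients are then
\begin{equation*}
\nabla_x\mathcal{L}=(\psi^*)'(u)\,\nabla\ell(x;\xi),\qquad \nabla_\eta\mathcal{L}=1-(\psi^*)'(u).
\end{equation*}

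\textbf{The $\eta$-smoothness.} This part is immediate. By Assumption~\ref{smooth},
\begin{equation*}
|\nabla_\eta\mathcal{L}(x,\eta)-\nabla_\eta\mathcal{L}(x,\eta')|=|(\psi^*)'(u)-(\psi^*)'(u')|\le M|u-u'|=\tfrac{M}{\lambda}|\eta-\eta'|.
\end{equation*}
The rescaling $\mathcal{L}(x,\eta)=\hat{\mathcal{L}}(x,G\eta)$ introduced before Algorithm~\ref{dspider} multiplies this by $G^2$. That produces the stated constant $L_2=G^2M/\lambda$, up to how the $G$-normalization of $\eta$ is tracked.

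\textbf{The $x$-smoothness.} This is the main step. I would split the difference as
\begin{equation*}
\nabla_x\mathcal{L}(x,\eta)-\nabla_x\mathcal{L}(x',\eta)=(\psi^*)'(u)\big(\nabla\ell(x)-\nabla\ell(x')\big)+\big((\psi^*)'(u)-(\psi^*)'(u')\big)\nabla\ell(x').
\end{equation*}
The second term is handled by Assumptions~\ref{lipschitz} and \ref{smooth}: it is at most $M\cdot\frac{G}{\lambda}\|x-x'\|\cdot G=\frac{G^2M}{\lambda}\|x-x'\|$.

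The first term is at most $|(\psi^*)'(u)|\,L\|x-x'\|$, and the task is to express $|(\psi^*)'(u)|$ through $|\nabla_\eta\mathcal{L}|$. Since $(\psi^*)'(u)=1-\nabla_\eta\mathcal{L}$ at the per-sample level, we get $|(\psi^*)'(u)|\le 1+|\nabla_\eta\mathcal{L}(x,\eta;\xi)|$. This gives a contribution $L(1+|\nabla_\eta\mathcal{L}|)\|x-x'\|$. After the $G$-rescaling of $\eta$, $\nabla_\eta$ picks up a factor $G$, which converts $L|\cdot|$ into $(L/G)|\nabla_\eta\mathcal{L}|$. That yields $L_1=L/G$, and the constant parts collect into $L_0$.

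\textbf{Main obstacle.} The hard step is passing from per-sample quantities to the averaged $|\nabla_\eta\mathcal{L}(x,\eta)|$. Averaging gives $\frac1n\sum_i|(\psi^*)'(u_i)|$ rather than $\big|\frac1n\sum_i(\psi^*)'(u_i)\big|$, and these differ in general. I would use that $\psi^*$ is nondecreasing, because $\psi(t)=+\infty$ for $t<0$ forces the conjugate's derivative to be nonnegative. Then $(\psi^*)'(u_i)\ge0$ for every $i$, so
\begin{equation*}
\tfrac1n\sum_i|(\psi^*)'(u_i)|=\tfrac1n\sum_i(\psi^*)'(u_i)=1-\nabla_\eta\mathcal{L}(x,\eta)\le 1+|\nabla_\eta\mathcal{L}(x,\eta)|.
\end{equation*}
This closes the argument. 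If constants do not match exactly, I would defer to the bookkeeping in \citep{zhang2025revisiting}, from which the lemma is quoted.
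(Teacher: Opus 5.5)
The paper gives no proof of this lemma; it quotes it from \citep{zhang2025revisiting}. Your chain-rule computation is therefore a self-contained substitute rather than a competing route, and it is correct. The $\eta$-part is exactly the $M$-smoothness of $\psi^*$, multiplied by the factor $G^2$ that comes from $\mathcal{L}(x,\eta)=\hat{\mathcal{L}}(x,G\eta)$. For the $x$-part, you split the difference so that $(\psi^*)'$ is evaluated at $x$, which matches $|\nabla_\eta\mathcal{L}(x,\eta)|$ on the right-hand side. You then use $(\psi^*)'\ge 0$, which follows from $\psi=+\infty$ on $(-\infty,0)$, to get $\frac1n\sum_i|(\psi^*)'(u_i)|=1-\nabla_\eta\mathcal{L}(x,\eta)/G$. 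That monotonicity step is what makes the aggregate bound work. Your opening claim that ``it suffices to treat one sample'' is false for the $x$-inequality; drop it in favour of the aggregate-level argument you finish with.

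What you should fix is the hedge about constants. Your computation gives
\[
\|\nabla_x\mathcal{L}(x,\eta)-\nabla_x\mathcal{L}(x',\eta)\|\le\Bigl(L+\frac{G^2M}{\lambda}+\frac{L}{G}\,|\nabla_\eta\mathcal{L}(x,\eta)|\Bigr)\|x-x'\|,
\]
that is, $L_0=L+G^2M/\lambda$. No bookkeeping can replace the leading $L$ by the $G$ in the statement.

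To see this, take $\psi^*(t)=t$. This is the conjugate of the indicator of $\{1\}$ (the ERM case), and it is $M$-smooth for every $M>0$. Then $\nabla_\eta\mathcal{L}\equiv 0$ and $\mathcal{L}(x,\eta)=\frac1n\sum_i\ell(x;\xi_i)$. The gradient of this can have Lipschitz constant exactly $L$; for example, $\ell(x)=\frac{G^2}{L}\log\cosh(Lx/G)$ in one dimension is $G$-Lipschitz and $L$-smooth with $\ell''(0)=L$. So the inequality with $L_0=G+G^2M/\lambda$ fails as soon as $L>G+G^2M/\lambda$.

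State your constant explicitly and note that the $G$ in $L_0$ should read $L$. The stated form follows from yours only when $L\le G$.
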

It is worth noting that $(L_0,L_1)$-smoothness can be defined in two ways: in one formulation, the inequality is required only when $|x-y|\leqslant \tfrac{1}{L_0}$ \cite{zhang2019gradient}, whereas in the other, no such restriction is imposed \citep{chen2023generalized}. As we will treat the dual variable and parameter variable separately, we will introduce partially generalized smoothness as shown in Lemma \ref{partial_smoothness} \citep{zhang2025revisiting}. 

\begin{lemma}\label{partial_noise}(Partially affine variance noise)\citep{zhang2025revisiting}.     
Under Assumptions 1, 2 and 3, for any $x \in \mathbb{R}^d$ and $\eta \in \mathbb{R}$, we have that
\begin{equation}
\mathbb{V}_{S \sim P_0}\!\big[\nabla_x \mathcal{L}(x,\eta,S)\big] 
\leq D_0 + D_1 (\nabla_\eta \mathcal{L}(x,\eta))^2,
\end{equation}
\begin{equation}
\mathbb{V}_{S \sim P_0}\!\big[\nabla_\eta \mathcal{L}(x,\eta,S)\big] \leq D_2,
\end{equation}
where $D_0 = 8 G^2 + 10 G^2 M^2 \lambda^{-2} \sigma^2$, $D_1 = 8$, and $D_2 = G^2 M^2 \lambda^{-2} \sigma^2$.
\end{lemma}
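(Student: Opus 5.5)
The plan is a direct computation from the per-sample gradients of the rescaled dual objective. Throughout I read $\sigma^2$ as an upper bound on the variance of the loss, $\mathbb{V}_{\xi\sim P_0}[\ell(x;\xi)]\le\sigma^2$ for all $x$; the lemma statement does not define $\sigma$, so this is my interpretation. Write
\[
u(\xi)=\frac{\ell(x;\xi)-G\eta}{\lambda}.
\]
Since $\mathcal{L}(x,\eta;\xi)=\lambda\psi^*(u(\xi))+G\eta$, the chain rule gives the two per-sample gradients
\[
\nabla_x\mathcal{L}(x,\eta;\xi)=\psi^{*\prime}(u(\xi))\,\nabla\ell(x;\xi),\qquad
\nabla_\eta\mathcal{L}(x,\eta;\xi)=G\bigl(1-\psi^{*\prime}(u(\xi))\bigr).
\]
Taking expectations of the second identity yields the relation I will use repeatedly:
\[
\mathbb{E}\,\psi^{*\prime}(u)=1-\frac{\nabla_\eta\mathcal{L}(x,\eta)}{G}.
\]

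\textbf{Step 1 ($\eta$-variance).} By the second identity, $\mathbb{V}[\nabla_\eta\mathcal{L}]=G^2\,\mathbb{V}[\psi^{*\prime}(u)]$. By Assumption~\ref{smooth}, $\psi^{*\prime}$ is $M$-Lipschitz. The variance is the minimal mean-squared deviation from a constant, so I compare against the constant $\psi^{*\prime}(\mathbb{E}u)$:
\[
\mathbb{V}[\psi^{*\prime}(u)]\le\mathbb{E}\bigl(\psi^{*\prime}(u)-\psi^{*\prime}(\mathbb{E}u)\bigr)^2\le M^2\,\mathbb{V}[u]=\frac{M^2}{\lambda^2}\,\mathbb{V}[\ell(x;\xi)]\le\frac{M^2\sigma^2}{\lambda^2}.
\]
This gives $D_2=G^2M^2\lambda^{-2}\sigma^2$.

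\textbf{Step 2 ($x$-variance).} I bound the variance by the second moment and split $\psi^{*\prime}(u)$ into its centered part $a=\psi^{*\prime}(u)-\mathbb{E}\psi^{*\prime}(u)$ and its mean $b=\mathbb{E}\psi^{*\prime}(u)=1-\nabla_\eta\mathcal{L}/G$. Assumption~\ref{lipschitz} gives $\|\nabla\ell\|\le G$, so
\[
\mathbb{V}[\nabla_x\mathcal{L}]\le\mathbb{E}\|(a+b)\nabla\ell\|^2\le 2G^2\,\mathbb{E}a^2+2G^2b^2.
\]
For the first term, $\mathbb{E}a^2=\mathbb{V}[\psi^{*\prime}(u)]\le M^2\sigma^2/\lambda^2$ by Step~1. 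For the second term, $b^2\le 2+2(\nabla_\eta\mathcal{L})^2/G^2$. Combining,
\[
\mathbb{V}[\nabla_x\mathcal{L}]\le 4G^2+2G^2M^2\lambda^{-2}\sigma^2+4(\nabla_\eta\mathcal{L})^2.
\]
Each constant here is no larger than the stated $D_0=8G^2+10G^2M^2\lambda^{-2}\sigma^2$ and $D_1=8$, so the claim follows.

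\textbf{Main obstacle.} The only delicate point is producing the affine dependence on $(\nabla_\eta\mathcal{L})^2$. The naive route would need $\psi^{*\prime}$ to be bounded, and under the paper's assumptions it is not. The fix is the identity $\mathbb{E}\psi^{*\prime}(u)=1-\nabla_\eta\mathcal{L}/G$, which converts the mean of the multiplier into the $\eta$-gradient. A secondary issue is the notational ambiguity in $\sigma$ (loss variance versus range) and the $G$-rescaling of $\eta$; I would fix the rescaled convention explicitly before starting the computation.
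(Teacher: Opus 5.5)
The paper gives no proof of this lemma; it is imported from \citep{zhang2025revisiting}, so there is no in-paper argument to compare against. Your direct computation is correct. With the rescaled convention $\mathcal{L}(x,\eta;\xi)=\lambda\psi^*(u)+G\eta$, the per-sample gradients are $\psi^{*\prime}(u)\nabla\ell(x;\xi)$ and $G(1-\psi^{*\prime}(u))$. The $M$-Lipschitzness of $\psi^{*\prime}$ transfers the loss variance to $\mathbb{V}[\psi^{*\prime}(u)]\le M^2\sigma^2/\lambda^2$. The identity $\mathbb{E}\psi^{*\prime}(u)=1-\nabla_\eta\mathcal{L}(x,\eta)/G$ is exactly what produces the affine term without needing $\psi^{*\prime}$ to be bounded.

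Your constants, $4G^2+2G^2M^2\lambda^{-2}\sigma^2$ and $4$, lie inside the stated $D_0$ and $D_1$, and they can be tightened further. Since $a$ is centered and $b$ is deterministic, $\mathbb{E}(a+b)^2=\mathbb{E}a^2+b^2$, so Step~2 actually yields $2G^2+G^2M^2\lambda^{-2}\sigma^2+2(\nabla_\eta\mathcal{L}(x,\eta))^2$.

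Two points you flagged as side issues are in fact essential. First, the hypothesis $\mathbb{V}_{\xi\sim P_0}[\ell(x;\xi)]\le\sigma^2$, uniformly in $x$, is genuinely needed and is not supplied by this paper. Its Assumption~3 concerns only the domain and sign of $\psi^*$. A $G$-Lipschitz, smooth loss can have variance growing with $\|x\|$, e.g.\ $\ell(x;\xi)=\langle\xi,x\rangle$ with $\|\xi\|\le G$. So $\sigma$ in the statement only has meaning under the bounded-variance assumption of the source.

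Second, fixing $\mathcal{L}(x,\eta)=\hat{\mathcal{L}}(x,G\eta)$ is what makes $D_1=8$ dimensionless and puts the factor $G^2$ into $D_2$. Without the rescaling you would get $D_1\propto G^2$ and a $D_2$ with no $G^2$. The paper's appendix mixes $\eta$ and $G\eta$, so committing to one convention before computing, as you propose, is the right move.
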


To analyze Algorithm \ref{dspider}, we further develop the following property of gradients 
$\nabla_x \mathcal{L}(x,\eta,s)$ and $\nabla_\eta \mathcal{L}(x,\eta,s)$.

\begin{lemma}\label{L_2 continuous}\citep{zhang2025revisiting}
    For any $x, x' \in \mathbb{R}^d$, $\eta, \eta' \in \mathbb{R}$ and $s \in \mathcal{S}$, 
$\nabla_\eta \mathcal{L}(x,\eta,s)$ is $L_2$-continuous in $x$:
\begin{equation}
\big| \nabla_\eta \mathcal{L}(x,\eta) - \nabla_\eta \mathcal{L}(x',\eta,s) \big|
\leq L_2 \| x - x' \|,
\end{equation}

\textit{and $\nabla_x \mathcal{L}(x,\eta)$ is $L_2$-continuous in $\eta$:}
\begin{equation}
\| \nabla_x \mathcal{L}(x,\eta,s) - \nabla_x \mathcal{L}(x,\eta',s) \|
\leq L_2 |\eta - \eta'|.
\end{equation}
\end{lemma}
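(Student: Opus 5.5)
The statement to prove is Lemma~\ref{L_2 continuous}: cross-Lipschitz continuity of $\nabla_\eta\mathcal{L}$ in $x$ and of $\nabla_x\mathcal{L}$ in $\eta$. I will prove it by differentiating the per-sample function directly. Write $u(x,\eta)=(\ell(x;s)-\eta)/\lambda$. From $\mathcal{L}(x,\eta;s)=\lambda\psi^*(u)+\eta$ we get
\[
\nabla_\eta \mathcal{L}(x,\eta;s)=1-(\psi^*)'(u(x,\eta)),\qquad \nabla_x\mathcal{L}(x,\eta;s)=(\psi^*)'(u(x,\eta))\,\nabla\ell(x;s).
\]
Each claimed inequality then reduces to controlling how $(\psi^*)'$ moves when only one argument of $u$ changes. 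The tools are Assumption~\ref{smooth} ($(\psi^*)'$ is $M$-Lipschitz) and Assumption~\ref{lipschitz}.

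\textbf{First inequality.} Fix $\eta$. By the $M$-smoothness of $\psi^*$,
\[
|\nabla_\eta\mathcal{L}(x,\eta;s)-\nabla_\eta\mathcal{L}(x',\eta;s)|\le M|u(x,\eta)-u(x',\eta)| = \tfrac{M}{\lambda}|\ell(x;s)-\ell(x';s)|\le \tfrac{MG}{\lambda}\|x-x'\|,
\]
where the last step uses $G$-Lipschitzness of $\ell$. The population version follows by averaging over samples, since averaging preserves Lipschitz bounds. The statement's mixed notation (population on one side, sample on the other) I read as both sides taken with the same $s$, or both averaged.

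\textbf{Second inequality.} Fix $x$. Here
\[
\|\nabla_x\mathcal{L}(x,\eta;s)-\nabla_x\mathcal{L}(x,\eta';s)\|=|(\psi^*)'(u(x,\eta))-(\psi^*)'(u(x,\eta'))|\,\|\nabla\ell(x;s)\|\le M\tfrac{|\eta-\eta'|}{\lambda}\,G .
\]
This uses $\|\nabla\ell\|\le G$, which follows from Assumption~\ref{lipschitz}.

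\textbf{Reconciling constants.} Both bounds come out as $\frac{GM}{\lambda}$, while the lemma states $L_2=\frac{G^2M}{\lambda}$. The gap is absorbed by the paper's rescaling $\mathcal{L}(x,\eta)=\hat{\mathcal{L}}(x,G\eta)$. In the second inequality, differentiating with the rescaled dual variable gives $u=(\ell-G\eta)/\lambda$, so the change in $u$ is $G|\eta-\eta'|/\lambda$, and the bound becomes $\frac{G^2M}{\lambda}|\eta-\eta'|$. In the first inequality, $\nabla_\eta$ picks up a factor $G$ by the chain rule, giving $\frac{G^2M}{\lambda}\|x-x'\|$. If instead one works with the unscaled $\hat{\mathcal L}$, the sharper constant $\frac{GM}{\lambda}$ still implies the stated bound whenever $G\ge1$.

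The main obstacle is therefore bookkeeping rather than analysis: tracking the $G$-rescaling consistently so that the stated $L_2$ is reached.
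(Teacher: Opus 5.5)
Your proof is correct and follows essentially the same route as the paper. Besides citing the source, the paper checks these cross-Lipschitz bounds in its omitted-proofs appendix by differentiating $\mathcal{L}$ directly, combining the $M$-smoothness of $\psi^*$ with the $G$-Lipschitzness of $\ell$, and using the rescaling $\mathcal{L}(x,\eta)=\hat{\mathcal{L}}(x,G\eta)$ to supply the extra factor of $G$ in $L_2=G^2M/\lambda$. You are right to treat the mixed population/sample notation as a typo, and bounding $\|\nabla\ell\|\le G$ in the second inequality gives exactly the stated $L_2$; the paper's companion lemma records $LM/\lambda$ at that step instead.
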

\begin{theorem}[\cite{45428}]\label{gaussian}
\label{thm:gaussian}
Let $\varepsilon, \delta \in (0,1]$ and $c$ be a universal constant. Let $D \in \mathcal{Y}^n$ be a dataset over some domain $\mathcal{Y}$, and let $h_1, \ldots, h_T : \mathcal{Y} \to \mathbb{R}^d$ be a series of (possibly adaptive) queries such that for any $y \in \mathcal{Y}$, $t \in [T]$, $\|h_t(y)\|_2 \leq \lambda_t$. Let 
$\sigma_t = c\lambda_t \sqrt{\frac{\log(1/\delta)}{\varepsilon}} \max\left\{\frac{1}{b}, \frac{\sqrt{T}}{n}\right\}.$
Then the algorithm which samples batches of size $B_1, \ldots, B_t$ of size $b$ uniformly at random and outputs $\frac{1}{b}\sum_{y \in B_t} h_t(y) + g_t$ for all $t \in [T]$ where $g_t \sim \mathcal{N}(0, I\sigma_t^2)$, is $(\varepsilon, \delta)$-DP.
\end{theorem}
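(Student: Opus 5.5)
The plan is the standard moments-accountant argument of \cite{45428}, phrased in R\'enyi DP (RDP) so that adaptive composition is immediate. I read the noise scale in Theorem~\ref{gaussian} as $\sigma_t = c\lambda_t \frac{\sqrt{\log(1/\delta)}}{\varepsilon}\max\{\frac{1}{b},\frac{\sqrt{T}}{n}\}$. This is the form used in Theorems~\ref{thm:1} and~\ref{thm:3}; the placement of the square root over $\varepsilon$ in the statement looks like a typo.

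\textbf{Step 1: one step as a subsampled Gaussian mechanism.} Fix $t$ and condition on the previous outputs, so that $h_t$ is a fixed query with $\|h_t(y)\|\le\lambda_t$. The map $B\mapsto \frac1b\sum_{y\in B}h_t(y)$ on a fixed batch has $\ell_2$-sensitivity at most $2\lambda_t/b$ under replace-one neighbors. After rescaling by $b/(2\lambda_t)$, step $t$ is a Gaussian mechanism with noise multiplier
\[
z=\frac{b\,\sigma_t}{2\lambda_t}=\frac{c}{2}\frac{\sqrt{\log(1/\delta)}}{\varepsilon}\max\Big\{1,\frac{b\sqrt{T}}{n}\Big\},
\]
applied to a uniformly random batch. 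The sampling rate is $\gamma=b/n$.

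\textbf{Step 2: RDP of the subsampled Gaussian.} I will invoke the known amplification bound (Abadi et al., Lemma 3; or Mironov et al.\ for sampling without replacement). It states that if $z\ge 1$, $\gamma\le 1/(16z)$ and $\alpha\le z^2\log(1/(\gamma z))$, then step $t$ is $(\alpha, C\gamma^2\alpha/z^2)$-RDP for an absolute constant $C$. If $\gamma$ is not small, I fall back on the unamplified bound $(\alpha,\alpha/(2z^2))$, which is what the $1/b$ branch of the max is for.

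\textbf{Step 3: composition and conversion.} By adaptive composition of RDP, the $T$ outputs are $(\alpha, CT\gamma^2\alpha/z^2)$-RDP. Converting to approximate DP gives $(\varepsilon',\delta)$-DP with
\[
\varepsilon' = \frac{CT\gamma^2\alpha}{z^2}+\frac{\log(1/\delta)}{\alpha-1}.
\]
The $\sqrt{T}/n$ branch gives $z\ge \frac c2\frac{\gamma\sqrt{T\log(1/\delta)}}{\varepsilon}$, hence $\frac{T\gamma^2}{z^2}\le\frac{4\varepsilon^2}{c^2\log(1/\delta)}$. Taking $\alpha=1+2\log(1/\delta)/\varepsilon$ yields $\varepsilon'\le \frac{O(1)}{c^2}\varepsilon+\frac{\varepsilon}{2}\le\varepsilon$ once $c$ is a large enough universal constant.

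It remains to check that this $\alpha$ lies in the admissible range $\alpha\le z^2\log(1/(\gamma z))$. Here the $1/b$ branch guarantees $z\ge \frac c2\sqrt{\log(1/\delta)}/\varepsilon$. Then $z^2\gtrsim c^2\log(1/\delta)/\varepsilon^2\ge \alpha$, since $\varepsilon\le1$, up to the logarithmic factor.

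\textbf{Main obstacle.} I expect the hardest part to be the regime bookkeeping in Step 2. The amplification lemma needs $\gamma z$ small and $\alpha$ bounded, so I must handle the corner where $\gamma\ge 1/(16z)$, i.e.\ $b$ is comparable to $n/\sqrt{T}$ or larger. In that corner I would discard amplification and use the plain composed bound $T\alpha/(2z^2)$. This works because there the max is attained at the $\sqrt{T}/n$ branch, so $z\gtrsim \gamma\sqrt{T}\sqrt{\log(1/\delta)}/\varepsilon$, and $\gamma\gtrsim 1/z$ gives $z^2\gtrsim \sqrt{T}\,z\sqrt{\log(1/\delta)}/\varepsilon$. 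Checking that this chain really closes with a uniform constant $c$, including for the degenerate factor $\log(1/(\gamma z))$, is where I expect to spend the effort. I would also remark that the sampling-without-replacement and Poisson versions differ only in constants.
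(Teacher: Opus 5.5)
The paper gives no proof of this statement; it is imported from Abadi et al. Your moments-accountant route is therefore the intended one. Your reading of the noise scale as $c\lambda_t\frac{\sqrt{\log(1/\delta)}}{\varepsilon}\max\{\frac1b,\frac{\sqrt T}{n}\}$ is correct. Steps 1--3 are sound wherever Lemma 3 of Abadi et al.\ applies, i.e.\ when $z\ge 1$ and $\gamma z<1/16$, with $\alpha$ rounded to an integer.

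The genuine gap is the complementary regime $\gamma z\ge 1/16$, and the fallback you propose there fails.

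\emph{Plain composition is too weak.} Unamplified composition gives $\varepsilon'\approx T\alpha/(2z^2)+\log(1/\delta)/(\alpha-1)$, which needs $z\gtrsim\sqrt{T\log(1/\delta)}/\varepsilon$. The prescribed noise only gives $z=\frac c2\frac{\sqrt{\log(1/\delta)}}{\varepsilon}\max\{1,\gamma\sqrt T\}$. So plain composition works only when $\gamma\gtrsim 1/c$ or $T\lesssim c^2$, and the corner condition forces neither.

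\emph{The algebra in your chain.} From $z\gtrsim\gamma\sqrt{T\log(1/\delta)}/\varepsilon$ and $\gamma z\gtrsim 1$ you only get $z^2\gtrsim\gamma z\sqrt{T\log(1/\delta)}/\varepsilon\gtrsim\sqrt{T\log(1/\delta)}/\varepsilon$. The factor $z$ you kept on the right-hand side has already been spent against $\gamma$.

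\emph{A concrete failure.} Take $\varepsilon=1$, $\log(1/\delta)=10$, $T=\gamma^{-4}$. Then $z\approx 1.6c/\gamma$ and $\gamma z\approx 1.6c$, so Lemma 3 never applies. Plain composition would need $c\gtrsim 1/\gamma$, which is unbounded as $\gamma\to0$.

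\emph{Adding noise does not rescue it.} Viewing the mechanism as post-processing of one with smaller noise $z_0<1/(16\gamma)$ fails, because the amplified argument needs $z_0\gtrsim\gamma\sqrt{T\log(1/\delta)}/\varepsilon\approx 3/\gamma$.

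\emph{The corner is not confined to the $\sqrt T/n$ branch.} With $\gamma\sqrt T=1/2$ one has $z=\frac c2\sqrt{\log(1/\delta)}/\varepsilon$. The corner then occurs whenever $\sqrt T\le 4c\sqrt{\log(1/\delta)}/\varepsilon$, while plain composition needs $\sqrt T\lesssim c$.

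The missing ingredient is an amplification bound with no restriction on $\gamma z$. Your choice $\alpha=1+2\log(1/\delta)/\varepsilon$, combined with the $1/b$ branch, already gives $\alpha/z^2=O(\varepsilon/c^2)$ when $\log(1/\delta)\ge 1$. So you are always in the regime $\alpha\le\kappa z^2$ with $\kappa=O(1/c^2)$. This, not the unamplified fallback, is what the $1/b$ branch is for.

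In that regime the Poisson-subsampled Gaussian is $(\alpha,C\gamma^2\alpha/z^2)$-RDP for every $\gamma\in(0,1]$:
\begin{itemize}
\item For integer $\alpha$, $e^{(\alpha-1)D_\alpha}=\mathbb{E}_{J\sim\mathrm{Bin}(\alpha,\gamma)}[e^{J(J-1)/(2z^2)}]$ (Mironov--Talwar--Zhang, who also show the remove direction dominates).
\item Decouple $J(J-1)$ into $JJ'$ with an independent copy $J'$ (de la Pe\~na--Montgomery-Smith).
\item Apply the binomial moment generating function twice; this bounds the expectation by $\exp(C\gamma^2\alpha^2/z^2)$ once $\alpha\le\kappa z^2$.
\end{itemize}
The fixed-size, replace-one version reduces to this after coupling the batches, at order $2\alpha$, via the weak triangle inequality for R\'enyi divergence. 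With this lemma in place of Lemma 3 the case split disappears, and your Step 3 goes through verbatim for all parameters.

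Finally, a smaller point. Step 3 absorbs the ``$1+$'' in $\alpha$ using $\log(1/\delta)\gtrsim\varepsilon$. In fact the statement is false for $\delta$ close enough to $1$: already for $T=1$, $b=n$, the noise scales like $\sqrt{1-\delta}$, so the hockey-stick divergence between neighbours is $1-e^{-\Omega(1/(1-\delta))}>\delta$. You should therefore assume something like $\delta\le 1/e$.
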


\section{Omitted Proof}\label{proof}
Besides Lemma \ref{partial_smoothness}, we have the following depiction of $\nabla _{\eta }\mathcal{L}(x,\eta )$:
\begin{lemma}
    For the function $\nabla_{\eta}\mathcal{L}(x,\eta)$,  we have
    \begin{equation}
       \left\|  \nabla_{\eta}\mathcal{L}(x,\eta )-\nabla _{\eta }\mathcal{L}(x',\eta) \right\|  \leqslant \frac{MG^2 \left\| x-x' \right\| }{\lambda }
    \end{equation}
\end{lemma}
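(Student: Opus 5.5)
We compute the derivative explicitly. With the per-sample objective $\mathcal{L}(x,\eta;\xi_i)=\lambda\psi^*\big(\frac{\ell(x;\xi_i)-\eta}{\lambda}\big)+\eta$, the chain rule gives
\begin{equation*}
\nabla_\eta \mathcal{L}(x,\eta;\xi_i) = 1-(\psi^*)'\Big(\frac{\ell(x;\xi_i)-\eta}{\lambda}\Big),
\end{equation*}
and $\nabla_\eta\mathcal{L}(x,\eta)$ is the average of these over $i$. (If one works with the rescaled $\mathcal{L}(x,\eta)=\hat{\mathcal{L}}(x,G\eta)$, there is an extra factor $G$ in front, which is where one power of $G$ in the bound comes from; I will track this factor consistently with the convention used in Lemma~\ref{partial_smoothness}.) The constant $1$ cancels in the difference, so
\begin{equation*}
\nabla_\eta\mathcal{L}(x,\eta)-\nabla_\eta\mathcal{L}(x',\eta)=\frac{c}{n}\sum_{i=1}^n\Big[(\psi^*)'\Big(\tfrac{\ell(x';\xi_i)-c\eta}{\lambda}\Big)-(\psi^*)'\Big(\tfrac{\ell(x;\xi_i)-c\eta}{\lambda}\Big)\Big],
\end{equation*}
with $c\in\{1,G\}$ the scaling factor.

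Next I bound each summand. By Assumption~\ref{smooth}, $(\psi^*)'$ is $M$-Lipschitz, so each bracket is at most $\frac{M}{\lambda}|\ell(x;\xi_i)-\ell(x';\xi_i)|$. By Assumption~\ref{lipschitz}, this is at most $\frac{MG}{\lambda}\|x-x'\|$. The triangle inequality over the average then gives $\frac{cMG}{\lambda}\|x-x'\|$, which equals $\frac{MG^2}{\lambda}\|x-x'\|$ under the rescaled convention $c=G$. Under $c=1$ it gives the stronger bound $\frac{MG}{\lambda}\|x-x'\|$, which is dominated by the claim whenever $G\ge 1$.

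The only delicate point is this bookkeeping of the factor $G$. Everything else is the chain rule plus two Lipschitz estimates. I will state the rescaling explicitly so that the constant matches $L_2=\frac{G^2M}{\lambda}$ in Lemma~\ref{partial_smoothness}, and note that the same argument gives the per-sample version used for sensitivity bounds in Algorithm~\ref{dspider}.
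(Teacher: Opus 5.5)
Your proposal is correct and follows essentially the same route as the paper: differentiate in $\eta$ so that the linear term cancels, then bound the difference using the $M$-Lipschitzness of $(\psi^*)'$ and the $G$-Lipschitzness of $\ell$. Your explicit tracking of the factor $G$ coming from the rescaling $\mathcal{L}(x,\eta)=\hat{\mathcal{L}}(x,G\eta)$ is cleaner than the paper's version, which drops that factor in the intermediate display and only produces $G^2$ in the final equality.
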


\begin{proof}
    Recall the definition of $\mathcal{L}(x,\eta )$,
    \begin{equation}\nonumber
        \mathcal{L}(x,\eta )= \lambda \sum\limits_{i=1}^{n} p_i \psi ^*(\frac{\ell_i (x)-G\eta }{\lambda })+ \eta.
    \end{equation}
Recall by Assumption \ref{smooth}, $\psi ^*$ is $M$-smooth, and $\ell (x)$ is $G$-Lipschitz. Therefore we have
\begin{equation}\nonumber
    \begin{aligned}
        &\left\| \nabla _\eta \mathcal{L}(x,\eta )-\nabla _\eta \mathcal{L}(x',\eta ) \right\| \\
        =& \left\| \sum\limits_{i=1}^{n} [p_i (\psi^*)'(\frac{\ell_i (x)-\eta }{\lambda } )-p_i (\psi ^*)'(\frac{\ell _i(x')-\eta }{\lambda })] \right\| \\
        \leqslant &\sum\limits_{i=1}^{n} p_i \frac{M(\ell _i(x)-\ell _i(x'))}{\lambda }
        =\frac{MG^2 \left\| x-x' \right\| }{\lambda }
    \end{aligned}
\end{equation} 
\end{proof}
Similarly, we have
\begin{lemma}
    For function $\nabla _x \mathcal{L}(x,\eta )$,we have 
    \begin{equation}\nonumber
       \left\|  \nabla _x \mathcal{L}(x,\eta )-\nabla _x \mathcal{L}(x,\eta ') \right\|  \leqslant \frac{ML \left\| \eta -\eta ' \right\| }{\lambda }.
    \end{equation}
\end{lemma}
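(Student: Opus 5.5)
The plan is to mirror the proof of the preceding lemma. Only the outer argument of $\psi^*$ depends on $\eta$, so the difference of $x$-gradients factors into a scalar difference of $(\psi^*)'$ values times $\nabla \ell_i(x)$.

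First, write the $x$-gradient explicitly by the chain rule:
\begin{equation}\nonumber
\nabla_x \mathcal{L}(x,\eta)=\sum_{i=1}^n p_i\,(\psi^*)'\Big(\frac{\ell_i(x)-\eta}{\lambda}\Big)\nabla \ell_i(x),
\end{equation}
where the factor $\lambda$ in front of $\psi^*$ cancels the $1/\lambda$ from the inner derivative. I use the same convention for the dual variable as in the previous lemma's proof, i.e.\ argument $(\ell_i(x)-\eta)/\lambda$. If one instead keeps the rescaled variable $G\eta$, an extra factor $G$ appears throughout and is carried along.

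Second, take the difference at $\eta$ and $\eta'$ with $x$ fixed. The vectors $\nabla \ell_i(x)$ are then common to both terms, so
\begin{equation}\nonumber
\nabla_x \mathcal{L}(x,\eta)-\nabla_x \mathcal{L}(x,\eta')=\sum_{i=1}^n p_i\Big[(\psi^*)'\Big(\frac{\ell_i(x)-\eta}{\lambda}\Big)-(\psi^*)'\Big(\frac{\ell_i(x)-\eta'}{\lambda}\Big)\Big]\nabla\ell_i(x).
\end{equation}
Apply the triangle inequality over $i$. Then use Assumption \ref{smooth}: $\psi^*$ is $M$-smooth, so $(\psi^*)'$ is $M$-Lipschitz, and each bracket is at most $M|\eta-\eta'|/\lambda$ in absolute value. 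Finally, bound $\|\nabla \ell_i(x)\|$ using Assumption \ref{lipschitz} and sum with $\sum_i p_i=1$.

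The main obstacle is the constant in the last step. The natural bound on $\|\nabla\ell_i(x)\|$ is the Lipschitz constant $G$, which gives $\frac{MG}{\lambda}|\eta-\eta'|$ (or $\frac{MG^2}{\lambda}|\eta-\eta'|$ under the $G\eta$ rescaling). This matches the constant $L_2=G^2M/\lambda$ of Lemma~\ref{L_2 continuous}.

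Getting the stated constant $ML/\lambda$ is not automatic. $L$-smoothness of $\ell$ controls differences of gradients, not their size, so it does not bound $\|\nabla\ell_i(x)\|$ by $L$. I would therefore prove the bound with $G$, from Lipschitzness. The statement's $L$ can then be read as a generic constant, valid whenever $L\ge G$; this can be arranged without loss of generality by replacing $L$ with $\max\{L,G\}$ in Assumption \ref{lipschitz}.
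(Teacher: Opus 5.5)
Your argument follows the same route as the paper's proof: chain rule, factor out the common $\nabla\ell_i(x)$, use $M$-Lipschitzness of $(\psi^*)'$, then bound the gradient norm and sum with $\sum_i p_i=1$. Your doubt about the last step is correct. The paper finishes by bounding $\|\ell_i'(x)\|$ by $L$, which $L$-smoothness does not justify. It also drops the factor $G$ coming from the argument $(\ell_i(x)-G\eta)/\lambda$. The literal constant cannot be proved at all. Take the linear loss $\ell(x;\xi)=G\langle a,x\rangle$ with $\|a\|=1$, which is $G$-Lipschitz and $L$-smooth for every $L>0$. Take the $\chi^2$ conjugate of Table~\ref{conjugate_table}, whose derivative is affine with slope $M=1/2$. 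Then $\|\nabla_x\mathcal{L}(x,\eta)-\nabla_x\mathcal{L}(x,\eta')\|=\frac{MG^2}{\lambda}|\eta-\eta'|$ exactly, and this exceeds $\frac{ML}{\lambda}|\eta-\eta'|$ as soon as $L<G^2$. So bounding through $G$ and reinterpreting the constant is the right move. The honest constant is $MG^2/\lambda=L_2$, which is what Lemma~\ref{L_2 continuous} asserts.

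One correction to your fix concerns the normalization. The paper defines $\mathcal{L}(x,\eta)=\hat{\mathcal{L}}(x,G\eta)$, and its proof of this lemma uses the argument $(\ell_i(x)-G\eta)/\lambda$. The unscaled argument in the preceding lemma's proof is itself a slip: that proof still reports $MG^2/\lambda$, which only comes out with the rescaling. For the $\mathcal{L}$ in the statement, your bound is therefore $\frac{MG^2}{\lambda}|\eta-\eta'|$. The replacement that makes the stated inequality hold is $L\mapsto\max\{L,G^2\}$, not $\max\{L,G\}$. The latter is enough only for the unrescaled $\hat{\mathcal{L}}$.
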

\begin{proof}
    \begin{equation}\nonumber
        \begin{aligned}
            &\left\| \nabla _x \mathcal{L}(x,\eta )-\nabla _x \mathcal{L}(x,\eta ) \right\| \\
    =& \left\| \lambda \sum\limits_{i=1}^{n} p_i \ell _i'(x)(\psi ^*)'(\frac{\ell _i(x)-G\eta}{\lambda })-\lambda \sum\limits_{i=1}^{n} p_i \ell_i '(x) (\psi ^*)'(\frac{\ell_i (x)-G\eta '}{\lambda }) \right\| \\
    \leqslant & \left\| \sum\limits_{i=1}^{n} p_i \ell_i'(x)\frac{M \left\| \eta -\eta ' \right\| }{\lambda } \right\| 
    \leqslant LM \left\| \eta -\eta ' \right\| /\lambda 
        \end{aligned}
    \end{equation}

\end{proof}

Under Assumption \ref{non-negative}, by \cite{arora2022differentially} we have the following guarantee:
\begin{proposition}
    To quantify the gradient $\nabla_\eta \mathcal{L}$, we have
    \begin{equation}
        \|\nabla_{\eta} \mathcal{L}(x_t,\eta_{t+1})\| \leqslant H_0.
    \end{equation}

\end{proposition}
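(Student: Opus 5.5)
We need to bound $|\nabla_\eta \mathcal{L}(x_t,\eta_{t+1})|$ by a constant $H_0$ that does not depend on $t$. We use the explicit form of the derivative in $\eta$ and the boundedness of the domain of $\psi^*$ from Assumption~\ref{non-negative}.

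First, write the derivative out. From $\mathcal{L}(x,\eta)=\frac{\lambda}{n}\sum_{i}\psi^*\big(\frac{\ell(x;\xi_i)-G\eta}{\lambda}\big)+G\eta$ (recall $\mathcal{L}(x,\eta)=\hat{\mathcal{L}}(x,G\eta)$), we get
\begin{equation}\nonumber
\nabla_\eta\mathcal{L}(x,\eta)=G\Big(1-\frac{1}{n}\sum_{i=1}^n(\psi^*)'\big(\tfrac{\ell(x;\xi_i)-G\eta}{\lambda}\big)\Big).
\end{equation}
Hence it suffices to bound $|(\psi^*)'(s)|$ uniformly over the points $s$ that actually appear. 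By Assumption~\ref{non-negative}, every admissible argument lies in a domain of radius $S_0$. Since $\psi^*$ is $M$-smooth (Assumption~\ref{smooth}), we have $|(\psi^*)'(s)|\le|(\psi^*)'(0)|+M|s|\le|(\psi^*)'(0)|+MS_0$. Therefore
\begin{equation}\nonumber
|\nabla_\eta\mathcal{L}(x_t,\eta_{t+1})|\le G\big(1+|(\psi^*)'(0)|+MS_0\big)=:H_0.
\end{equation}

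An alternative route, in the spirit of \cite{arora2022differentially}, avoids relying on $\psi^*$ being evaluated inside its domain. We would use the structure of the conjugate instead: $(\psi^*)'(s)=\arg\max_a\{sa-\psi(a)\}\ge0$, since $\psi(a)=+\infty$ for $a<0$. This gives $\nabla_\eta\mathcal{L}\le G$ for free. For the lower bound, we would use that $(\psi^*)'$ is monotone and $M$-Lipschitz on the bounded domain $[-S_0,S_0]$, so $(\psi^*)'\le(\psi^*)'(S_0)$.

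The main obstacle is justifying that the iterate argument $(\ell(x_t;\xi_i)-G\eta_{t+1})/\lambda$ indeed stays in the bounded domain. The dual step $\eta_{t+1}=\eta_t-\alpha_t g_t$ carries Gaussian noise, so this is not automatic. I would handle it by reading Assumption~\ref{non-negative} as saying that $\psi^*$ is only ever evaluated (or extended, e.g.\ by clipping the argument) on $[-S_0,S_0]$, so the bound holds for all iterates. Failing that, I would restrict $\eta$ to a bounded interval containing the minimizer. Such an interval exists because $\mathcal{L}$ is coercive in $\eta$ when $\ell$ is bounded on the data, and projecting onto it does not affect privacy.
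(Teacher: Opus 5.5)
Your proposal is correct at the level of rigor of the paper. It follows the paper's skeleton: write out $\nabla_\eta\mathcal{L}$, bound $(\psi^*)'$ on a set of radius $S_0$, and let Assumption~\ref{non-negative} place every argument of $\psi^*$ in that set. The difference is the lemma that bounds $(\psi^*)'$.

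\begin{itemize}
\item The paper invokes Lemma~2 of \cite{arora2022differentially}, a self-bounding-type estimate for \emph{non-negative} smooth functions (hence the $\sqrt{M}$ in its constant $2S_0^2\sqrt{M}+2MS_0^3$). This is where the non-negativity clause of Assumption~\ref{non-negative} enters.
\item You anchor the $M$-Lipschitz derivative at $0$. This needs no non-negativity and gives a constant linear in $MS_0$.
\end{itemize}

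Your constant is in fact fully explicit. Since $\partial\psi^*(0)=\arg\min_a\psi(a)\ni 1$ and $\psi^*$ is differentiable, $(\psi^*)'(0)=1$, so $H_0=G(2+MS_0)$ works. Your side remark that $(\psi^*)'\ge 0$, which gives $\nabla_\eta\mathcal{L}\le G$ for free, is a structural fact the paper does not use.

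Your derivative formula is also more careful than the paper's display. You keep the factor $G$ from $\mathcal{L}(x,\eta)=\hat{\mathcal{L}}(x,G\eta)$ and the arguments $(\ell(x;\xi_i)-G\eta)/\lambda$. The paper's display has a stray $\lambda$, omits $G$, and evaluates $(\psi^*)'$ at $-\eta/\lambda$. Since $H_0$ only enters later as a constant, this does not matter downstream.

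On the obstacle you flag, the paper does no better. It simply reads Assumption~\ref{non-negative} as $\max\{\|x\|,\|\eta\|\}\le S_0$ for all $x,\eta$, which is your first resolution. Your two fallbacks each have a cost:
\begin{itemize}
\item Clipping the argument of $\psi^*$ creates a kink at the boundary, because $(\psi^*)'(S_0)\ge 1$. That breaks the $M$-smoothness used in the partial-smoothness lemma; a linear $C^1$ extension would avoid this.
\item Projecting $\eta$ requires a bounded loss, which Assumption~\ref{lipschitz} does not provide, and it also modifies Algorithm~\ref{dspider}.
\end{itemize}
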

\begin{proof}
    By Assumption \ref{non-negative}, we have $\forall x , \eta, \max\{\|x\|,\|\eta\|\} \leqslant S_0$. Then,
    by Lemma 2 in  \cite{arora2022differentially}, we have $\|(\psi^*)'\| \leqslant 2S_0^2 \sqrt{M}+ 2MS_0^3 $.

    Therefore, by the definition of $\mathcal{L}$, we can derive the final guarantee: 
    \begin{equation}
       \| \nabla_{\eta} \mathcal{L}\| \leqslant \frac{\lambda}{n } \sum (\psi^*)' (-\frac{\eta}{\lambda}) + 1 \leqslant 2S_0^3 \sqrt{M} + 2 M S_0^4 + 1:= H_0.
    \end{equation}
\end{proof}

\begin{theorem}
    By setting $\sigma_1=\frac{cC_1\sqrt{T \log (1/\delta)}}{n \sqrt{q}\varepsilon}$,  $\sigma_2=\frac{cC_2\sqrt{\log (1/\delta)}}{N_2\varepsilon}$,  $C_2=2\max\{L_2\|\eta_t-\eta_{t-1}\|,GM\|x_t-x_{t-1}\|/\lambda \}$, $\sigma_3=\frac{cC_3\sqrt{\log (1/\delta)}}{\varepsilon}\max\{\frac{1}{N_3},\frac{\sqrt{T}}{n\sqrt{q}}\}$,  $(L_0+L_1\sqrt{H})$,  $\sigma_4=\frac{cC_4\sqrt{\log (1/\delta)}}{\varepsilon}\max\{\frac{1}{N_4},\frac{\sqrt{T}}{n\sqrt{q}}\}$ where $C_4=2\max\{ML\|\eta_t-\eta_{t-1}\|/\lambda,(L_0+L_1\sqrt{H})\|x_t-x_{t-1}\|\}$ based on the Gaussian mechanism, we claim that the Algorithm \ref{dspider} is $(\varepsilon,\delta)$-DP.
\end{theorem}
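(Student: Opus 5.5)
The plan is to view Algorithm~\ref{dspider} as an adaptive composition of four streams of noisy mini-batch queries. These are the refresh queries for $\eta$ (Line 4), the increment queries for $\eta$ (Line 6), the refresh queries for $x$ (Line 10), and the increment queries for $x$ (Line 12). For each stream separately, I will show it is $(\varepsilon/4,\delta/4)$-DP, then combine the four by basic composition. Each stream is a sequence of adaptively chosen queries of the form $\frac{1}{b}\sum_{y\in\mathcal{B}} h_t(y)$ with per-sample norm bounded by $\lambda_t$. Theorem~\ref{thm:gaussian} applies directly once $\lambda_t$ and the number of queries are fixed. The running sums $g_{t-1}$ and $v_{t-1}$, and the updates $\eta_{t+1},x_{t+1}$, are post-processing of previously released noisy quantities. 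So they do not consume extra privacy budget, and the queries at step $t$ may depend on them adaptively.

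\textbf{Refresh streams.} These fire only when $t \bmod q=0$, so there are $T/q$ queries in each. The clipping in Lines 4 and 10 bounds each per-sample contribution by $C_1$ (resp.\ $C_3$). Applying Theorem~\ref{gaussian} with $T/q$ in place of $T$ gives $\sigma=cC\sqrt{\log(1/\delta)}/\varepsilon\cdot\max\{1/N,\sqrt{T/q}/n\}$. This matches $\sigma_1$ (in the full-batch regime $N_1=n$, where the $\sqrt{T}/(n\sqrt q)$ term dominates) and $\sigma_3$.

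\textbf{Increment streams.} Here there are at most $T$ queries, and the per-sample quantity is a gradient difference. I will bound its norm rather than rely on clipping alone. The difference $\nabla_\eta\mathcal{L}(x_t,\eta_t;\xi)-\nabla_\eta\mathcal{L}(x_{t-1},\eta_{t-1};\xi)$ is split through the intermediate point $(x_t,\eta_{t-1})$. The $\eta$-part is bounded by $L_2|\eta_t-\eta_{t-1}|$ using Lemma~\ref{partial_smoothness}, and the $x$-part by $GM\|x_t-x_{t-1}\|/\lambda$ using the lemma on $\nabla_\eta\mathcal{L}$ in $x$. Hence the norm is at most $C_2=2\max\{\cdot,\cdot\}$.

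For the $x$-stream, I will insert $(x_t,\eta_t)$ as the intermediate point. The $\eta$-change is bounded by $ML|\eta_{t+1}-\eta_t|/\lambda$ using the lemma above. The $x$-change is bounded by $(L_0+L_1|\nabla_\eta\mathcal{L}|)\|x_t-x_{t-1}\|$ using Lemma~\ref{partial_smoothness}. The Proposition bounding $|\nabla_\eta\mathcal{L}|\le H_0$ then gives $(L_0+L_1\sqrt H)\|x_t-x_{t-1}\|$ with $H$ a bound of order $H_0$. This yields $C_4$.

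One point needs care: $C_2$ and $C_4$ depend on the iterates, which are themselves data dependent. This is harmless, because they are functions of previously released outputs, and Theorem~\ref{gaussian} allows adaptive $\lambda_t$. Plugging these into Theorem~\ref{gaussian} gives $\sigma_2$ and $\sigma_4$, with the $\max\{1/N,\sqrt{T}/(n\sqrt q)\}$ factor; since $T/q\le T$, this is the conservative form.

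\textbf{Main obstacle.} The delicate step is the privacy accounting for the increment streams. There are $T$ rather than $T/q$ queries in them, yet the stated noise scale uses $\sqrt{T}/(n\sqrt q)$. I would first try to justify this by charging the amplification-by-subsampling bound of Theorem~\ref{gaussian} per epoch and composing over $T/q$ epochs. If that does not close the gap, I would fall back to the $\sqrt T/n$ factor, which changes only constants in the later utility analysis. A secondary concern is that the bound on $|\nabla_\eta\mathcal{L}|$ from the Proposition holds at the evaluated points, because Assumption~\ref{non-negative} keeps the iterates in a bounded domain. Finally, I will allocate $(\varepsilon/4,\delta/4)$ to each stream, absorb the factors of $4$ into $c$, and compose.
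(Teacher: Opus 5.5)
Your architecture matches the paper's argument. Both treat the algorithm as an adaptive composition of $T/q$ refresh releases and at most $T$ increment releases, apply Theorem~\ref{gaussian}, bound the gradient differences per sample via the smoothness lemmas, and treat data-dependent thresholds as functions of already released outputs.

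The gap is exactly at the step you flag, and your proposed repair cannot close it. Grouping the $q-1$ increment releases of an epoch and charging the subsampling bound per epoch saves nothing. Each increment release draws a fresh batch, and any individual can land in any of them. Under the moments (R\'enyi) accountant behind Theorem~\ref{gaussian}, per-release costs simply add, so $T/q$ epochs of $q-1$ releases cost the same as $\approx T$ releases. The shrinking sensitivities $\|x_t-x_{t-1}\|$ do not help either. Since $\sigma_t$ is proportional to $\lambda_t$, each release's privacy loss depends only on $\lambda_t/\sigma_t$ and the sampling rate.

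So Theorem~\ref{gaussian} forces the factor $\max\{1/N_4,\sqrt{T}/n\}$. The stated $\sigma_4$, and the stated $\sigma_2$ (which has no $\sqrt{T}/n$ term at all), are covered only when $N_2,N_4\le n/\sqrt{T}$, so that the $1/N$ term dominates. Your remark that the $\sqrt{T}/(n\sqrt{q})$ form is ``conservative since $T/q\le T$'' is backwards: it is the smaller noise. Your fallback with $\sqrt{T}/n$ is the provable version, but it is a different statement. It differs from the claimed noise by a non-constant factor ($\sqrt{q}$ for $\sigma_4$ when the second term of the max is active), so it is not ``only constants'' in the utility analysis. The paper's own proof has the same mismatch: it counts ``at most $T$'' increment estimates and then asserts the stated noise levels anyway.

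A second gap concerns the refresh streams. In Algorithm~\ref{dspider}, \textbf{Clip} in Lines 4, 6, 10 and 12 acts on the minibatch \emph{average}, not on each sample. So it does not bound per-sample contributions by $C_1$ or $C_3$. Clipping the average to radius $C$ bounds its sensitivity only by $2C$, with no $1/N_1$ or $1/n$ factor. For the refresh streams you need what you already do for the increments. That means a genuine per-sample bound on $|\nabla_\eta\mathcal{L}(x,\eta;\xi)|$ and $\|\nabla_x\mathcal{L}(x,\eta;\xi)\|$, via the bound on $(\psi^*)'$ in the Proposition; this is what the paper appeals to when it bounds the Line-4 elements by a constant rather than by $C_1$. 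Combine that with the fact that clipping is a $1$-Lipschitz projection and cannot increase sensitivity, and calibrate the noise to the per-sample bound. Alternatively, read \textbf{Clip} as per-sample, as in Algorithm~\ref{RSDRO}.

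Relatedly, Assumption~\ref{non-negative} bounds the domain of $\psi^*$, not the iterates. The boundedness you invoke for the Proposition is therefore inherited from the paper rather than justified. Finally, at step $t$ Line 12 compares $\eta_{t+1}$ with $\eta_t$, so your derivation yields $|\eta_{t+1}-\eta_t|$ rather than the $\|\eta_t-\eta_{t-1}\|$ written in $C_4$. Yours is the correct quantity, and it is still available adaptively because $\eta_{t+1}$ is computed before $v_t$.
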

\begin{proof}
    \textbf{Privacy Proof: } Based on Theorem \ref{thm:gaussian}, we note that each estimate computed in line 4 has elements with $\ell_2$ norm at most $L_2$, and this is involved in computation at most $\frac{T}{q}$ times. Similarly, for gradient variation at step $t$ in line 6, we have norm bound $C_2$ and have that at most $T$ such estimates are computed. Similar Results hold for Line 10 and 12. Therefore, by Gaussian mechanism \ref{gaussian}, we know that Algorithm \ref{dspider} is $(\varepsilon,\delta)$-DP.
\end{proof}
Formal statement of Theorem 1.
\begin{theorem}
Let $c_0 = \max(32L_2, 8L_0), \quad c_1 = \left(4 + \frac{8L_1^2D_2}{N_1L_0^2} + \frac{32L_1^2D_2}{N_1L_0^2} + \frac{16L_1^2L_2}{5D_1L_0^3}\right)$, $c_2 = \max\left(\frac{1}{8L_2} + \frac{L_1}{L_0^3}, 1\right)$, $c_3 = 1 + \frac{L_2}{10L_0} + \frac{L_0D_1 + L_0 + 2L_0L_2D_2}{L_2} + \frac{33L_2^2}{5L_0L_2} + \frac{L_1^2}{15L_2^3} + \frac{L_1^2}{2L_0L_2^2} $, $c_4 = \frac{17}{4} + \sqrt{c_3} + \sqrt{\frac{1}{60L_2}}.$  For $N_1 \geq \frac{6D_2c_0c_2}{\varepsilon^2} $,  $ N_2 \geq \max\left(\frac{20qD_1L_2}{L_0}, 20qc_2L_2, \frac{12qL_1^2c_0c_2}{L_0^2}, q\right)$, $  N_3 \geq \max\left(\frac{200D_1L_2}{L_0}, \frac{3c_0(D_0 + 4D_1D_2)n}{2L_0}\right)$, $  N_4 \geq \max\left(\frac{5qL_2}{L_0}, \frac{6qc_1c_0}{L_0}\right) $ and $q=O((\frac{n \varepsilon }{\sqrt[]{d \log_{} (1/ \delta )}})^{2/3})$  Given the above parameter setting and $\alpha =\frac{1}{4L_2}, \beta _t= \min \{\frac{1}{2L_0+L_1\sqrt{H}},\frac{1}{L_0 \sqrt[]{n}\left\| v_t \right\| }\}$ for Algorithm \ref{dspider}, we can have the following guarantee:
     \begin{equation}
       \mathbb{E} \| \nabla \Psi\| \leqslant O( \frac{1}{\sqrt{n}}+ (\frac{\sqrt{d \log (1/\delta)}}{n \varepsilon})^{2/3}).
    \end{equation}
\end{theorem}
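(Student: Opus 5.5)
The plan is to follow the non-private analysis of the double-SPIDER method for DRO in \citep{zhang2025revisiting} and add the Gaussian perturbations as extra error terms in the estimators. The analysis is organised around a descent inequality for $\mathcal{L}(x,\eta)$. Since $\mathcal{L}$ is only partially smooth (Lemma \ref{partial_smoothness}), the $\eta$-step and the $x$-step are treated separately.

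\textbf{Step 1 ($\eta$-step).} With $\alpha=\frac{1}{4L_2}$ and $L_2$-smoothness in $\eta$,
$$\mathcal{L}(x_t,\eta_{t+1})\le \mathcal{L}(x_t,\eta_t)-\tfrac{\alpha}{2}|\nabla_\eta\mathcal{L}(x_t,\eta_t)|^2+\tfrac{\alpha}{2}|g_t-\nabla_\eta\mathcal{L}(x_t,\eta_t)|^2 .$$

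\textbf{Step 2 ($x$-step).} For the $x$-step I use the generalized $(L_0,L_1)$ descent lemma in $x$ at fixed $\eta_{t+1}$. The bound $|\nabla_\eta\mathcal{L}(x_t,\eta_{t+1})|\le H_0$ from the Proposition replaces the $L_1$-dependence by the constant $L_0+L_1\sqrt{H}$. The step size $\beta_t=\min\{\frac{1}{2L_0+L_1\sqrt H},\frac{1}{L_0\sqrt n\|v_t\|}\}$ is a normalized step: it keeps $\|x_{t+1}-x_t\|\le \frac{1}{L_0\sqrt n}$. This gives
$$\mathcal{L}(x_{t+1},\eta_{t+1})\le\mathcal{L}(x_t,\eta_{t+1})-c\,\beta_t\|\nabla_x\mathcal{L}\|^2+c'\beta_t\|v_t-\nabla_x\mathcal{L}(x_t,\eta_{t+1})\|^2 ,$$
or the corresponding normalized-GD form $-\frac{\|\nabla_x\mathcal{L}\|}{L_0\sqrt n}+O(\frac{1}{L_0 n})$ when the second branch of the min is active.

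\textbf{Step 3 (estimator errors).} Next I bound the two error terms $\mathbb{E}|g_t-\nabla_\eta\mathcal{L}|^2$ and $\mathbb{E}\|v_t-\nabla_x\mathcal{L}\|^2$ using the SPIDER martingale decomposition within each epoch of length $q$. At the refresh step $t_0$ the error is the sampling variance plus noise. By Lemma \ref{partial_noise} this is $D_2/N_1+\sigma_1^2$ for $g$ and $(D_0+D_1H_0^2)/N_3+d\sigma_3^2$ for $v$. With full-batch refresh ($N_3\gtrsim n$) the sampling part is $O(1/n)$, and this is exactly the source of the $\frac{1}{\sqrt n}$ term.

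For each incremental step the added error is $\frac{1}{N}\times$(squared increment), plus the noise variance. By Lemma \ref{L_2 continuous} and the two Lipschitz-cross lemmas, the increment is controlled by $\max\{L_2|\eta_{t+1}-\eta_t|,(L_0+L_1\sqrt H)\|x_t-x_{t-1}\|\}$. The clipping thresholds $C_2,C_4$ are chosen proportional to this quantity, so clipping is inactive and the noise variance is $d\,C_4^2\log(1/\delta)\max\{N_4^{-2},T/(n^2q)\}/\varepsilon^2$. Summing over at most $q$ steps, the error is $\sum_{s=t_0}^{t}(\frac1{N}+\frac{dT\log(1/\delta)}{n^2q\varepsilon^2})(\alpha^2|g_s|^2+\beta_s^2\|v_s\|^2)$. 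The lower bounds on $N_2,N_4$ (each $\gtrsim q$ times constants) make the $1/N$ part absorbable into the descent terms.

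\textbf{Step 4 (telescoping and parameter choice).} I then sum the inequalities over $t$ and telescope. $\mathcal{L}$ is bounded below because $\psi^*\ge0$ (Assumption \ref{non-negative}) and $\eta$ lies in a bounded domain. Averaging gives, for the random output,
$$\mathbb{E}\|\nabla\mathcal{L}\|\lesssim \frac{\Delta}{T\beta}+\frac{1}{\sqrt n}+\sqrt{d}\,\sigma_{\rm ref}+\sqrt{q\, d}\,\sigma_{\rm inc}/\text{step},$$
where the refresh noise is $\sqrt{d\,T\log(1/\delta)/q}/(n\varepsilon)$ and the incremental noise per step is $\sqrt{dT\log(1/\delta)}/(n\varepsilon\sqrt q)$ times a step length of $O(1/\sqrt n)$. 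Balancing $T/q$ refreshes against $q$ increments as in DP-SPIDER \citep{arora2023faster}, with $q=(n\varepsilon/\sqrt{d\log(1/\delta)})^{2/3}$ and $T\asymp q^2$, gives the $(\frac{\sqrt{d\log(1/\delta)}}{n\varepsilon})^{2/3}$ term. Finally, \cite{jin2021non} transfers the bound on $\|\nabla_{x,\eta}\mathcal{L}\|$ to $\|\nabla\Psi\|$ (and $\|\nabla\mathcal F\|$).

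\textbf{Main obstacle.} The hard step is the coupling in Step 3. The $x$-estimator error depends on $|\nabla_\eta\mathcal{L}|$ through the affine variance and the $(L_0,L_1)$ constant, and the privacy noise scales with the iterate movements, so it feeds back into the descent. My first attempt would be a potential $\Phi_t=\mathcal{L}(x_t,\eta_t)+a\,\mathbb{E}|g_t-\nabla_\eta\mathcal{L}|^2+b\,\mathbb{E}\|v_t-\nabla_x\mathcal{L}\|^2$, with $a,b$ tuned to the constants $c_0,\dots,c_4$ so that the cross terms cancel.
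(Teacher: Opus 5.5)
Your skeleton matches the paper's: an $L_2$-descent step in $\eta$, a generalized-smooth descent step in $x$, the SPIDER epoch recursion of Lemma~\ref{bound} with the Gaussian terms added, $N_1,N_3\gtrsim n$ producing the $\frac{1}{\sqrt n}$, the conditions $N_2,N_4\gtrsim q$ absorbing the incremental $1/N$ terms, and the transfer via \cite{jin2021non}. Two local differences are harmless. First, you freeze $L_1|\nabla_\eta\mathcal{L}|$ at $L_1H_0$ via the Proposition, which is consistent with the $L_1\sqrt H$ in the stated $\beta_t$. The paper instead keeps it, splits $L_1|g_t|\,\beta_t^2\|v_t\|^2$ by Young's inequality with $\gamma=\frac{1}{50L_1}$ and $\beta_t^4\|v_t\|^4\le\frac{1}{n^2L_0^4}$, and absorbs the resulting $(g_t)^2$ into the $\eta$-descent. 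Second, where you reach for a potential function, the paper just sums the epoch bounds and pigeonholes (first an epoch $t_0$, then an index $t$). That already closes the coupling under the $N_i$ conditions.

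The genuine gap is in your Step 4. Under the prescribed $\beta_t$, every $x$-step has length at most $\frac{1}{L_0\sqrt n}$, so your term $\frac{\Delta}{T\beta}$ is really $\frac{L_0\sqrt n\,\Delta}{T}$. This is not slack: on a region where $\|\nabla_x\mathcal{L}\|\approx 2\tau$, the iterates need $\gtrsim\sqrt n\,\Delta/\tau$ steps to get out. With $T\asymp q^2=(\frac{n\varepsilon}{\sqrt{d\log(1/\delta)}})^{4/3}$, this term equals $\sqrt n\,(\frac{\sqrt{d\log(1/\delta)}}{n\varepsilon})^{4/3}$. That exceeds both $\frac{1}{\sqrt n}$ and $(\frac{\sqrt{d\log(1/\delta)}}{n\varepsilon})^{2/3}$ whenever $n<d^2\log^2(1/\delta)/\varepsilon^4$, i.e., precisely when the privacy term is the dominant one. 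The DP-SPIDER balancing $T\asymp q^2$ presupposes a constant step.

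The paper avoids this by taking $T=n_0q\ge 6nc_0\Delta$, which gives $\mathbb{E}[\beta_t\|v_t\|^2]\lesssim\frac1n$. But then the refresh noise $\sqrt d\,\sigma_3\asymp\frac{\sqrt{dT\log(1/\delta)}}{n\sqrt q\,\varepsilon}$ is itself $\gtrsim\sqrt n\,(\frac{\sqrt{d\log(1/\delta)}}{n\varepsilon})^{4/3}$. The paper's last step absorbs it into $c'(\sigma_1^2+\sigma_3^2+\cdots)$ without evaluating it. So importing the paper's $T$ does not repair your argument. With $q=O((\frac{n\varepsilon}{\sqrt{d\log(1/\delta)}})^{2/3})$, the progress requirement $T\gtrsim\sqrt n\,\Delta/\tau$ and the refresh requirement $T\lesssim q\tau^2n^2\varepsilon^2/(d\log(1/\delta))$ are compatible with $\tau=(\frac{\sqrt{d\log(1/\delta)}}{n\varepsilon})^{2/3}$ only when $n\gtrsim d^2\log^2(1/\delta)/\varepsilon^4$. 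In that regime the bound is just $O(\frac{1}{\sqrt n})$.

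The natural repairs change the step rule. One option is to normalize by the target accuracy, $\beta_t=\min\{\frac{1}{2L_0+L_1\sqrt H},\frac{\tau}{L_0\|v_t\|}\}$, so that progress needs only $T\gtrsim\Delta/\tau^2\asymp q^2$. The other is to drop the normalization altogether: your $H_0$ reduction already makes $\mathcal{L}$ ordinarily $(L_0+L_1H_0)$-smooth in $x$, so you can run the constant-step analysis of \citep{arora2023faster}. Either way, you are no longer proving the statement with the $\beta_t$ it fixes.
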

\begin{proof}

\noindent\textbf{Utility Proof:}
Since $\mathcal{L}(x,\eta)$ is $L_2$-smooth in $\eta$, we have that
\begin{equation}
\begin{aligned}
    \mathcal{L}(x_t, \eta_{t+1}) 
\leq &\mathcal{L}(x_t, \eta_t) - \langle \nabla_\eta \mathcal{L}(x_t, \eta_t), \alpha_t g_t \rangle 
+ \frac{L_2}{2} (\alpha_t g_t)^2\\
\leq &\mathcal{L}(x_t, \eta_t) - \langle \nabla_\eta \mathcal{L}(x_t, \eta_t), \alpha_t g_t \rangle 
+ \alpha_t^2 L_2 \Big( (\nabla_\eta \mathcal{L}(x_t, \eta_t))^2 + (g_t - \nabla_\eta \mathcal{L}(x_t, \eta_t))^2 \Big).
\end{aligned}
\end{equation}

    Taking the expectation on both sides of the above inequality, we can further show that, for the update of $\eta$, we have that
\begin{equation}\label{smooth_eta}
    \begin{aligned}
        \mathbb{E}[\mathcal{L}(x_t, \eta_{t+1})] &\le \mathbb{E} \left[ \mathcal{L}(x_t, \eta_t) - \langle \nabla_{\eta} \mathcal{L}(x_t, \eta_t), \alpha_t g_t \rangle + \frac{L_2}{2} (\alpha_t g_t)^2 \right] \\
&\le \mathbb{E}[\mathcal{L}(x_t, \eta_t)] - \frac{1}{2} \mathbb{E}[\alpha_t (g_t)^2] + \frac{1}{2} \mathbb{E}[\alpha_t (g_t - \nabla_{\eta} \mathcal{L}(x_t, \eta_t))^2] + \frac{L_2}{2} \mathbb{E}[\alpha_t^2 (g_t)^2].
    \end{aligned}
\end{equation}

Similar to the previous one, for the update of $x$ and any $\gamma > 0$, we have that
\begin{equation}\label{smooth_x}
    \begin{aligned}
\mathcal{L}(x_{t+1}, \eta_{t+1}) &\le \mathcal{L}(x_t, \eta_{t+1}) - \langle \nabla_x \mathcal{L}(x_t, \eta_{t+1}), \beta_t v_t \rangle \\
&\quad + \frac{L_0 + L_1 \| \nabla_{\eta} \mathcal{L}(x_t, \eta_t) \| + L_1 L_2 |\alpha_t g_t|}{2} \| \beta_t v_t \|^2 \\
&\le \mathcal{L}(x_t, \eta_{t+1}) - \frac{\beta_t}{2} \left( \| v_t \|^2 + \| \nabla_x \mathcal{L}(x_t, \eta_{t+1}) \|^2 - \| v_t - \nabla_x \mathcal{L}(x_t, \eta_{t+1}) \|^2 \right) \\
&\quad + \frac{L_0 + L_1 \| \nabla_{\eta} \mathcal{L}(x_t, \eta_t) - g_t \| + (L_1 + L_1 L_2 \alpha_t) |g_t|}{2} \| \beta_t v_t \|^2 \\
&\le \mathcal{L}(x_t, \eta_{t+1}) - \frac{\beta_t}{2} \| v_t \|^2 + \frac{\beta_t}{2} \| v_t - \nabla_x \mathcal{L}(x_t, \eta_{t+1}) \|^2 + \frac{L_0}{2} \beta_t^2 \| v_t \|^2 \\
&\quad + \gamma (L_1 + L_1 L_2 \alpha_t)^2 (g_t)^2 + \frac{\beta_t^4}{16 \gamma} \| v_t \|^4 + \gamma L_1^2 \| g_t - \nabla_{\eta} \mathcal{L}(x_t, \eta_t) \|^2 + \frac{\beta_t^4}{16 \gamma} \| v_t \|^4.
    \end{aligned}
\end{equation}

Combine \ref{smooth_eta} and \ref{smooth_x} and it follows that
\begin{align*}
&\mathbb{E} \Bigg[ \left( \frac{1}{2} \alpha_t - \frac{L_2 \alpha_t^2}{2} - \gamma (L_1 + L_1 L_2 \alpha_t)^2 \right) (g_t)^2 
+ \left( \frac{\beta_t}{2} - \frac{L_0 \beta_t^2}{2} \right) \| v_t \|^2 \Bigg] \\
\le &\mathbb{E}[\mathcal{L}(x_t, \eta_t) - \mathcal{L}(x_{t+1}, \eta_{t+1})] + \frac{\beta_t^4}{8 \gamma} \| v_t \|^4 
+ \frac{\beta_t}{2} \| v_t - \nabla_x \mathcal{L}(x_t, \eta_{t+1}) \|^2 
+ \frac{\alpha_t}{2} (g_t - \nabla_\eta \mathcal{L}(x_t, \eta_t))^2 + \gamma L_1^2 (g_t - \nabla_\eta \mathcal{L}(x_t, \eta_t))^2.
\end{align*}

Setting $\alpha = \frac{1}{4 L_2}$ and $\beta_t = \min\left( \frac{1}{2 L_0}, \frac{1}{L_0\sqrt{n} \| v_t \|} \right)$, we have that
\begin{align*}
&\mathbb{E} \left[ \left( \frac{3}{32 L_2} - \gamma \frac{25 L_1^2}{16} \right) (g_t)^2 + \frac{\beta_t}{4} \| v_t \|^2 \right] \\
\le &\mathbb{E}[\mathcal{L}(x_t, \eta_t) - \mathcal{L}(x_{t+1}, \eta_{t+1})] 
+ \frac{1}{8 \gamma n^2 L_0^4} + \frac{1}{4 L_0} \mathbb{E} \left[ \| v_t - \nabla_x \mathcal{L}(x_t, \eta_{t+1}) \|^2 \right] 
 + \left( \frac{1}{8 L_2} + \gamma L_1^2 \right) \mathbb{E} \left[ (g_t - \nabla_\eta \mathcal{L}(x_t, \eta_t))^2 \right].
\end{align*}

Let $\gamma = \frac{1}{50 L_1}$ and take the sum of \ref{smooth_x} from $t = 0$ to $T - 1$. 
According to Lemma \ref{bound} we then have that
\begin{align*}
&\sum_{t=0}^{T-1} \frac{1}{16 L_2} \mathbb{E}[(g_t)^2] + \mathbb{E} \left[ \frac{\beta_t}{4} \| v_t \|^2 \right] \\
\le &\mathbb{E}[\mathcal{L}(x_0, \eta_0) - \mathcal{L}(x_T, \eta_T)] + \frac{7  L_1 T}{n^2 L_0^4} \\
& + \frac{1}{4 L_0} \sum_{t=0}^{T-1} \mathbb{E} \left[ \frac{D_0}{N_3} + \frac{4 D_1 D_2}{N_3 N_1} + \frac{q }{n N_4} 
\left( 4 + \frac{8 L_1^2 D_2}{L_0^2} + \frac{32 L_1^2 D_2}{N_1 L_0^2} + \frac{64 q  L_1^2 L_2^2}{n N_2 L_0^4} \right) \right] \\
& + \frac{1}{4 L_0} \sum_{t=0}^{T-1} \mathbb{E} \left[ \frac{4 D_1}{N_3} + \frac{D_1}{8 N_3} + \frac{33 q  L_1^2}{n N_4 L_0^2} 
+ \frac{q}{8 N_4} + \frac{q D_1}{2 N_2 N_3} + \frac{4 q^2  L_1^2}{n N_2 N_4 L_0^2} \right] (g_t)^2 \\
& + \frac{1}{8 L_2} + \frac{L_1}{50} \sum_{t=0}^{T-1} \left[ \left( \frac{D_2}{N_1} + \frac{2 q L_2^2}{n N_2 L_0^2}  + \frac{q}{8 N_2} (g_t)^2 \right) \right]+\frac{T}{q}\sigma _0^2.
\end{align*}

Let $c_0 = \max(32 L_2, 8 L_0)$, 
$c_1 = \left( 4 + \frac{8 L_1^2 D_2}{L_0^2} + \frac{32 L_1^2 D_2}{N_1 L_0^2} + \frac{16 L_1^2 L_2}{5 D_1 L_0^3} \right)$, 
and $c_2 = \max\left( \frac{1}{8 L_2} + \frac{L_1}{50}, 1 \right)$. For $N_3 \ge \max\left( \frac{200 D_1 L_2}{L_0}, \frac{3 c_0n (D_0 + 4 D_1 D_2)}{2 L_0 } \right) $, $N_4 \ge \max\left( \frac{5 q L_2}{L_0}, \frac{6 q c_1 c_0}{L_0} \right),$  $N_2 \ge \max\left( \frac{20 q D_1 L_2^2}{L_0}, 20 q c_2 L_2, \frac{12 q L_2^2 c_0 c_2}{L_0^2}, q \right),$ and $N_1 \ge n 6 D_2 c_0 c_2$, we then have that
\begin{equation}\nonumber
    \begin{aligned}
    &\sum_{t=0}^{T-1} \mathbb{E} \left[ \frac{1}{32 L_2} (g_t)^2 + \frac{\beta_t}{4} \| v_t \|^2 \right]\\
\le &\mathbb{E}[\mathcal{L}(x_0, \eta_0) - \mathcal{L}(x_T, \eta_T)] + \frac{ T}{6 n c_0} 
+ \frac{D_0 + 4 D_1 D_2}{4 L_0 N_3} T\\
&+  T \frac{q}{n N_4 L_0} \left( 4 + \frac{8 L_1^2 D_2}{L_0^2} + \frac{32 L_1^2 D_2}{N_1 L_0^2} + \frac{16 L_1^2 L_2}{5 D_1 L_0^3} \right)
+ \left( \frac{1}{8 L_2} + \frac{L_1}{50} \right) \frac{D_2 T}{N_1}
+ \left( \frac{1}{8 L_2} + \frac{L_1}{50} \right) \frac{ T \, 2 q L_2^2}{n N_2 L_0^2}\\
\le &\mathbb{E}[\mathcal{L}(x_0, \eta_0) - \mathcal{L}(x_T, \eta_T)] + \frac{5  T}{6 nc_0}+\frac{T}{q}\sigma_0^2.
    \end{aligned}
\end{equation}
For $T = n_0 q \ge 6 n(c_0 (\mathcal{L}(x_0, \eta_0) - \inf_{x, \eta} \mathcal{L}(x, \eta)))$, 
we can find some $t_0$ with $\mathrm{mod}\ q = 0$ such that
\begin{equation}\label{qbound}
    \sum_{t' = t_0}^{t_0 + q - 1} \mathbb{E} \left[ \frac{1}{32 L_2} (g_{t'})^2 + \frac{\beta_{t'}}{4} \| v_{t'} \|^2 \right] \le \frac{q }{32 n L_2}+ \sigma _0^2.
\end{equation}

Moreover we can find some $t \in [t_0, t_0 + q - 1]$ such that
\begin{equation}\label{singlebound}
    \frac{1}{32 L_2} \mathbb{E}[(g_t)^2] + \mathbb{E}\left[ \frac{\beta_t}{4} \| v_t \|^2 \right] \le \frac{1}{c_0n} + \frac{\sigma _0^2}{q}.
\end{equation}

Based on \ref{singlebound} and $c_0 = \max(32 L_2, 8 L_0)$, we have that $\mathbb{E}[(g_t)^2] \le \frac{1}{n}+\sigma _1+q\sigma _2^2$.
Based on \ref{singlebound_eta} and \ref{iteration_bound_eta}, we can further show that
\begin{equation}\nonumber
    \begin{aligned}
        \mathbb{E}[(g_t - \nabla_\eta \mathcal{L}(x_t, \eta_t))^2] 
\le &\frac{D_2}{N_1} + \sum_{t'=t_0}^{t_0+q-1} \mathbb{E} \left[ \frac{2 L_2^2}{n N_2 L_0^2} + \frac{2 L_2^2}{N_2} \alpha_{t'-1}^2 (g_{t'})^2 \right]+\sigma _1^2 + (q-1)\sigma _2^2,\\
\le &\frac{1}{6 n c_0} + \frac{1}{6 n c_0} + \frac{1}{160 L_2 n} + \sigma _1^2+ (q-1)\sigma _2^2\le \frac{1}{60 L_2 n}+ \sigma _1^2+ q\sigma _2^2. 
    \end{aligned}
\end{equation}

Thus we have that
\[
\mathbb{E}[\| \nabla_\eta \mathcal{L}(x_t, \eta_{t+1}) \|] 
\le \mathbb{E}[ \| g_t - \nabla_\eta \mathcal{L}(x_t, \eta_t) \| ] + |g_t| + L_2 \alpha_t |g_t| 
\le \left( \frac{5}{4} + \sqrt{ \frac{1}{60 L_2} } \right) \frac{1}{\sqrt[]{n}}+3\sigma _1^2+ 3q \sigma _2^2.
\]

Moreover, we can show that
\[
\frac{1}{n L_0}+ \sigma _3^2 + q \sigma _4^2 \ge \mathbb{E}[\beta_t \| v_t \|^2] 
\]
As a result, we have that $\mathbb{E}[\| v_t \|] \le O\left(\frac{3}{\sqrt[]{n}}+ L_0(\sigma _3+\sqrt{q}(\sigma _4))\right)$.

Based on \ref{singlebound_x} and \ref{iteration_bound_x}, we can further show that
\begin{equation}\nonumber
    \begin{aligned}
        &\mathbb{E}[\| v_t - \nabla_x \mathcal{L}(x_t, \eta_{t+1}) \|^2]\\
\le &\mathbb{E}\left[ \frac{D_0 + 4 D_1 (g_{t_0})^2 + 2 D_1 L_2^2 \alpha_{t_0}^2 (g_{t_0})^2}{N_3} + \frac{4 D_1 (g_{t_0} - \nabla_\eta \mathcal{L}(x_{t_0}, \eta_{t_0}))^2}{N_3} \right]\\
&+ \sum_{t'=t_0+1}^{t_0+q-1} \mathbb{E}\left[ \frac{2}{n N_4} \left( 2 + \frac{8 L_1^2 L_2^2}{L_0^2} \alpha_{t'-1}^2 (g_{t'-1})^2 + \frac{L_1^2 L_2}{L_0^2} D_2 \right) + \frac{2}{N_4} L_2^2 \alpha_{t'}^2 (g_{t'})^2 \right]\\
&+ \frac{2 }{n N_4} \sum_{t'=t_0+1}^{t_0+q-1} \mathbb{E}\left[ \frac{16 L_2^2}{L_0^2} (g_{t'-1})^2 + \frac{16 L_2^2}{L_0^2} (\nabla_x \mathcal{L}(x_{t'-1}, \eta_{t'-1}) - g_{t'-1})^2 \right]+ \frac{\sigma _0^2}{q}\\
\le &\frac{D_0}{N_3} + \left( \frac{4 D_1 + 2 D_1 L_2^2 \alpha_{t_0}^2}{N_3} + \frac{ L_1^2}{n N_4 L_0^2} + \frac{L_2^2}{8 N_4 L_2} + \frac{32  L_1^2}{n N_4 L_0^2} \right) 
\sum_{t'=t_0+1}^{t_0+q-1} \mathbb{E}[(g_{t'})^2]\\
&+ \frac{4 q }{n N_4} + \frac{8 q  L_1^2 D_2}{n N_4 L_0^2} + \left( \frac{4 D_1}{N_3} + \frac{32 L_1^2}{n N_4 L_0^2} \right) 
\sum_{t'=t_0+1}^{t_0+q-1} \mathbb{E}[(\nabla_\eta \mathcal{L}(x_{t'}, \eta_{t'}) - g_{t'})^2]+\frac{\sigma _0^2}{q}\\
\le &\frac{1}{n} + \left( \frac{D_1 L_0}{L_2} + \frac{33 L_1^2}{5 L_0 L_2} + \frac{L_2}{40 L_0} \right) (\frac{1}{n}+ \sigma _1^2+q \sigma _2^2) + \frac{4 L_0}{5 L_2n}  + \frac{8 L_0 L_2 D_2}{5 L_2n} \\
&+ \left( \frac{4 D_1 L_0}{5 L_2} + \frac{32 L_1^2}{5 L_0 L_2} \right) \left(\frac{1}{15 L_2 n}+\sigma _1^2 +q\sigma _2^2\right)\\
\leqslant & \frac{c}{n}+c'(\sigma _1^2+ \sigma _3^2+ q \sigma _2^2+ q \sigma _4^2)
    \end{aligned}
\end{equation}
By taking $q= \left(\frac{n \varepsilon }{\sqrt{d \log(1/\delta )}}\right)^{2/3}$, and select $t$ randomly from $[T]$, we have 
\begin{equation}\nonumber
    \mathbb{E}[\left\| \nabla _x \mathcal{L}(x_t,\eta _{t+1}) \right\| ] \leqslant \mathbb{E}[\left\| v_t-\nabla _x \mathcal{L}(x_t,\eta _{t+1}) \right\| + \left\| v_t \right\| ]
      \leqslant O\left(\frac{1}{\sqrt[]{n}}+ (\frac{\sqrt[]{d \log_{} (1/\delta )}}{n\varepsilon })^{2/3}\right)
\end{equation}
By \cite{jin2021non}, we know that 
\begin{equation}
     \mathbb{E} \| \nabla \Psi\| \leqslant O( \frac{1}{\sqrt{n}}+ (\frac{\sqrt{d \log (1/\delta)}}{n \varepsilon})^{2/3}).
\end{equation}

\end{proof}

In the following section, we are going to prove some key lemmas used in the proof of the mian theorem. First, in Lemma \ref{bound}, we give the control of two critical terms: $ \mathbb{E}[(g_t - \nabla_\eta \mathcal{L}(x_t, \eta_t))^2] $  and $\mathbb{E}[\| v_t - \nabla_x \mathcal{L}(x_t, \eta_{t+1}) \|^2]$ 

\begin{lemma}\label{bound}
    With the parameters selected in Theorem \ref{dspider}, for each $t < T$ that can be divided by $q$, we have that
\[
\sum_{t=t_0}^{t_0+q-1} \mathbb{E}[(g_t - \nabla_\eta \mathcal{L}(x_t, \eta_t))^2] 
\le \sum_{t=t_0}^{t_0+q-1} \mathbb{E} \left[ \frac{D_2}{N_1} + \frac{2 q L_2^2}{n N_2 L_0^2}  + \frac{2 q L_2^2}{N_2} \alpha_t (g_t)^2 \right]+ q \sigma _1^2 + \frac{q(q-1)}{2}\sigma _2^2
\]
and
\[
\sum_{t=t_0}^{t_0+q-1} \mathbb{E}[\| v_t - \nabla_x \mathcal{L}(x_t, \eta_{t+1}) \|^2] 
\le \sum_{t=t_0}^{t_0+q-1} \mathbb{E} \Bigg[ \frac{D_0}{N_3} + \frac{4 D_1 D_2}{N_3 N_1} + \frac{q }{n N_4} \left( 4 + \frac{8 L_1^2 D_2}{L_0^2} + \frac{32 L_1^2 D_2}{N_1 L_0^2} + \frac{64 q  L_1^2 L_2^2}{n N_2 L_0^4} \right)
\]
\[
+ \frac{1}{4 L_0} \sum_{t=t_0}^{t_0+q-1} \mathbb{E} \left[ \frac{4 D_1}{N_3} + \frac{D_1}{8 N_3} + \frac{33 q  L_1^2}{n N_4 L_0^2} + \frac{q}{8 N_4} + \frac{q D_1}{2 N_2 N_3} + \frac{4 q^2  L_1^2}{n N_2 N_4 L_0^2} \right] (g_t)^2 \Bigg]+ \sigma _0^2,
\]
where $\sigma _0= q(\sigma _2^2+\sigma _3^2)+\frac{q(q-1)}{2}(\sigma _2^2+\sigma _4^2)$.
\end{lemma}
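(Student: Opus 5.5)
The plan is the standard SPIDER error recursion, applied separately to the two estimators and then summed over one epoch $[t_0,t_0+q-1]$, with the Gaussian noises treated as independent, zero-mean additive terms.

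\textbf{The $\eta$-estimator.} Write $e_t:=g_t-\nabla_\eta\mathcal{L}(x_t,\eta_t)$.

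At the refresh step $t_0$, the clipped minibatch mean has variance at most $D_2/N_1$ by Lemma~\ref{partial_noise}. Clipping is inactive for the chosen $C_1$ because the gradient is bounded by $H_0$. The independent noise $\omega_{t_0}$ adds $\sigma_1^2$, so $\mathbb{E}[e_{t_0}^2]\le D_2/N_1+\sigma_1^2$.

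For $t>t_0$, expand
\[
e_t=e_{t-1}+\Delta_t-\mathbb{E}[\Delta_t]+\xi_t ,
\]
where $\Delta_t$ is the minibatch gradient difference. The cross terms vanish in conditional expectation given $\mathcal{F}_{t-1}$, since the minibatch is fresh and $\xi_t$ is independent. This gives
\[
\mathbb{E}[e_t^2]\le\mathbb{E}[e_{t-1}^2]+\tfrac{1}{N_2}\mathbb{E}\|\Delta_t^{(i)}\|^2+\sigma_2^2 .
\]
Bound the per-sample difference via Lemma~\ref{partial_smoothness} and Lemma~\ref{L_2 continuous} by $2L_2^2(|\eta_t-\eta_{t-1}|^2+\|x_t-x_{t-1}\|^2)$. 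Then $|\eta_t-\eta_{t-1}|=\alpha|g_{t-1}|$, and $\|x_t-x_{t-1}\|^2=\beta_{t-1}^2\|v_{t-1}\|^2\le 1/(nL_0^2)$ by the choice $\beta_t\le 1/(L_0\sqrt n\|v_t\|)$. The step-size cap is exactly what produces the $1/n$ terms.

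Telescope within the epoch. Each increment appears in at most $q$ of the summed errors, which accounts for the factor $q$ in front of the $1/(nN_2)$ terms. The noise contributes $\sum_{k=1}^{q-1}k\,\sigma_2^2=\tfrac{q(q-1)}{2}\sigma_2^2$, plus $q\sigma_1^2$ from the refresh. This yields the first inequality.

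\textbf{The $x$-estimator.} Repeat the argument for $E_t:=v_t-\nabla_x\mathcal{L}(x_t,\eta_{t+1})$.

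At refresh, Lemma~\ref{partial_noise} gives variance at most $(D_0+D_1(\nabla_\eta\mathcal{L})^2)/N_3$. Write $(\nabla_\eta\mathcal{L}(x_{t_0},\eta_{t_0+1}))^2\le 2(g_{t_0})^2+2e_{t_0}^2+2L_2^2\alpha^2 g_{t_0}^2$ and insert $\mathbb{E}[e_{t_0}^2]\le D_2/N_1$. This produces the terms $D_0/N_3$, $4D_1D_2/(N_3N_1)$ and $D_1(g)^2/N_3$.

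For the increments, the per-sample difference $\nabla_x\mathcal{L}(x_t,\eta_{t+1})-\nabla_x\mathcal{L}(x_{t-1},\eta_t)$ is split into an $x$-move at fixed $\eta$ and an $\eta$-move at fixed $x$.
\begin{itemize}
\item The $\eta$-move is handled by Lemma~\ref{L_2 continuous}, giving $L_2\alpha|g_t|$.
\item The $x$-move is handled by the $(L_0,L_1)$ bound, giving $(L_0+L_1|\nabla_\eta\mathcal{L}|)\beta_{t-1}\|v_{t-1}\|$.
\end{itemize}
Squaring, using $\beta^2\|v\|^2\le 1/(nL_0^2)$, and again replacing $|\nabla_\eta\mathcal{L}|^2$ by $g^2$ plus $e^2$ generates the $L_1^2/(nN_4L_0^2)$ coefficients.

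Substituting the already-proved first inequality for $\sum e^2$ gives the nested terms, such as $q^2L_1^2/(nN_2N_4L_0^2)$ and $64qL_1^2L_2^2/(nN_2L_0^4)$. The noises $\tau_{t_0}$ and $\chi_t$ accumulate as before and are absorbed into $\sigma_0$.

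\textbf{Main obstacle.} The hardest step is the generalized-smoothness term. Its constant depends on $|\nabla_\eta\mathcal{L}(x_{t-1},\eta_t)|$, which is random and correlated with the iterates, so the bound cannot simply be pulled outside the expectation. I would first bound it pointwise by $H_0$ from the Proposition, which also justifies that clipping at $C_4$ is inactive. Then I would use the decomposition $|\nabla_\eta\mathcal{L}|^2\le 2g^2+2e^2$ inside the expectation to keep the $(g_t)^2$-dependence explicit, since the stated bound requires it. The remaining work is the careful bookkeeping of the constants $1/(4L_0)$ and $1/8$ coming from $\alpha=1/(4L_2)$.
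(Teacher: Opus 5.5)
Your proposal follows the paper's proof essentially step for step: two separate SPIDER recursions with vanishing martingale cross terms, per-sample differences split into an $x$-move and an $\eta$-move and bounded via Lemmas~\ref{partial_smoothness} and \ref{L_2 continuous}, the cap $\beta_{t}\|v_{t}\|\le 1/(L_0\sqrt{n})$ producing the $1/n$ terms, Lemma~\ref{partial_noise} at the refresh, the first inequality substituted into the second, and the noises accumulated additively. One point to make explicit is that in the $x$-increment the generalized-smoothness factor is the per-sample $|\nabla_\eta\mathcal{L}(x_{t-1},\eta_t;\xi)|$, and passing its second moment through Lemma~\ref{partial_noise} (giving at most $(\nabla_\eta\mathcal{L})^2+D_2$) is what produces the $8L_1^2D_2/L_0^2$ terms, as in the paper. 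Two smaller points: your refresh split should read $4g^2+4e^2+2L_2^2\alpha^2g^2$, and the $\frac{1}{4L_0}$ prefactor in the statement is not generated by this recursion (the paper's own derivation ends without it), so no bookkeeping with $\alpha=1/(4L_2)$ will recover it.
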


\begin{proof}
    If $t \bmod q = 0$, $g_t$ is an unbiased estimate of $\nabla_\eta \mathcal{L}(x_t, \eta_t)$ and according to Lemma \ref{partial_noise}, we have that
\[
\mathbb{E}[(g_t - \nabla_\eta \mathcal{L}(x_t, \eta_t))^2] \le \frac{D_2}{N_1}+ \sigma _1^2.
\]
Otherwise, we have that
\begin{equation}\label{singlebound_eta}
    \begin{aligned}
        &\mathbb{E}[(g_t - \nabla_\eta \mathcal{L}(x_t, \eta_t))^2] 
= \mathbb{E}[(\nabla_\eta \mathcal{L}(x_t, \eta_t, B_2) - \nabla_\eta \mathcal{L}(x_{t-1}, \eta_{t-1}, B_2) + g_{t-1} - \nabla_\eta \mathcal{L}(x_t, \eta_t))^2]\\
=& \mathbb{E}[(\nabla_\eta \mathcal{L}(x_t, \eta_t, B_2) - \nabla_\eta \mathcal{L}(x_{t-1}, \eta_{t-1}, B_2) + \nabla_\eta \mathcal{L}(x_{t-1}, \eta_{t-1}) - \nabla_\eta \mathcal{L}(x_t, \eta_t))^2] 
+ \mathbb{E}[(g_{t-1} - \nabla_\eta \mathcal{L}(x_{t-1}, \eta_{t-1}))^2]+ \sigma_2^2,
    \end{aligned}
\end{equation}
where the last inequality is due to the fact that $\nabla_\eta \mathcal{L}(x_t, \eta_t, B_2) - \nabla_\eta \mathcal{L}(x_{t-1}, \eta_{t-1}, B_2)$ is an unbiased estimate of $\nabla_\eta \mathcal{L}(x_t, \eta_t) - \nabla_\eta \mathcal{L}(x_{t-1}, \eta_{t-1})$.

We now focus on the first term of equation, which can be further bounded as follows:
\begin{equation}\label{iteration_bound_eta}
    \begin{aligned}
&\mathbb{E}[(\nabla_\eta \mathcal{L}(x_t, \eta_t, B_2) - \nabla_\eta \mathcal{L}(x_{t-1}, \eta_{t-1}, B_2) + \nabla_\eta \mathcal{L}(x_{t-1}, \eta_{t-1}) - \nabla_\eta \mathcal{L}(x_t, \eta_t))^2]\\
 \le &\frac{1}{N_2} \mathbb{E}[(\nabla_\eta \mathcal{L}(x_t, \eta_t, S))^2 - \nabla_\eta \mathcal{L}(x_{t-1}, \eta_{t-1}, S))^2]\\
\le &\frac{2}{N_2} \mathbb{E} \left[ (\nabla_\eta \mathcal{L}(x_t, \eta_t, S) - \nabla_\eta \mathcal{L}(x_{t-1}, \eta_{t-1}, S))^2 + (\nabla_\eta \mathcal{L}(x_t, \eta_t, S) - \nabla_\eta \mathcal{L}(x_{t-1}, \eta_{t-1}, S))^2 \right]\\
\le &\mathbb{E} \left[ \frac{2 L_2^2}{N_2} (\alpha_{t-1}^2 (g_{t-1})^2 + \beta_{t-1}^2 \| v_{t-1} \|^2) \right]\\
\le &\frac{2 L_2^2}{n N_2 L_0^2}  + \frac{2 L_2^2}{N_2} \alpha_{t-1}^2 (g_{t-1})^2,
    \end{aligned}
\end{equation}
where the first inequality is due to the fact that the square of expectation is not larger than the expectation of square, the third inequality is due to the continuous properties shown in Lemma 4, and the last inequality is due to the fact that $\beta_t = \min\left( \frac{1}{2 L_0}, \frac{1}{L_0 \sqrt[]{n}\| v_t \|} \right)$.

Combining the above equations, for $t_0 \bmod q = 0$, we have that
\[
\sum_{t=t_0}^{t_0+q-1} \mathbb{E}[(g_t - \nabla_\eta \mathcal{L}(x_t, \eta_t))^2] 
\le \sum_{t=t_0}^{t_0+q-1} \mathbb{E} \left[ \frac{D_2}{N_1} + \frac{2 q L_2^2}{N_2 L_0^2} \varepsilon^2 + \frac{q}{8 N_2} (g_t)^2 \right] + \frac{q(q-1)}{2}\sigma _2^2.
\]

We then focus on the estimate of the gradient to $x$. If $t \bmod q = 0$, we have that
\[
\mathbb{E}[\| v_t - \nabla_x \mathcal{L}(x_t, \eta_{t+1}) \|^2] \le \frac{D_0 + D_1 (\nabla_\eta \mathcal{L}(x_t, \eta_t))^2}{N_3}+ \sigma _3^2.
\]
\[
\mathbb{E}[\| v_t - \nabla_x \mathcal{L}(x_t, \eta_{t+1}) \|^2]
\le \frac{D_0 + 2 D_1 (\nabla_\eta \mathcal{L}(x_t, \eta_t))^2 + 2 D_1 L_2^2 \alpha_t^2 (g_t)^2}{N_3}+ \sigma _3^2
\]
\[
\le \frac{D_0 + 4 D_1 (g_t)^2 + 2 D_1 L_2^2 \alpha_t^2 (g_t)^2 + 4 D_1 (g_t - \nabla_\eta \mathcal{L}(x_t, \eta_t))^2}{N_3} + \sigma _3^2,
\]
where the second inequality is due to the $L_2$-smoothness on $\eta$ and the update of $\eta$.
Otherwise, we have that
\[
\mathbb{E}[\| v_t - \nabla_x \mathcal{L}(x_t, \eta_{t+1}) \|^2]
= \mathbb{E}[\| \nabla_x \mathcal{L}(x_t, \eta_{t+1}, B_4) - \nabla_x \mathcal{L}(x_{t-1}, \eta_{t-1}, B_4) + v_{t-1} - \nabla_x \mathcal{L}(x_t, \eta_{t+1}) \|^2]
\]
\begin{equation}\nonumber
    \label{singlebound_x}
= \mathbb{E}[\| \nabla_x \mathcal{L}(x_t, \eta_{t+1}, B_4) - \nabla_x \mathcal{L}(x_{t-1}, \eta_{t-1}, B_4) + \nabla_x \mathcal{L}(x_{t-1}, \eta_{t-1}) - \nabla_x \mathcal{L}(x_t, \eta_{t+1}) \|^2]
+ \mathbb{E}[\| v_{t-1} - \nabla_x \mathcal{L}(x_{t-1}, \eta_{t-1}) \|^2],
\end{equation}
since $\nabla_x \mathcal{L}(x_t, \eta_{t+1}, B_4) - \nabla_x \mathcal{L}(x_{t-1}, \eta_{t-1}, B_4)$ is an unbiased estimate of $\nabla_x \mathcal{L}(x_t, \eta_{t+1}) - \nabla_x \mathcal{L}(x_{t-1}, \eta_{t-1})$.

We now focus on the first term of equation , which can be further bounded as follows:
\begin{equation}\label{iteration_bound_x}
    \begin{aligned}
        &\mathbb{E}[\| \nabla_x \mathcal{L}(x_t, \eta_{t+1}, B_4) - \nabla_x \mathcal{L}(x_{t-1}, \eta_{t-1}, B_4) + \nabla_x \mathcal{L}(x_{t-1}, \eta_{t-1}) - \nabla_x \mathcal{L}(x_t, \eta_{t+1}) \|^2]\\
\le &\frac{1}{N_4} \mathbb{E}[\| \nabla_x \mathcal{L}(x_t, \eta_{t+1}, S) - \nabla_x \mathcal{L}(x_{t-1}, \eta_{t-1}, S) \|^2]\\
\le &\frac{2}{N_4} \mathbb{E}[\| \nabla_x \mathcal{L}(x_t, \eta_{t+1}, S) - \nabla_x \mathcal{L}(x_t, \eta_t, S) \|^2 + \| \nabla_x \mathcal{L}(x_t, \eta_t, S) - \nabla_x \mathcal{L}(x_{t-1}, \eta_{t-1}, S) \|^2]\\
\le &\frac{2}{N_4} \mathbb{E} \left[ L_2^2 \alpha_t^2 (g_t)^2 + (2 L_2^2 + 2 L_2^2) \| \nabla_x \mathcal{L}(x_t, \eta_t, S) \|^2 \beta_t^2 \| v_t - u_t \|^2 \right]\\
\le &\frac{2}{N_4} \mathbb{E} \left[ \frac{4}{n}  \left( 2 + \frac{8 L_1^2 L_2^2}{L_0^2} (\nabla_x \mathcal{L}(x_t, \eta_t))^2 + \frac{8 L_1^2 L_2^2}{L_0^2} \alpha_{t-1}^2 (g_{t-1})^2 + \frac{4 L_1^2 L_2}{L_0^2} D_2 \right) + \frac{2}{N_4} L_2^2 \alpha_t^2 (g_t)^2 \right]\\
&+ \frac{2 }{n sN_4} \sum_{t'=t_0+1}^{t_0+q-1} \mathbb{E} \left[ 16 \frac{L_2^2}{L_0^2} (g_{t'-1})^2 + 16 \frac{L_2^2}{L_0^2} (\nabla_x \mathcal{L}(x_{t'-1}, \eta_{t'-1}) - g_{t'-1})^2 \right],
    \end{aligned}
\end{equation}
where the third inequality is due to Lemma \ref{L_2 continuous}, the fourth inequality is due to $\beta_t = \min\left( \frac{1}{2 L_0}, \frac{1}{L_0 \sqrt{n} \| v_t \|} \right)$, and Lemma \ref{partial_noise}.

Combining the above inequality, and for $t_0 \bmod q = 0$, we have that

\begin{equation}\nonumber
    \begin{aligned}
        &\sum_{t=t_0}^{t_0+q-1} \mathbb{E}[\| v_t - \nabla_x \mathcal{L}(x_t, \eta_{t+1}) \|^2]
\le \sum_{t=t_0}^{t_0+q-1} \mathbb{E} \left[ \frac{D_0}{N_3} + \frac{q }{N_4} \left( 4 + \frac{8 L_1^2 D_2}{L_0^2} \right) \right]\\
&+ \sum_{t=t_0}^{t_0+q-1} \mathbb{E} \left[ \frac{4 D_1}{N_3} + \frac{D_1}{8 N_3} + \frac{33 q  L_1^2}{nN_4 L_0^2} + \frac{q}{8 N_4} \right] (g_t)^2
+ \sum_{t=t_0}^{t_0+q-1} \mathbb{E} \left[ \frac{4 D_1}{N_3} + \frac{32 q  L_1^2}{n N_4 L_0^2} \right] (g_t - \nabla_\eta \mathcal{L}(x_t, \eta_t))^2\\
\leq &\sum_{t=t_0}^{t_0+q-1} \mathbb{E}\left[\frac{D_0}{N_3} + \frac{q}{n N_4}\left(4 + \frac{8L_1^2D_2}{L_0^2}\right)\right]
+ \sum_{t=t_0}^{t_0+q-1} \mathbb{E}\left[\left(\frac{4D_1}{N_3} + \frac{D_1}{8N_3} + \frac{33qL_1^2}{nN_4L_0^2} + \frac{q}{8N_4}\right)(g_t)^2\right]\\
&+ \left(\frac{4D_1}{N_3} + \frac{32qL_1^2}{nN_4L_0^2}\right) \sum_{t=t_0}^{t_0+q-1} \mathbb{E}\left[\left(\frac{D_2}{N_1} + \frac{2qL_2^2}{nN_2L_0^2} + \frac{2qL_2^2}{N_2}\alpha_t^2(g_t)^2\right)\right]+ q \sigma_1^2+ \frac{q(q-1)}{2}\sigma _2^2+ q \sigma _3^2 + \frac{q(q-1)}{2}\sigma _4^2\\
\leq &\sum_{t=t_0}^{t_0+q-1} \mathbb{E}\left[\frac{D_0}{N_3} + \frac{4D_1D_2}{N_3N_1} + \frac{q}{nN_4}\left(4 + \frac{8L_1^2D_2}{L_0^2} + \frac{32L_1^2D_2}{N_1L_0^2} + \frac{64qL_1^2L_2^2}{nN_2L_0^4}\right)\right]\\
&+ \sum_{t=t_0}^{t_0+q-1} \mathbb{E}\left[\left(\frac{4D_1}{N_3} + \frac{D_1}{8N_3} + \frac{33qL_1^2}{nN_4L_0^2} + \frac{q}{8N_4} + \frac{qD_1}{2N_2N_3} + \frac{4q^2L_1^2}{nN_2N_4L_0^2}\right)(g_t)^2\right]+ \sigma _0^2.
    \end{aligned}
\end{equation}
\end{proof}

Next, here is the full statement of Theorem \ref{thm:RSDRO}:
\begin{theorem}
 Let  $\sigma_1 = \frac{cG\sqrt{T\log(1/\delta)}}{nq\varepsilon}$,  $\sigma_2 = \frac{cL\sqrt{\log(1/\delta)}}{\varepsilon} \max \left\{ \frac{1}{b_2}, \frac{\sqrt{T}}{n} \right\}$, $\hat{\sigma}_2 = \frac{2cG\sqrt{\log(1/\delta)}}{\varepsilon} \max \left\{ \frac{1}{b_2}, \frac{\sqrt{T}}{n} \right\}$, $\sigma_{s_t}=\frac{cG\sqrt{T\log(1/\delta)}}{n\varepsilon}$ where $c$ is a universal constant. Let $n \geq \max\left\{\frac{(G\varepsilon)^2}{\Psi_0Ld\log(1/\delta)}, \frac{\sqrt{d}\max\{1,\sqrt{L\Psi_0/G}\}}{\varepsilon}\right\}$. Algorithm \ref{RSDRO} run with parameter settings $\eta = \min \{\frac{1}{2L}, c \eta _t^2\}$, $b_1 = n$, $b_2 = \left\lfloor\max\left\{\left(\frac{Gn\varepsilon}{\sqrt{\Psi _0Ld\log(1/\delta)}}\right)^{2/3}, \frac{(Gnd\log(1/\delta))^{1/3}}{(L\Psi_0)^{1/6}\varepsilon^{2/3}}\right\}\right\rfloor$, $T = \left\lfloor\max\left\{\left(\frac{(\Psi_0L)^{1/4}n\varepsilon}{\sqrt{Gd\log(1/\delta)}}\right)^{4/3}, \frac{n\varepsilon}{\sqrt{d\log(1/\delta)}}\right\}\right\rfloor$, and $q = \left\lfloor\frac{n^2\varepsilon^2}{L^2Td\log(1/\delta)}\right\rfloor$ and $\Psi _0 = \Psi (\mathbf{0}) - \min_{w \in \mathbb{R}^d} \{\Psi (\mathbf{w})\}$. Furthermore, let $\varepsilon, \delta \in [0, 1]$ and $n \geq \max \left\{ \frac{(L_0)^2}{\Psi_0 L d \log(1/\delta)}, \frac{\sqrt{d} \max\{1, \sqrt{L \Psi_0/G}\}}{\varepsilon} \right\}$. Algorithm \ref{RSDRO} is $(\varepsilon, \delta)$-DP and Algorithm \ref{RSDRO} satisfies
 Let  $\sigma_1 = \frac{cG\sqrt{T\log(1/\delta)}}{nq\varepsilon}$,  $\sigma_2 = \frac{cL\sqrt{\log(1/\delta)}}{\varepsilon} \max \left\{ \frac{1}{b_2}, \frac{\sqrt{T}}{n} \right\}$, $\hat{\sigma}_2 = \frac{2cG\sqrt{\log(1/\delta)}}{\varepsilon} \max \left\{ \frac{1}{b_2}, \frac{\sqrt{T}}{n} \right\}$, $\sigma_{s_t}=\frac{cG\sqrt{T\log(1/\delta)}}{n\varepsilon}$ where $c$ is a universal constant. \begin{equation}\nonumber
    \begin{aligned}
        &\mathbb{E}[\|\nabla \Psi(x; S)\|]  
        =O\left(\left(\frac{\sqrt{\Psi_0 L G }\sqrt{d \log(1/\delta)}}{n\varepsilon}\right)^{2/3} \right).
    \end{aligned}
\end{equation}

\end{theorem}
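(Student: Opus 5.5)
The proof has two parts: privacy, then utility along the lines of DP-SPIDER \citep{arora2023faster}.

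\textbf{Privacy.} I would treat each released quantity as an adaptive query and apply Theorem~\ref{thm:gaussian} to each group separately.
\begin{itemize}
\item The refresh estimators $\mathbf{v}_t,u_t$ in Lines 7--8 have per-sample norm at most $C_1,C_3$ after clipping, with $C_1=O(G)$. They are computed only $T/q$ times on full batches, which gives the stated $\sigma_1,\sigma_3$.
\item The difference estimators in Lines 13--14 have per-sample sensitivity governed by smoothness of $g$. This gives $O(L\|\mathbf{w}_t-\mathbf{w}_{t-1}\|)$ from the smooth term, and the cruder clipped bound $O(G)$. This explains the $\min\{\sigma_2^2,\hat\sigma_4^2\}$ structure. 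Subsampling amplification over $T$ rounds gives the factor $\max\{1/b_2,\sqrt{T}/n\}$.
\item The scalar $s_t$ uses one clipped sample with threshold $C_5=O(G)$ per round, which gives $\sigma_{s_t}$.
\end{itemize}
Advanced composition over these $O(1)$ groups, rescaling $\varepsilon,\delta$ by constants, gives $(\varepsilon,\delta)$-DP.

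\textbf{Utility: descent step.} Since $\lambda\ge\lambda_0$ and $\ell$ is $G$-Lipschitz and $L$-smooth, I would first check that $\Psi$ is $O(L)$-smooth on the feasible region. Treating $G,L,\lambda_0$ as constants, this holds because $f$ is smooth where $g_2$ is bounded below by $1$ and $g$ is smooth. With $\eta\le 1/(2L)$, the descent lemma gives
\[
\Psi(\mathbf{w}_{t+1})\le \Psi(\mathbf{w}_t)-\tfrac{\eta}{2}\|\nabla\Psi(\mathbf{w}_t)\|^2-\tfrac{\eta}{4}\|\mathbf{z}_t\|^2+\tfrac{\eta}{2}\|\mathbf{z}_t-\nabla\Psi(\mathbf{w}_t)\|^2 .
\]
The error $\|\mathbf{z}_t-\nabla\Psi\|$ splits into three parts: (a) the error in $\mathbf{v}_t,u_t$ for $\nabla g$; (b) the error $|s_t-g(\mathbf{w}_t)|$ times the Lipschitz constants of $\nabla f$ and $\log$; and (c) cross terms, controlled by bounded $\|\nabla g\|\le O(G)$.

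\textbf{Utility: estimator errors.} For (a) I would use the standard SPIDER recursion within a phase. Writing $\mathbf{w}_{t}-\mathbf{w}_{t-1}=-\eta\mathbf{z}_{t-1}$, the error accumulates as
\[
\mathbb{E}\|\mathbf{v}_t-\nabla_x g\|^2\le d\sigma_1^2+\sum_{k}\Big(\tfrac{L^2\eta^2}{b_2}+dL^2\eta^2\sigma_2'^2\Big)\|\mathbf{z}_{k}\|^2 ,
\]
where $\sigma_2'$ is $\sigma_2$ without the $\|x_t-x_{t-1}\|$ factor. I would pick $q,b_2$ so that the coefficient times $q$ is at most $1/4$. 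The accumulated term is then absorbed by the $-\tfrac{\eta}{4}\|\mathbf{z}_t\|^2$ descent term, as in \citep{arora2023faster}. The analogous scalar bound holds for $u_t$.

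For (b) I would use the STORM recursion $s_t-g_t=(1-\beta_t)(s_{t-1}-g_{t-1})+\text{mean-zero}$. This yields a contraction factor $(1-\beta_t)$ plus per-step variance $\beta_t^2G^2+\sigma_{s_t}^2$ and a drift $L\eta\|\mathbf{z}_{t-1}\|$. Choosing $\beta_t\asymp\eta$ keeps the drift absorbable.

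\textbf{Telescoping and main obstacle.} Summing over $T$ and dividing by $\eta T$ gives
\[
\mathbb{E}\|\nabla\Psi(\mathbf{w}_\tau)\|^2\lesssim \frac{\Psi_0}{\eta T}+d\sigma_1^2+\mathbb{E}|s-g|^2 .
\]
Plugging in the stated $T,q,b_2$ gives $d\sigma_1^2\approx dT\log(1/\delta)/(n^2q^2\varepsilon^2)$. Balancing this against $\Psi_0L/T$ yields $\big(\sqrt{\Psi_0LG}\sqrt{d\log(1/\delta)}/(n\varepsilon)\big)^{4/3}$. Jensen then gives the claimed bound, and the connection to $\nabla\mathcal{F}$ follows via the equivalence in \citep{qi2022stochastic}.

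The main obstacle is (b). The $s_t$ noise $\sigma_{s_t}=\Theta(\sqrt{T}/(n\varepsilon))$ enters $\nabla f(s_t)$ and $\log s_t$ nonlinearly, and it does not shrink within a phase. I would first show its contribution after STORM averaging is $O(\beta\sigma_{s}^2)$, and verify this is dominated by the $d\sigma_1^2$ term. I would also keep $s_t$ bounded away from $0$ so that $\log$ and $\nabla f$ stay Lipschitz. I expect to need either projecting $s_t$ onto $[1,e^{B/\lambda_0}]$ or a high-probability argument to do this.
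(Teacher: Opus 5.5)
Your outline matches the paper's: Gaussian-mechanism accounting per component, the DP-SPIDER recursion of \citep{arora2023faster} for $\mathbf{v}_t,u_t$, a STORM-type recursion for $s_t$, then descent and telescoping. The paper uses $\eta_t\propto t^{-1/3}$ and $\beta_t=c\eta_t^2$, where you take constant $\eta$ and $\beta\asymp\eta$. The genuine gap is the $s_t$ step you flag yourself, and your proposed resolution does not work. Write $\alpha=(\sqrt{\Psi_0LG}\sqrt{d\log(1/\delta)}/(n\varepsilon))^{2/3}$ for the target rate. Your final bound contains $\mathbb{E}|s_t-g(\mathbf{w}_t)|^2$ with a non-vanishing coefficient (e.g.\ via $\log s_t$ in the $\lambda$-component of $\mathbf{z}_t$), but you only discuss its privacy part.

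\emph{Sampling error.} With one sample $\xi_0$ per round, the fresh term $\beta(g(\mathbf{w}_t;\xi_0)-g(\mathbf{w}_t))$ alone contributes about $\sum_k(1-\beta)^{2k}\beta^2\sigma_g^2\approx\beta\sigma_g^2/2$. This is $\Theta(1)$ for constant $\beta\asymp\eta$. Making it $O(\alpha^2)$ needs $\beta\lesssim\alpha^2$. Absorbing the drift $L_g^2\eta^2\|\mathbf{z}\|^2/\beta$ into $-\frac{\eta}{4}\|\mathbf{z}_t\|^2$ then forces $\eta\lesssim\alpha$, so $\Psi_0/(\eta T)\le\alpha^2$ needs $T\gtrsim\alpha^{-3}$. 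The stated $T$, however, equals $\Psi_0L/\alpha^2$ when the first term of the max dominates. A single-sample value estimator therefore only gives the $T^{-1/3}$ STORM rate. That is the favourable case: Line~16 as written has no $g(\mathbf{w}_{t-1};\xi_0)$ correction, so the drift is a bias and needs $\beta\gtrsim\eta$, which is worse.

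\emph{Privacy noise.} It is added to $s_t$ itself, not to the $\beta_t$-weighted sample. It therefore accumulates to $\sigma_{s_t}^2/(\beta(2-\beta))\ge\sigma_{s_t}^2$, not $O(\beta\sigma_{s_t}^2)$, and shrinking $\beta$ inflates it. Even the floor $\sigma_{s_t}^2=c^2G^2T\log(1/\delta)/(n\varepsilon)^2$ is too large. For the stated $T$, $\sigma_{s_t}^2/\alpha^2\gtrsim(n\varepsilon)^{2/3}d^{-4/3}\log^{-1/3}(1/\delta)$ up to problem constants. So it exceeds both $\alpha^2$ and $d\sigma_1^2$ (which is $O(\alpha^2)$) once $n\varepsilon\gg d^2\sqrt{\log(1/\delta)}$. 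That is essentially the regime $n\gtrsim d^2$ in which this theorem is supposed to improve on Double-SPIDER.

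The privacy accounting for $s_t$ has the matching problem. Theorem~\ref{thm:gaussian} calibrates a batch-size-one query of sensitivity $C_5$ at standard deviation $\asymp C_5\sqrt{\log(1/\delta)}/\varepsilon$, a factor $n/\sqrt{T}$ above the stated $\sigma_{s_t}$. The stated value would have to come from the factor $\beta_t$ multiplying the fresh sample in Line~16, together with $\beta_t\lesssim\sqrt{T}/n$. That contradicts $\beta\asymp\eta$ and inflates $\sigma_{s_t}^2/\beta$. Similarly, $T/q$ full-batch releases require $\sigma_1\asymp G\sqrt{(T/q)\log(1/\delta)}/(n\varepsilon)$, not the stated $G\sqrt{T\log(1/\delta)}/(nq\varepsilon)$. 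Your $4/3$ balancing works with $d\sigma_1^2\asymp G^2dT\log(1/\delta)/(qn^2\varepsilon^2)$, not with the $q^2$ you wrote.

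The missing idea is to estimate the scalar $g(\mathbf{w}_t)$ exactly as you estimate $\nabla g$. Use a SPIDER estimator with a full-batch refresh every $q$ steps and mini-batch increments $\frac{1}{b_2}\sum_{\xi}\big(g(\mathbf{w}_t;\xi)-g(\mathbf{w}_{t-1};\xi)\big)$, whose sensitivity is $O(\|\mathbf{w}_t-\mathbf{w}_{t-1}\|)$. Privatize it like $\mathbf{v}_t$, then project onto $[1,e^{B/\lambda_0}]$; your projection is fine, since it is data-independent and can only move $s_t$ closer to $g(\mathbf{w}_t)$. Its error then obeys the same recursion as that of $\mathbf{v}_t$ without the factor $d$, and is absorbed the same way. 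Like your smoothness claim for $g$, this needs bounded losses and $\lambda\ge\lambda_0$ to be maintained, which Algorithm~\ref{RSDRO} does not enforce. Since the fix changes Line~16, it proves the bound for a modified algorithm. The paper's own proof does not close the $s_t$ step either: it carries a single $c_{\sigma_{s_t}}\sigma_{s_t}^2$ term and then asserts the rate, so there is no fix to borrow from it.
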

\begin{proof}

\textbf{Privacy Proof:}
We rely on the moment accountant analysis of the Gaussian mechanism per iteration. Note that each gradient estimate computed in Line 12 of Algorithm \ref{RSDRO}   has elements with $\ell_2$-norm at most $G$, and this estimator is computed at most $\frac{T}{q}$ times. Similarly, in Line 15 we have the norm bound $L\|w_t-w_{t-1}\|$ and have at most $T$ such estimates are computed. Therefore, according to the Gaussian mechanism, we can claim that Algorithm \ref{RSDRO}  is $ (\varepsilon,\delta)$-DP.

\noindent\textbf{Utility Proof:}
    First we start with the first technical lemma:
    \begin{lemma}\label{lem:z-Psi}
        Let $\mathbf{z}_t = \nabla f_{\lambda_t}(s_t)\mathbf{q}_t + \mathbf{q}_{\lambda_t}$, where $\mathbf{q}_t = (\mathbf{v}_t^{\top}, u_t)^{\top}$, $\mathbf{q}_{\lambda_t} = (\boldsymbol{\theta}^{\top}, \log(s_t) + \rho)^{\top}$ and $\boldsymbol{\theta} \in \mathbb{R}^d$. Let $\|\mathscr{X}_t\|^2 = \|s_t - g(\mathbf{w}_t)\|^2 $. Run Algorithm 2, and then for every $t \in \{1, \ldots, T\}$ we have
\begin{equation}\nonumber
    \|\mathbf{z}_t - \nabla F(\mathbf{w}_t)\|^2 \leq 4L_{\Psi}^2 \|\mathscr{X}_t\|^2+ \|\mathbf{v}_t - \nabla_w g(\mathbf{w}_t)\|^2 + \|u_t - \nabla_{\lambda} g(\mathbf{w}_t)\|^2.    
\end{equation}
    \end{lemma}

\begin{proof}
By simple expansion, it holds that
\begin{align}
\|\mathbf{z}_t - \nabla F(\mathbf{w}_t)\|^2 
&= \|\nabla f_{\lambda_t}(g(\mathbf{w}_t))\nabla_w g(\mathbf{w}_t) - \nabla f_{\lambda_t}(s_t)\mathbf{v}_t\|^2 \nonumber \\
&\quad + \|\nabla f_{\lambda_t}(g(\mathbf{w}_t))\nabla_{\lambda} g(\mathbf{w}_t) - \nabla f_{\lambda_t}(s_t)\mathbf{v}_t + \log(g(\mathbf{w}_t)) - \log(s_t)\|^2 \nonumber \\
&\stackrel{(a)}{\leq} 2\|\nabla f_{\lambda_t}(g(\mathbf{w}_t))\nabla_w g(\mathbf{w}_t) - \nabla f_{\lambda_t}(s_t)\mathbf{v}_t\|^2 + 2\|\nabla f_{\lambda_t}(g(\mathbf{w}_t))\nabla_{\lambda} g(\mathbf{w}_t) - \nabla f_{\lambda_t}(s_t)u_t\|^2 \nonumber \\
&\quad + 2\|g(\mathbf{w}_t) - s_t\|^2 \nonumber \\
&= 2\|\nabla f_{\lambda_t}(g(\mathbf{w}_t))\nabla g(\mathbf{w}_t) - \nabla f_{\lambda_t}(s_t)\mathbf{q}_t\|^2 + 2\|g(\mathbf{w}_t) - s_t\|^2, 
\end{align}
where the inequality $(a)$ is because $\|\mathbf{a} + \mathbf{b}\|^2 \leq 2\|\mathbf{a}\|^2 + 2\|\mathbf{b}\|^2$, and $|\log(x) - \log(y)| \leq |x - y|$ for all $x, y \geq 1$.

Applying the smoothness and Lipschitz continuity of $f_{\lambda}$ and $g$, we obtain
\begin{align}
&\|\nabla f_{\lambda_t}(g(\mathbf{w}_t))\nabla g(\mathbf{w}_t) - \nabla f_{\lambda_t}(s_t)\mathbf{q}_t\|^2 \nonumber \\
&= \|\nabla f_{\lambda_t}(g(\mathbf{w}_t))\nabla g(\mathbf{w}_t) - \nabla f_{\lambda_t}(s_t)\nabla g(\mathbf{w}_t) + \nabla f_{\lambda_t}(s_t)\nabla g(\mathbf{w}_t) - \nabla f_{\lambda_t}(s_t)\mathbf{q}_t\|^2 \nonumber \\
&\leq 2\|\nabla f_{\lambda_t}(g(\mathbf{w}_t))\nabla g(\mathbf{w}_t) - \nabla f_{\lambda_t}(s_t)\nabla g(\mathbf{w}_t)\|^2 + 2\|\nabla f_{\lambda_t}(s_t)\nabla g(\mathbf{w}_t) - \nabla f_{\lambda_t}(s_t)\mathbf{q}_t\|^2 \nonumber \\
&\leq 2L_g^2 L_{\nabla f_{\lambda_t}}^2 \|s_t - g(\mathbf{w}_t)\|^2 + 2L_{f_{\lambda_t}} \|\mathbf{q}_t - \nabla g(\mathbf{w}_t)\|^2 + 2\|g(\mathbf{w}_t) - s_t\|^2. 
\end{align}

Noting $\|\mathbf{q}_t - \nabla g(\mathbf{w}_t)\|^2 = \|\mathbf{v}_t - \nabla_w g(\mathbf{w}_t)\|^2 + \|u_t - \nabla_{\lambda} g(\mathbf{w}_t)\|^2$ and combining Eqs. (26, 27), we have
\begin{align}
\|\mathbf{z}_t - \nabla F(\mathbf{w}_t)\|^2 
&\leq (4L_g^2 L_{\nabla f_{\lambda_t}}^2 + 2)\|s_t - g(\mathbf{w}_t)\|^2 + 4L_{f_{\lambda_t}}^2 \|\mathbf{q}_t - \nabla g(\mathbf{w}_t)\|^2 \nonumber \\
&\leq 4L_{\Psi}^2 \|s_t - g(\mathbf{w}_t)\|^2 + 4L_{\Psi}^2 \|\mathbf{q}_t - \nabla g(\mathbf{w}_t)\|^2 \nonumber \\
&= 4L_{\Psi}^2 (\|s_t - g(\mathbf{w}_t)\|^2 + \|\mathbf{v}_t - \nabla_w g(\mathbf{w}_t)\|^2 + \|u_t - \nabla_{\lambda} g(\mathbf{w}_t)\|^2), \nonumber
\end{align}
where we define $L_{\Psi}$, $L_g$  as the Lipschitz constant of function $F(\mathbf{w})$ and $g(\cdot )$ respectively.
This completes the proof.
\end{proof}
\begin{lemma}\label{13}
    Run Algorithm \ref{RSDRO}, by defining $K_b$ as the general notation for minibatch size $b_1$ and $b_2$,  then for iteration $T$ we have 
    \begin{equation}\nonumber
        \mathbb{E} \left\| \mathscr{X}_{t+1} \right\| ^2 \leqslant (1-beta_t)^2\mathbb{E}\left\| \mathscr{X}_t \right\| ^2 + 8(1-\beta _t)^2 L_{\Psi}^2 \mathbb{E}\left\| \mathbf{w}_{t+1}-\mathbf{w}_t \right\| ^2 +6 \beta _t^2 \sigma ^2 +\sigma_{s_t} ^2.
    \end{equation}
\end{lemma}
\begin{proof}
We make a convenient assumption here 
\begin{assumption}
    Let $\sigma_g$, $\sigma_{\nabla g}$ be positive constants and $\sigma^2 = \max\{\sigma_g, \sigma_{\nabla g}\}$. For $i \in \mathcal{D}$, assume that
$$\mathbb{E}[||g(\mathbf{w};\xi _i) - g(\mathbf{w})||^2] \leq \sigma_g^2, \quad \mathbb{E}[||\nabla g(\mathbf{w};\xi _i) - \nabla g(\mathbf{w})||^2] \leq \sigma_{\nabla g}^2.$$
\end{assumption}

Since $s_{t+1} = g(\mathbf{w}_{t+1};\xi_i) + (1 - \beta_t)(s_t - g(\mathbf{w}_{t};\xi_i)))+ \zeta _t$, it holds that
\begin{equation}\nonumber
    \begin{aligned}
    \mathbb{E}[\|s_{t+1} - g(\mathbf{w}_{t+1})\|^2] &= \mathbb{E}[\|g(\mathbf{w}_{t+1};\xi_i)) + (1 - \beta_t)(s_t - g(\mathbf{w}_{t};\xi_i))) - g(\mathbf{w}_{t+1})+ \zeta _t\|^2] \\
&\leq \mathbb{E}[\|(1 - \beta_t)(s_t - g(\mathbf{w}_t)) + \beta_t(g(\mathbf{w}_{t+1};\xi_i)) - g(\mathbf{w}_{t+1})) \\
&\quad + (1 - \beta_t)(g(\mathbf{w}_{t+1};\xi_i)) - g(\mathbf{w}_{t};\xi_i)) - (g(\mathbf{w}_{t+1}) - g(\mathbf{w}_t)))\|^2] + \mathbb{E} \zeta _t^2 \\
&= \mathbb{E}[(1 - \beta_t)^2\|s_t - g(\mathbf{w}_t)\|^2] + \mathbb{E}[\|\beta_t(g(\mathbf{w}_{t+1};\xi_i)) - g(\mathbf{w}_{t+1})) \\
&\quad + (1 - \beta_t)(g(\mathbf{w}_{t+1};\xi_i)) - g(\mathbf{w}_{t};\xi_i)) - (g(\mathbf{w}_{t+1}) - g(\mathbf{w}_t)))\|^2]+ \mathbb{E} \zeta _t^2 ,
    \end{aligned}
\end{equation}
where the last inequality is due to $\mathbb{E}[g(\mathbf{w}_{t+1};\xi_i)) - g(\mathbf{w}_{t+1})] = 0$.

Noting $\mathbb{E}[\langle g(\mathbf{w}_{t+1};\xi_i) - g(\mathbf{w}_{t};\xi_i)), g(\mathbf{w}_{t+1}) - g(\mathbf{w}_t) \rangle] = \mathbb{E}[\|(g(\mathbf{w}_{t+1}) - g(\mathbf{w}_t))\|^2]$ and applying the Lipschitz continuity of $g(\mathbf{w};\xi_i)$, we have
\begin{equation}\nonumber
    \begin{aligned}
        &\mathbb{E}[\|g(\mathbf{w}_{t+1};\xi_i)) - g(\mathbf{w}_{t};\xi_i)) - (g(\mathbf{w}_{t+1}) - g(\mathbf{w}_t))\|^2] \\
&= \mathbb{E}[\|g(\mathbf{w}_{t+1};\xi_i)) - g(\mathbf{w}_{t};\xi_i))\|^2 + \|(g(\mathbf{w}_{t+1}) - g(\mathbf{w}_t))\|^2 \\
&\quad - 2 \langle g(\mathbf{w}_{t+1};\xi_i)) - g(\mathbf{w}_{t};\xi_i)), g(\mathbf{w}_{t+1}) - g(\mathbf{w}_t) \rangle] \\
&= \mathbb{E}[\|g(\mathbf{w}_{t+1};\xi_i)) - g(\mathbf{w}_{t};\xi_i))\|^2 - \|(g(\mathbf{w}_{t+1}) - g(\mathbf{w}_t))\|^2] \\
&\leq \mathbb{E}[\|g(\mathbf{w}_{t+1};\xi_i)) - g(\mathbf{w}_{t};\xi_i))\|^2] \\
&\leq L_g^2\mathbb{E}[\|\mathbf{w}_{t+1} - \mathbf{w}_t\|^2]. 
    \end{aligned}
\end{equation}
Combining the above inequality and invoking the Lipschitz continuity of $g(\mathbf{w};\xi_i)$,  we have
\begin{equation}\nonumber
    \begin{aligned}
        \mathbb{E}[\|s_{t+1} - g(\mathbf{w}_{t+1})\|^2] &\leq (1 - \beta_t)^2\mathbb{E}[\|s_t - g(\mathbf{w}_t)\|^2] \\
&\quad + 2\beta_t^2\mathbb{E}[\|g(\mathbf{w}_{t+1};\xi_i)) - g(\mathbf{w}_t)\|^2] \\
&\quad + 2(1 - \beta_t)^2\mathbb{E}[\|g(\mathbf{w}_{t+1};\xi_i)) - g(\mathbf{w}_{t+1};\xi_i)) - (g(\mathbf{w}_{t+1}) - g(\mathbf{w}_t))\|^2] \\
&\leq (1 - \beta_t)^2\mathbb{E}[\|s_t - g(\mathbf{w}_t)\|^2] + 2\beta_t^2\sigma^2 \\
&\quad + 2(1 - \beta_t)^2 L_g^2\mathbb{E}[\|\mathbf{w}_{t+1} - \mathbf{w}_t\|^2]+ \sigma _{\zeta _t}^2. 
    \end{aligned}
\end{equation}

\end{proof}

\begin{lemma}
    By setting $\beta _t = c \eta _t^2$ and $\eta _t = \frac{k}{(m + t \sigma ^2)^{1/3}}$, $m = \max \{2 \sigma ^2,(16L_\Psi ^2k)^2\}$, $k = \frac{\sigma ^{2/3}}{L_\Psi }$ and $c = \frac{\sigma ^2}{14L_\psi k^3}+130L_\Psi ^2$. Then running algorithm \ref{RSDRO} satisfies 
    \begin{equation}\nonumber
        4L_\Psi ^4 \sum_{t=1}^{T} \eta_t \mathbb{E}[||\mathscr{X}_t||^2] \leq \frac{\mathbb{E}[||\mathscr{X}_1||^2]}{\eta_0} - \frac{\mathbb{E}[||\mathscr{X}_{T+1}||^2]}{\eta_T} + \sum_{t=1}^{T} 6c^2 \eta_t^3 \sigma^{-2} + 64L_F^2 \Psi _0+c_{\sigma _{s_t}}\sigma_{s_t} ^2.
    \end{equation}
\end{lemma}

\begin{proof}
    
    With $\eta_t = \frac{k}{(w + t\sigma^2)^{1/3}}$, we obtain
\begin{align}
\frac{1}{\eta_t} - \frac{1}{\eta_{t-1}} &= \frac{(m + t\sigma^2)^{1/3} - (m + (t-1)\sigma^2)^{1/3}}{k} \stackrel{(a)}{\leq} \frac{\sigma^2}{3k(m + (t-1)\sigma^2)^{2/3}} \\
&\stackrel{(b)}{\leq} \frac{\sigma^2}{3k(m/2 + t\sigma^2)^{2/3}} \leq \frac{\sigma^2}{3k(m/2 + t\sigma^2/2)^{2/3}} = \frac{2^{2/3}\sigma^2}{3k(m + t\sigma^2)^{2/3}} \\
&= \frac{2^{2/3}\sigma^2}{3k^3} \eta_t^2 \stackrel{(c)}{\leq} \frac{2^{2/3}}{12L_\Psi k^3} \eta_t \leq \frac{\sigma^2}{7L_\Psi k^3} \eta_t,
\end{align}
where the inequality (a) uses the inequality $(x + y)^{1/3} - x^{1/3} \leq \frac{yx^{-2/3}}{3}$, the inequality (b) is due to $m \geq 2\sigma^2$, and the inequality (c) is due to $\eta_t \leq \frac{1}{4L_\Psi}$.

Noting $\beta_t = c\eta_t^2$ and $0 \leq (1 - \beta_t) \leq 1$, by Lemma \ref{13} we have
\begin{align}
\frac{\mathbb{E}[\|\mathscr{X}_{t+1}\|^2]}{\eta_t} - \frac{\mathbb{E}[\|\mathscr{X}_t\|^2]}{\eta_{t-1}} &\leq \left(\frac{(1-\beta_t)^2}{\eta_t} - \frac{1}{\eta_{t-1}}\right)\mathbb{E}[\|\mathbf{z}_t\|^2] + 6c^2\eta_t^3\sigma^2 + \frac{8(1-\beta_t)^2L_\Psi^2}{\eta_t}\mathbb{E}[\|\mathbf{w}_{t+1} - \mathbf{w}_t\|^2] \\
&\leq (\eta_t^{-1} - \eta_{t-1}^{-1} - 2c\eta_t)\mathbb{E}[\|\mathbf{z}_t\|^2] + 6c^2\eta_t^3\sigma^2 + \frac{8(1-\beta_t)^2L_\Psi^2}{\eta_t}\mathbb{E}[\|\mathbf{w}_{t+1} - \mathbf{w}_t\|^2] \\
&\leq -260L_\Psi^4\eta_t\mathbb{E}[\|\mathbf{z}_t\|^2] + 6c^2\eta_t^3\sigma^2 + \frac{8(1-\beta_t)^2L_\Psi^2}{\eta_t}\mathbb{E}[\|\mathbf{w}_{t+1} - \mathbf{w}_t\|^2], 
\end{align}
where the last inequality is due to $\eta_t^{-1} - \eta_{t-1}^{-1} - 2c\eta_t \leq \frac{\sigma^2}{7L_\Psi k^3}\eta_t - 2\left(\frac{1}{14L_\Psi k^3} + 130L_\Psi\right)\eta_t \leq -260L_\Psi^4\eta_t$.

Taking summation of the above equation from 1 to $T$, we have
\begin{align}
260L_\Psi^4 \sum_{t=1}^T \eta_t\mathbb{E}[\|\mathscr{X}_t\|^2] &\leq \frac{\mathbb{E}[\|\mathscr{X}_1\|^2]}{\eta_0} - \frac{\mathbb{E}[\|\mathscr{X}_{T+1}\|^2]}{\eta_T} + \sum_{t=1}^T 6c^2\eta_t^3\sigma^2 + 8L_\Psi^2 \sum_{t=1}^T \frac{1}{\eta_t}\mathbb{E}[\|\mathbf{w}_{t+1} - \mathbf{w}_t\|^2]. 
\end{align}

In the same way with Eq. (15) and $\eta_t \leq \eta_1, \forall t \geq 1$, we could also have
\begin{align}
\frac{1 - 2\eta_1 L_\Psi}{4} \sum_{t=1}^T \frac{1}{\eta_t}\|\mathbf{w}_{t+1} - \mathbf{w}_t\|^2 &\leq \sum_{t=1}^T \frac{1 - 2\eta_t L_\Psi}{4\eta_t}\|\mathbf{w}_{t+1} - \mathbf{w}_t\|^2 \leq \Psi_0 + \sum_{t=1}^T \eta_t\|\mathbf{z}_t - \nabla\Psi(\mathbf{w}_t)\|^2. 
\end{align}

Noting $\eta_1 L_\Psi \leq \frac{1}{4}$ and invoking Lemma 12, we obtain
\begin{align}
\sum_{t=1}^T \frac{1}{\eta_t}\mathbb{E}[\|\mathbf{w}_{t+1} - \mathbf{w}_t\|^2] &\leq \frac{4}{1 - 2\eta_1 L_\Psi}\left(\Psi_0 + \sum_{t=1}^T \eta_t\mathbb{E}[\|\mathbf{z}_t - \nabla\Psi(\mathbf{w}_t)\|^2]\right) \\
&\leq 8\Psi_0 + 8\sum_{t=1}^T \eta_t\mathbb{E}[\|\mathbf{z}_t - \nabla\Psi(\mathbf{w}_t)\|^2] \\
&\leq 8\Psi_0 + 32L_\Psi^2 \sum_{t=1}^T \eta_t\mathbb{E}[\|\mathscr{X}_t\|^2].
\end{align}

Combining the above two inequality, we have
\begin{align}
4L_\Psi^4 \sum_{t=1}^T \eta_t\mathbb{E}[\|\mathscr{X}_t\|^2] &\leq \frac{\mathbb{E}[\|\mathscr{X}_1\|^2]}{\eta_0} - \frac{\mathbb{E}[\|\mathscr{X}_{T+1}\|^2]}{\eta_T} + \sum_{t=1}^T 6c^2\eta_t^3\sigma^2 + 64L_\Psi^2\Psi_0+c_{\sigma _{s_t}}\sigma_{s_t} ^2. 
\end{align}

This complete the proof. 

\end{proof}

By Theorem 1 in \cite{arora2023faster}, we have the following guarantee:
\begin{equation}\nonumber
    \mathbb{E}[\|\mathbf{v}_{t+1} - \nabla_w g(\mathbf{w}_{t+1})\|^2] \leqslant \frac{\tau _2 \eta ^2 q \Psi _0}{T}+ \frac{\eta ^3 q \tau _2^2 \tau _1^2}{2A} + \tau _1^2
\end{equation}
where we define $A = \frac{\eta }{2}- \frac{L_1 \eta ^2}{2}-\frac{\eta ^3 \tau _2^2q }{2}$, $\tau _1^2 = O(\frac{G^2T A_d^2}{q})$ and $\tau _2^2 = O(\frac{L^2}{b_2}+L^2T A_d^2)$, $A_d = \frac{\sqrt[]{d \log_{} (1/\delta )}}{n \varepsilon }$ 

By the same method as Theorem 2 in \cite{xu2019non}, we have the following inequality,
\begin{equation}\nonumber
    \left\| \mathbf{z}_t- \nabla \Psi (\mathbf{w}_{t+1}) + \frac{1}{\eta _t}(\mathbf{w}_t-\mathbf{w}_{t+1})\right\|^2 \leqslant 2 \left\| \mathbf{z}_t- \nabla \Psi (\mathbf{w}_t) \right\| ^2 + \frac{2(\Psi (\mathbf{w}_{t+1})-\Psi (\mathbf{w}_t))}{\eta _t} + (2L_\Psi ^2  + \frac{3L_\Psi }{\eta _t}) \left\| \mathbf{w}_{t+1} -\mathbf{w}_t\right\| ^2.
\end{equation}

Therefore, all we need to do is to bound the three terms. Actually, the first term can be bounded by Lemma \ref{lem:z-Psi} and then, then by the Lipschitz property of function $\Psi (\mathbf{w})$, then it can be reduced to bound the term with $\left\| \mathbf{w}_{t+1} -\mathbf{w}_t\right\| ^2$ 

Since $F(\mathbf{w})$ is smooth with parameter $L_F$, then
\begin{equation}
\Psi (\mathbf{w}_{t+1}) \leq \Psi (\mathbf{w}_t) + \langle \nabla \Psi (\mathbf{w}_t), \mathbf{w}_{t+1} - \mathbf{w}_t \rangle + \frac{L_\Psi }{2} \|\mathbf{w}_{t+1} - \mathbf{w}_t\|^2.
\end{equation}

Combining the above two inequalities, we get
\begin{equation}
\langle \mathbf{z}_t - \nabla F(\mathbf{w}_t), \mathbf{w}_{t+1} - \mathbf{w}_t \rangle + \frac{1}{2}(1/\eta - L_\Psi )\|\mathbf{w}_{t+1} - \mathbf{w}_t\|^2 \leq F(\mathbf{w}_t) - F(\mathbf{w}_{t+1}).
\end{equation}

That is
\begin{align}
\frac{1}{2}(1/\eta - L_{\Psi })\|\mathbf{w}_{t+1} - \mathbf{w}_t\|^2 &\leq \Psi (\mathbf{w}_t) - \Psi (\mathbf{w}_{t+1}) - \langle \mathbf{z}_t - \nabla \psi (\mathbf{w}_t), \mathbf{w}_{t+1} - \mathbf{w}_t \rangle \\
&\leq \Psi (\mathbf{w}_t) - \Psi (\mathbf{w}_{t+1}) + \eta\|\mathbf{z}_t - \nabla \Psi (\mathbf{w}_t)\|^2 + \frac{1}{4\eta}\|\mathbf{w}_t - \mathbf{w}_{t+1}\|^2,
\end{align}
where the last inequality uses Young's inequality $\langle \mathbf{a}, \mathbf{b} \rangle \leq \|\mathbf{a}\|^2 + \frac{\|\mathbf{b}\|^2}{4}$. Then by rearranging the above inequality and summing it across $t = 1, \cdots, T$, we have
\begin{align}
\sum_{t=1}^T \frac{1 - 2\eta L_\Psi }{4\eta} \|\mathbf{w}_{t+1} - \mathbf{w}_t\|^2 &\leq \Psi(\mathbf{w}_1) - \Psi(\mathbf{w}_{T+1}) + \sum_{t=1}^T \eta\|\mathbf{z}_t - \nabla \Psi(\mathbf{w}_t)\|^2 \nonumber \\
&\leq \Psi(\mathbf{w}_1) - \inf_{\mathbf{w} \in \mathcal{W}} \Psi(\mathbf{w}) + \sum_{t=1}^T \eta\|\mathbf{z}_t - \nabla \Psi(\mathbf{w}_t)\|^2 \nonumber \\
&\leq \Psi _0 + \sum_{t=1}^T \eta\|\mathbf{z}_t - \nabla \Psi(\mathbf{w}_t)\|^2. 
\end{align}
 
Thus, 
\begin{equation}\nonumber
    \frac{1 - 2\eta_1 L_\Psi }{4} \sum_{t=1}^T \frac{1}{\eta_t} \|\mathbf{w}_{t+1} - \mathbf{w}_t\|^2 \leq \sum_{t=1}^T \frac{1 - 2\eta_t L_\Psi }{4\eta_t} \|\mathbf{w}_{t+1} - \mathbf{w}_t\|^2 \leq \Psi_0 + \sum_{t=1}^T \eta_t \|\mathbf{z}_t - \nabla \Psi(\mathbf{w}_t)\|^2.
\end{equation}
Dividing $\frac{\eta _1}{1/4-\eta _1 L_\Psi /2}$ on both sides for the above inequality, we have:
\begin{equation}\nonumber
    \sum_{t=1}^T \|\mathbf{w}_{t+1} - \mathbf{w}_t\|^2 \leq \frac{1}{1/4 - \eta_1 L_\Psi/2} \left( \eta_1 \Psi_0 + \eta_1 \sum_{t=1}^T \eta_t \|\mathbf{z}_t - \nabla \Psi(\mathbf{w}_t)\|^2 \right).
\end{equation}

Therefore, combining the above inequality and choosing the parameter we set before, we can derive that
\begin{equation}\nonumber
    \mathbb{E}\left\| \nabla \Psi (\mathbf{w}_{\tau }) \right\| ^2 \leqslant  \frac{1}{T} \sum_{t=1}^T \mathbb{E}\left[\|\mathbf{z}_t - \nabla \Psi(\mathbf{w}_{t+1}) + \frac{1}{\eta_t}(\mathbf{w}_{t+1} - \mathbf{w}_t)\|^2\right]
 \leqslant  O ((\frac{\sqrt[]{d \log_{} (1/\delta )}}{n \varepsilon })^{2/3})
\end{equation}
\end{proof}

\section{Experimental Results}\label{exp_detail}

\subsection{Test AUC Analysis}
\begin{figure*}[htbp]
    \centering
    \includegraphics[width=0.45\linewidth]{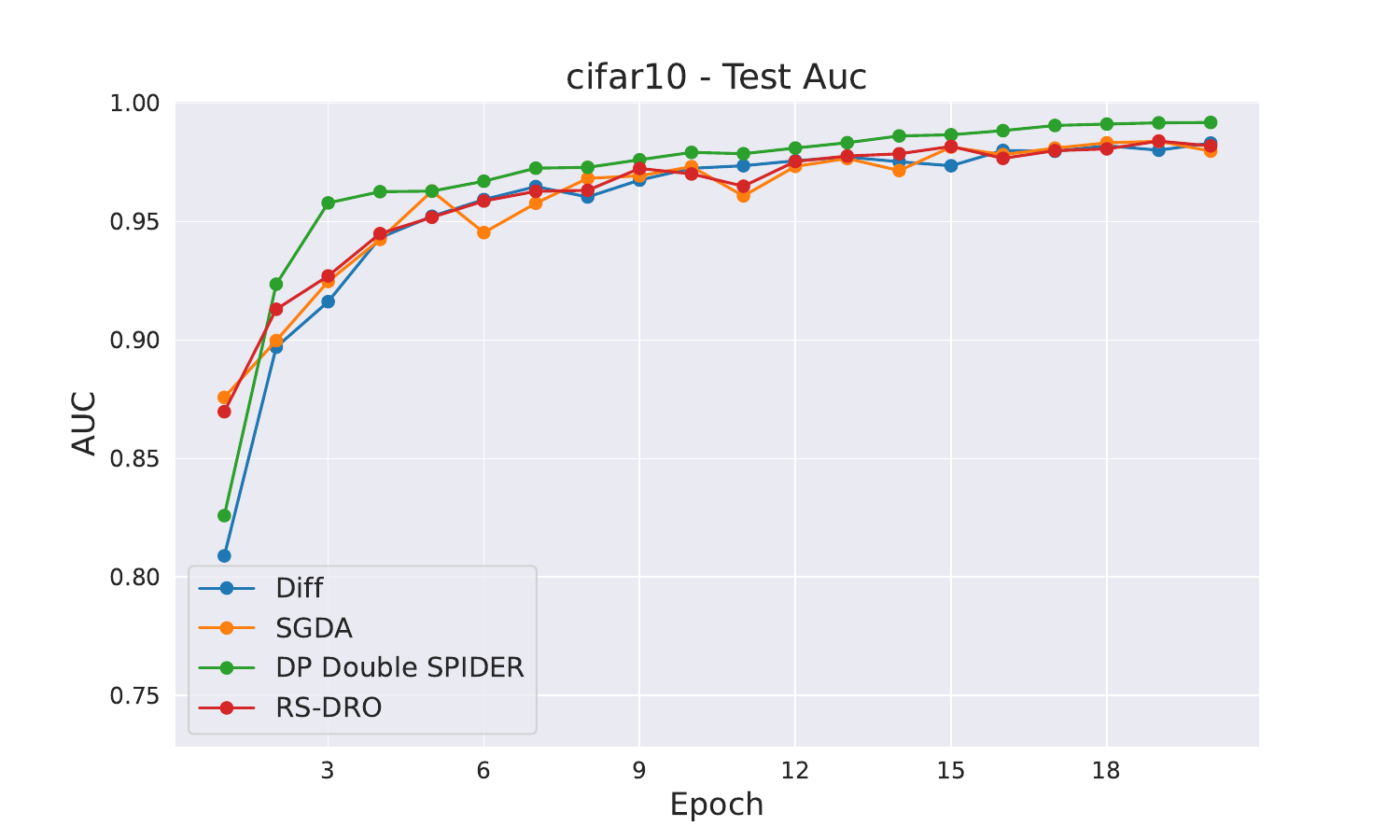}
    \hfill
    \includegraphics[width=0.45\linewidth]{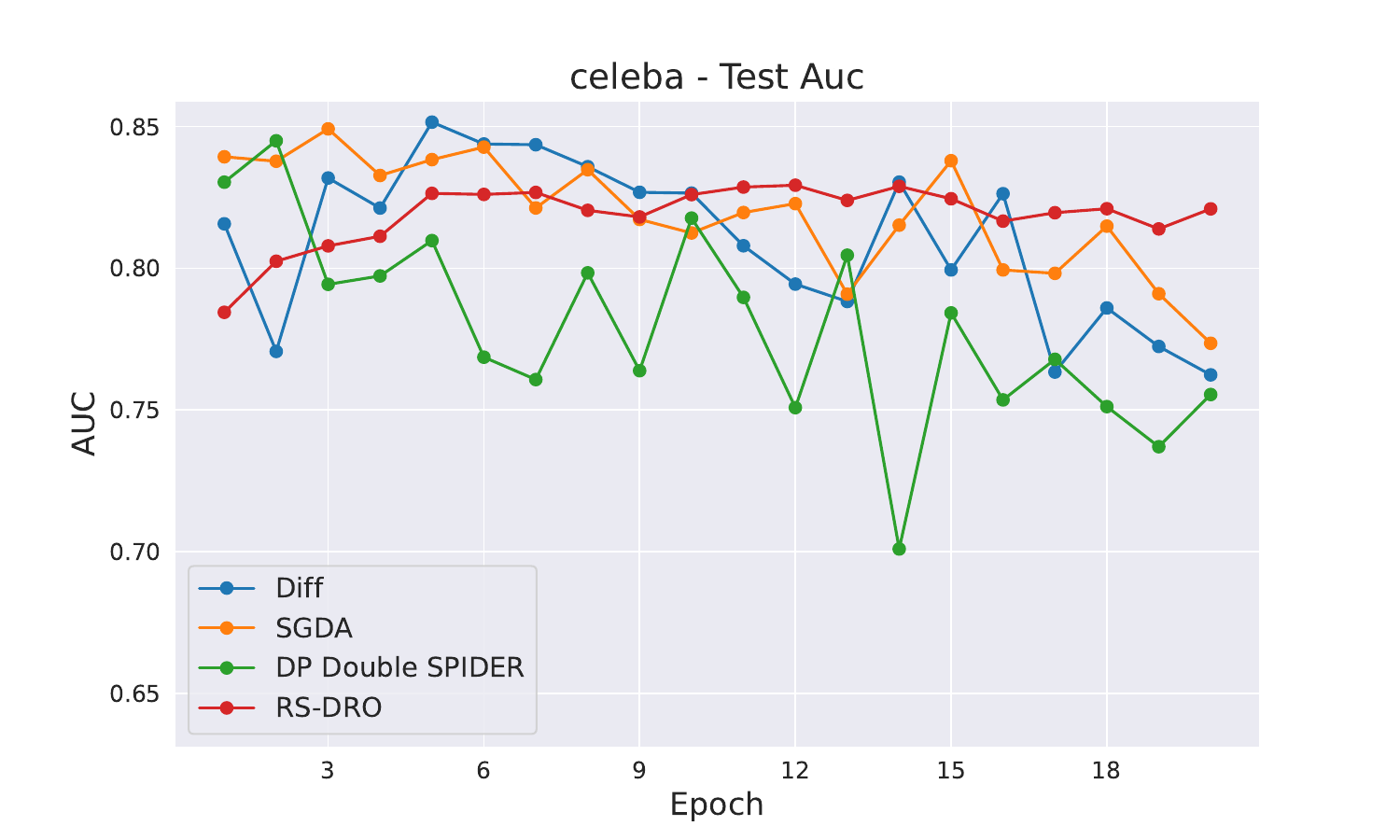}
    
    \vspace{0.5cm}
    
    \includegraphics[width=0.45\linewidth]{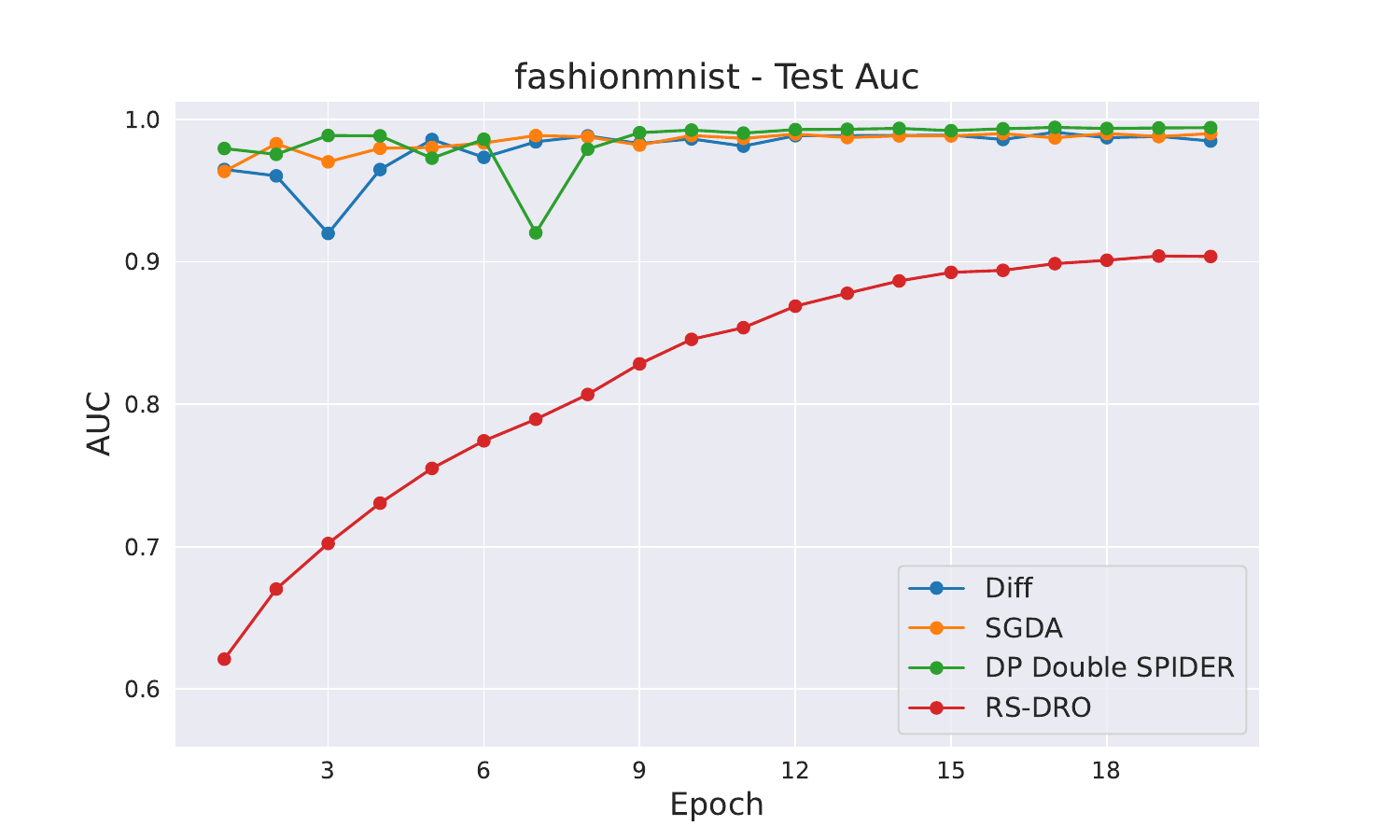}
    \hfill
    \includegraphics[width=0.45\linewidth]{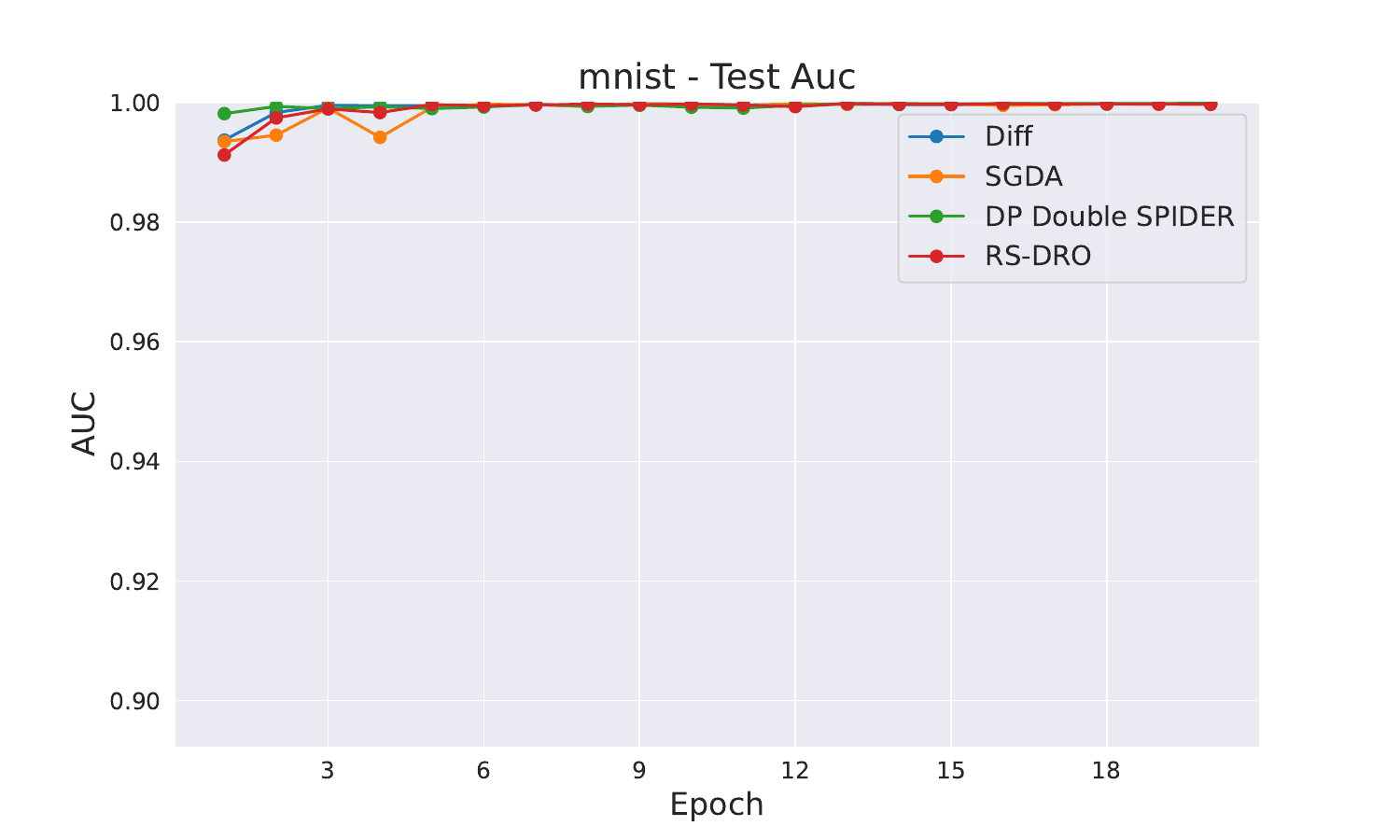}
    
    \caption{Test AUC Results: The performances of four algorithms on CIFAR10-ST, CelebA, Fashion-MNIST, MNIST-ST respectively}
    \label{fig:test_auc}
\end{figure*}

Figure~\ref{fig:test_auc} presents the Area Under the Curve (AUC) performance of four algorithms across different datasets. The AUC metric is particularly valuable as it measures the model's ability to distinguish between classes regardless of the classification threshold, providing a comprehensive view of classifier performance.

Across the four datasets (CIFAR10-ST, CelebA, Fashion-MNIST, and MNIST-ST), we observe distinct performance patterns. The MNIST-ST dataset generally exhibits the highest AUC scores, which is expected given its relatively simple binary classification task with well-separated digit classes. Fashion-MNIST presents a more challenging scenario due to the visual similarity between certain clothing categories, resulting in comparatively lower AUC values. The CelebA dataset, dealing with facial attribute classification, demonstrates intermediate performance, while CIFAR10-ST shows varying convergence patterns depending on the algorithm employed.

The comparative analysis reveals that different algorithms exhibit varying degrees of robustness across datasets. Some algorithms demonstrate consistent performance across all datasets, suggesting good generalization capabilities, while others show dataset-specific strengths. The convergence behavior also varies, with some algorithms achieving stable AUC values earlier in training, while others require more iterations to reach optimal performance.

\newpage
\subsection{Test F1 Score Analysis}

\begin{figure*}[htbp]
    \centering
    \includegraphics[width=0.45\linewidth]{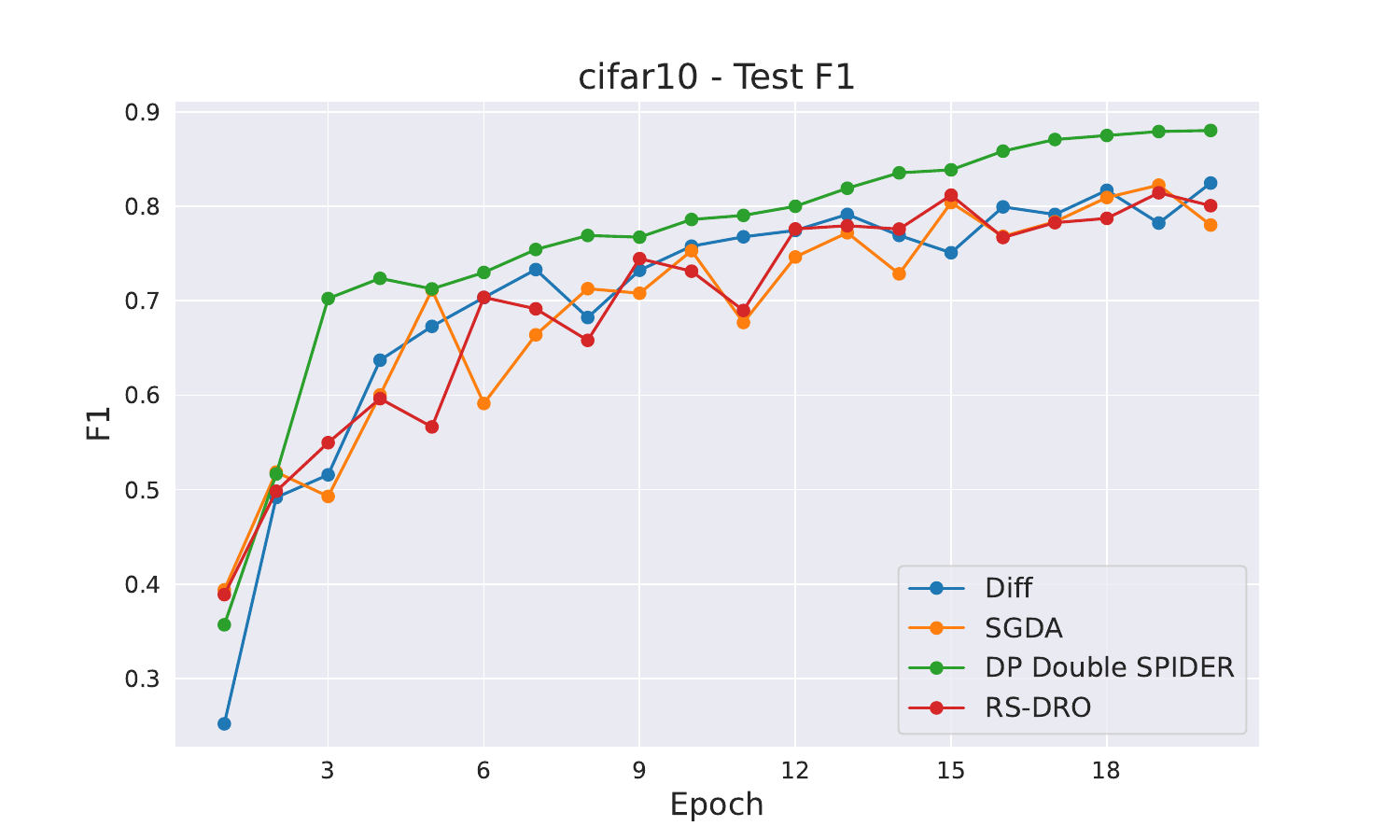}
    \hfill
    \includegraphics[width=0.45\linewidth]{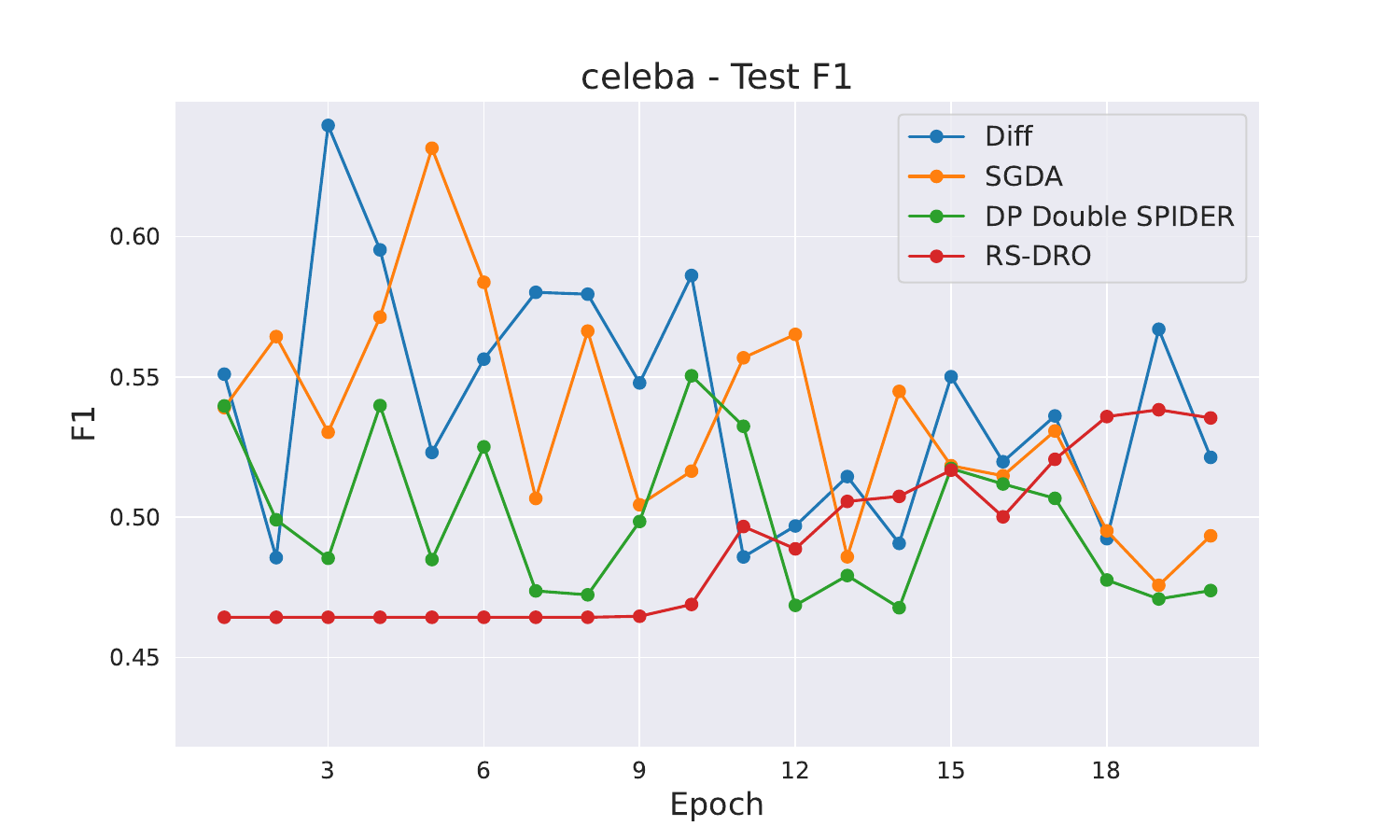}
    
    \vspace{0.5cm}
    
    \includegraphics[width=0.45\linewidth]{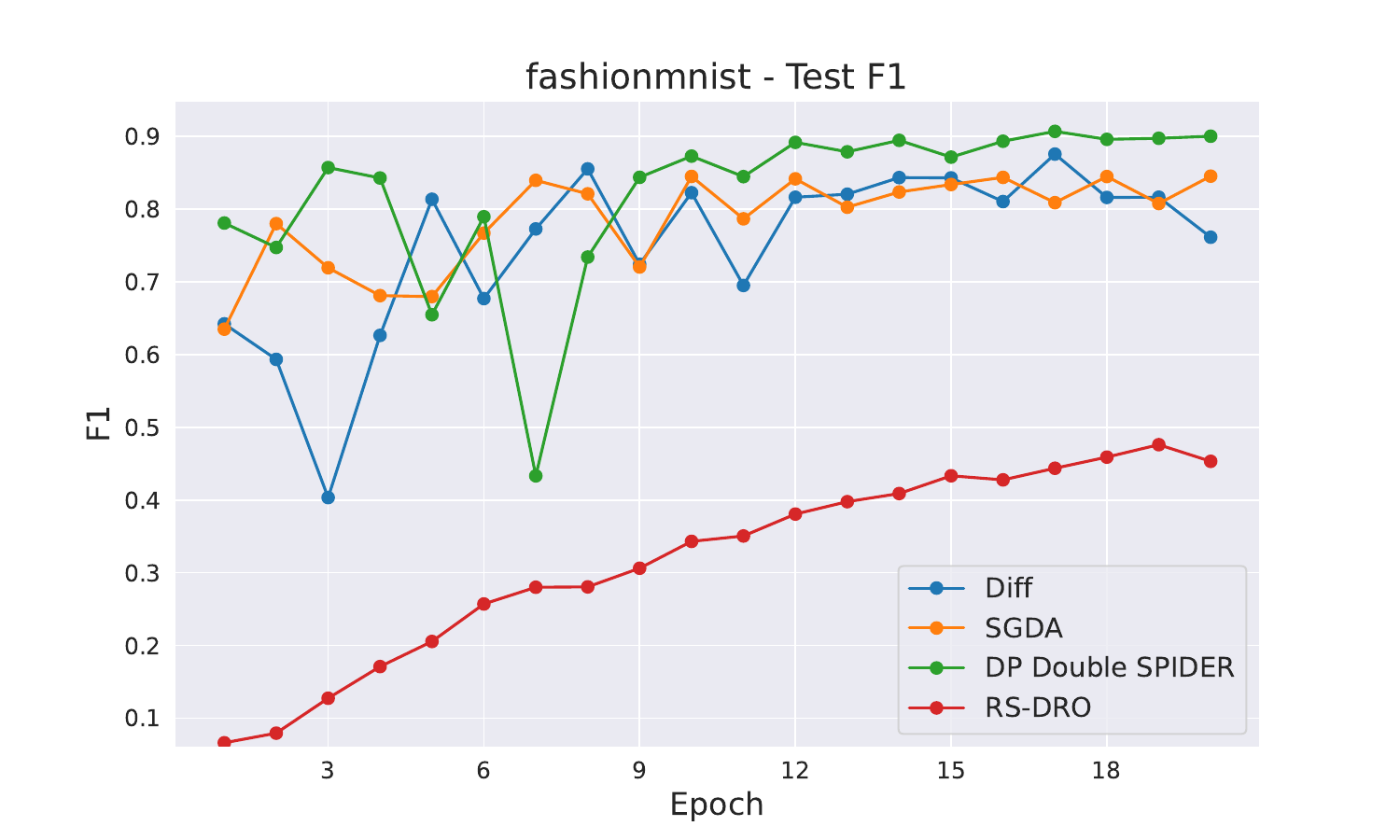}
    \hfill
    \includegraphics[width=0.45\linewidth]{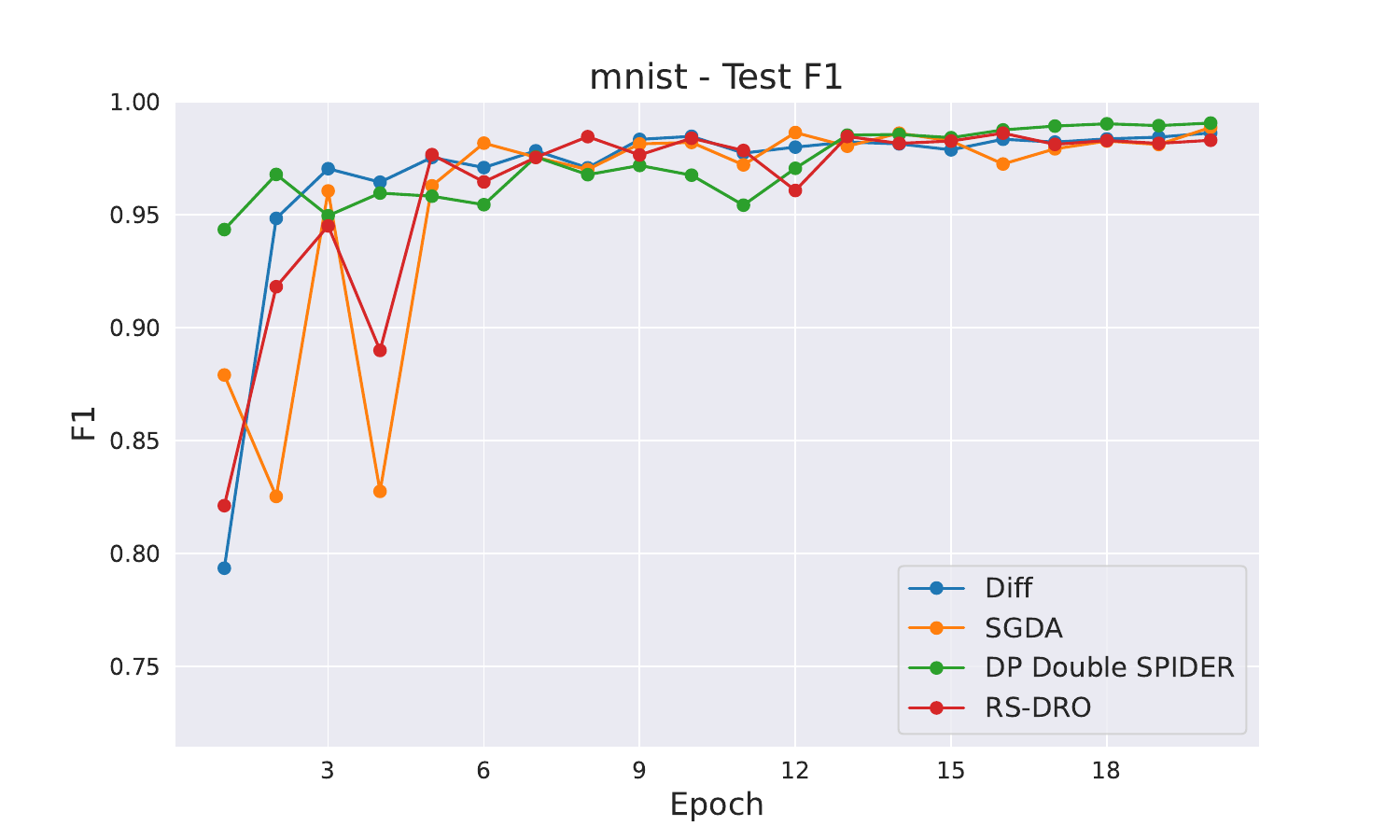}
    
    \caption{Test F1 Score Results: The performances of four algorithms on CIFAR10-ST, CelebA, Fashion-MNIST, MNIST-ST respectively}
    \label{fig:test_f1}
\end{figure*}


Figure~\ref{fig:test_f1} presents the F1 score results, which provide a harmonic mean of precision and recall. This metric is particularly important when dealing with imbalanced datasets or when both false positives and false negatives carry significant costs.

The F1 scores reveal nuanced performance differences that may not be immediately apparent from accuracy alone. For  MNIST-ST, the F1 scores closely track accuracy values, confirming robust performance across both metrics. However, for potentially imbalanced scenarios in CelebA (where certain facial attributes may be rare), the F1 score provides critical insights into the model's ability to correctly identify minority class instances without sacrificing precision.

On Fashion-MNIST F1 scores indicate how well algorithms balance precision and recall when distinguishing between similar clothing categories. Lower F1 scores compared to accuracy might suggest that while overall accuracy is reasonable, the model struggles with specific class pairs, leading to either high false positive or false negative rates for certain categories.

CIFAR10-ST F1 scores demonstrate the algorithms' effectiveness in handling the complexity of natural images. The relationship between F1 scores and accuracy across different algorithms reveals which approaches better balance the trade-off between precision and recall. Algorithms showing F1 scores close to their accuracy values indicate balanced performance, while larger gaps suggest potential issues with either over-prediction or under-prediction of certain classes.

Overall, the F1 score analysis complements the AUC and accuracy metrics by highlighting the algorithms' ability to maintain balanced performance across different evaluation criteria, which is essential for practical deployment scenarios.

\end{document}